\documentclass[lettersize,journal]{IEEEtran}
\usepackage{amsmath,amsfonts,amssymb,amsthm}
\usepackage{algorithmic}
\usepackage{accents}  
\usepackage{array}
\usepackage[caption=false,font=normalsize,labelfont=sf,textfont=sf]{subfig}
\usepackage{textcomp}
\usepackage{stfloats}
\usepackage{url}
\usepackage{verbatim}
\usepackage{graphicx}
\hyphenation{op-tical net-works semi-conduc-tor IEEE-Xplore}
\def\BibTeX{{\rm B\kern-.05em{\sc i\kern-.025em b}\kern-.08em
    T\kern-.1667em\lower.7ex\hbox{E}\kern-.125emX}}
\usepackage{balance}

\usepackage{xcolor}
\usepackage{hyperref}
\usepackage{dsfont}  
\usepackage{multirow}
\usepackage{lipsum}
\newtheorem{definition}{Definition}
\newtheorem{theorem}{Theorem}
\newtheorem{corollary}{Corollary}[theorem]
\newtheorem{lemma}{Lemma}[section]

\newtheorem{remark}{Remark}
\newlength{\dhatheight}
\newcommand{\doublehat}[1]{%
    \settoheight{\dhatheight}{\ensuremath{\hat{#1}}}%
    \addtolength{\dhatheight}{-0.35ex}%
    \hat{\vphantom{\rule{1pt}{\dhatheight}}%
    \smash{\hat{#1}}}}

\DeclareMathOperator*{\argmin}{arg\,min}

\makeatletter
\def\blfootnote{\xdef\@thefnmark{}\@footnotetext}
\makeatother

\newif\iffirstrevproofread
\newcommand{\firstrev}[1]{%
\iffirstrevproofread
\textcolor{red}{#1}
\else
#1%
\fi
}
\firstrevproofreadfalse

\newif\ifsecondrevproofread
\newcommand{\secondrev}[1]{%
\ifsecondrevproofread
\textcolor{red}{#1}
\else
#1%
\fi
}
\secondrevproofreadfalse

\newif\ifthirdrevproofread
\newcommand{\thirdrev}[1]{%
\ifthirdrevproofread
\textcolor{red}{#1}
\else
#1%
\fi
}
\thirdrevproofreadfalse

\begin{document}
\title{Parameterized Convex Universal Approximators \firstrev{for Decision-Making Problems}}
\author{
    Jinrae Kim
    and
    Youdan Kim, \firstrev{\textit{Senior Member, IEEE}}
    \thanks{Jinrae Kim is with the Department of Aerospace Engineering,
    Seoul National University, Seoul 08826, Republic of Korea \thirdrev{(e-mail: kjl950403@snu.ac.kr)}}
    \thanks{Youdan Kim is with the Department of Aerospace Engineering,
        Institute of Advanced Aerospace Technology,
    Seoul National University, Seoul 08826, Republic of Korea \thirdrev{(e-mail: ydkim@snu.ac.kr)}}
}

\markboth{Journal of \LaTeX\ Class Files,~Vol.~18, No.~9, September~2020}%
{How to Use the IEEEtran \LaTeX \ Templates}

\maketitle

\begin{abstract}
    Parameterized max-affine (PMA) and parameterized log-sum-exp (PLSE)
    networks are proposed for general decision-making problems.
    The proposed approximators generalize existing convex approximators, namely, max-affine (MA) and log-sum-exp (LSE) networks,
    by considering function arguments of condition and decision variables
    and replacing the network parameters of MA and LSE networks with continuous functions with respect to the condition variable.
    The universal approximation theorem of PMA and PLSE is proven,
    which implies that PMA and PLSE are \textit{shape-preserving universal} approximators for parameterized convex continuous functions.
    Practical guidelines for incorporating deep neural networks within PMA and PLSE networks are provided.
    A numerical simulation is performed to demonstrate the performance of the proposed approximators.
    The simulation results support that PLSE outperforms other existing approximators in terms of minimizer and optimal value errors with scalable \firstrev{and efficient computation} for high-dimensional cases.
\end{abstract}

\begin{IEEEkeywords}
    Function approximation, Universal approximation theorem, Parameterized convexity, Convex optimization
\end{IEEEkeywords}

\section{Introduction}
\label{sec:introduction}
\blfootnote{This paper has been accepted for publication by IEEE.
Copyright may be transferred without notice, after which this version may no longer be accessible.}
\blfootnote{© 2022 IEEE. Personal use of this material is permitted. Permission
from IEEE must be obtained for all other uses, in any current or future
media, including reprinting/republishing this material for advertising or
promotional purposes, creating new collective works, for resale or
redistribution to servers or lists, or reuse of any copyrighted
component of this work in other works.}
\blfootnote{Digital Object Identifier 10.1109/TNNLS.2022.3190198}
\IEEEPARstart{C}{}onditional decision-making problems find a suitable decision for a given condition; examples include
optimal control and reinforcement learning \cite{liberzonCalculusVariationsOptimal2012,suttonReinforcementLearningIntroduction2018}
and inference via optimization in energy-based learning
\cite{lecunTutorialEnergyBasedLearning2006}.
To make a decision in an optimal sense,
the decision-making problem can be solved by minimizing
a bivariate function for a given condition,
whose arguments consist of the condition and decision variables.
This formulation carries an implication: the bivariate function is regarded as a cost function,
which evaluates the \textit{cost} for a given condition and decision variables.
\thirdrev{
    Since most convex optimization problems are known to be relatively reliable and tractable for high dimensions,
    \firstrev{i.e., polynomial-time complexity} \cite{boydConvexOptimization2004,liuSurveyConvexOptimization2017,chunhuashenDualFormulationBoosting2010},
    incorporating convex optimization in decision-making
    is beneficial.
}
A natural approach is to implement a parameterized convex surrogate model for the cost function approximation,
where a bivariate function is said to be \textit{parameterized convex}
if the function is convex when the condition variables are fixed.
In this regard,
data-driven surrogate model approaches using (parameterized) convex approximators have drawn much attention
and have been implemented in many applications, including structured prediction and continuous-action Q-learning \cite{calafioreLogSumExpNeuralNetworks2020,calafioreEfficientModelFreeQFactor2020,amosInputConvexNeural2017a}.

Owing to the advances in machine learning,
performing data-driven function approximation has become essential and promising in many areas, including differential equations, generative model learning, and natural language processing with transformers
\cite{chenNeuralOrdinaryDifferential2018,goodfellowGenerativeAdversarialNetworks2014,vaswaniAttentionAllYou2017}.
One of the main theoretical results of function approximation is
referred to as the \textit{universal approximation theorem}.
In general,
if a function approximator is a universal approximator,
it is capable of approximating \textit{any function with arbitrary precision} (usually on a compact set).
For example,
it was shown that a shallow feedforward neural network (FNN) is a universal approximator for continuous functions \cite{cybenkoApproximationSuperpositionsSigmoidal1989,pinkusApproximationTheoryMLP1999a,hornikMultilayerFeedforwardNetworks1989,kolmogorovRepresentationContinuousFunctions1957},
and studies on the universal approximation theorem for deep FNN
have been conducted recently \cite{kidgerUniversalApproximationDeep2020}.
On the other hand,
one may want to approximate a class of functions specified by a certain \textit{shape}, for example, monotonicity and convexity, while the approximator preserves the same shape.
This is called the \textit{shape-preserving approximation} (SPA) \cite{devoreConstructiveApproximation1993}.
\thirdrev{
    For example,
    max-affine (MA) \cite{ghoshMaxAffineRegressionParameter2022} and log-sum-exp (LSE) networks
    were proposed as shape-preserving universal approximators for convex continuous functions \cite{calafioreLogSumExpNeuralNetworks2020}.
}
\thirdrev{
    Also, difference of LSE (DLSE) network was proposed as an ordinary universal approximator
    for continuous functions.
    Note that the DLSE network can approximately infer the optimizer
    fast by utilizing difference of convex optimization
    \cite{calafioreUniversalApproximationResult2020}.
}
For decision-making problems, however,
one may need a shape-preserving approximator for
parameterized convex functions, not just convex functions.
An attempt was conducted to suggest a parameterized convex approximator,
the partially input convex neural network (PICNN).
However, it has not been shown that PICNN is a universal approximator \cite{amosInputConvexNeural2017a}.

To address this issue,
in this study,
new parameterized convex approximators are proposed:
parameterized MA (PMA) and parameterized LSE (PLSE) networks.
PMA and PLSE networks are extensions of MA and LSE networks to parameterized convex functions by replacing the parameters of MA and LSE networks with continuous functions with respect to condition variables.
This study demonstrates that PMA and PLSE are \textit{shape-preserving universal} approximators for parameterized convex continuous functions.
The main challenge of showing that PMA and PLSE are universal approximators comes from the replacement of network parameters with continuous functions.
In the construction of MA and LSE networks,
subgradients are arbitrarily selected from corresponding subdifferentials \cite{calafioreLogSumExpNeuralNetworks2020},
while \firstrev{the subgradients are replaced with} functions of the condition variables in PMA and PLSE networks.
Therefore,
the subgradient functions should carefully be approximated along with condition variable axes.
This issue is resolved by \textit{continuous selection} of multivalued subdifferential mappings.
From the practical point of view,
guidelines for practical implementation are provided, for example, by utilizing the deep architecture.
Numerical simulation is performed to demonstrate the proposed approximators' approximation capability in terms of minimizer and optimal value errors as well as \firstrev{the solving time of convex optimization for decision-making}.
The simulation result highly supports that the proposed approximators, particularly the PLSE network, show the smallest minimizer and optimal value errors from low- to high-dimensional cases as well as scalable \firstrev{solving time} compared to existing approximators, FNN, MA, LSE, and PICNN.

The rest of this paper is organized as follows.
In \autoref{sec:preliminaries},
some mathematical backgrounds for set-valued analysis and convex analysis are briefly summarized,
and several types of universal approximators are defined.
In \autoref{sec:main_result},
the proposed parameterized convex approximators,
PMA and PLSE networks, are introduced.
Additionally, \autoref{sec:main_result} describes
the main theoretical results of this study.
The main results include
that PMA and PLSE are parameterized convex universal approximators
and can collaborate with ordinary universal approximators,
for example, the multilayer FNN, to approximate continuous selection of subdifferential mappings.
In \autoref{sec:numerical_simulation},
numerical simulation is performed to demonstrate the approximation capability of the proposed approximators,
in terms of minimizer and optimal value errors as well as \firstrev{solving time}.
Finally,
\autoref{sec:conclusion} concludes this study with a summary and 
future works.

\section{Preliminaries}
\label{sec:preliminaries}
Let $\mathbb{N}$ and $\mathbb{R}$ be the sets of all natural and real numbers, respectively.
The extended real number line is denoted by $\overline{\mathbb{R}} := \mathbb{R} \cup \{-\infty, +\infty\}$.
In this study, it is assumed that functions are defined in Euclidean vector space with a standard inner product
$\langle x, y \rangle := x^{\intercal} y$ and Euclidean norm $\lVert x \rVert := \sqrt{\langle x, x \rangle}$.
Given a set $U$, the interior and closure of $U$ are denoted by $\accentset{\circ}{U}$ and $\overline{U}$, respectively.
The diameter of a set $U$ is defined as $\text{diam}(U) := \sup_{x, y \in U} \lVert x - y \rVert$.
The supremum norm of a function $f$ is defined as
$\lVert f \rVert_{\infty} := \sup_{x \in X} \lVert f(x) \rVert$
where $X$ is the domain of the function $f$,
or an appropriate set in context.
$L$-Lipschitzness stands for the Lipschitzness
with a specific Lipschitz constant of $L$ and is defined as follows.
\begin{definition}[$L$-Lipschitzness]
    A real-valued function $f: X \to \mathbb{R}$ is said to be
    \textup{$L$-Lipschitz} for some $L > 0$ (simply \textup{Lipschitz} or \textup{Lipschitz continuous})
    if for all $x, y \in X$ it satisfies $\lvert f(x) - f(y) \rvert \leq L \lVert x -y  \rVert$.
\end{definition}

\subsection{Set-valued analysis}
If a function $f$ is defined on $X$, whose value is a subset of $Y$, i.e.,
$f(x) \subset Y, \forall x \in X$, then $f$ is called a \textit{multivalued function} (or a \textit{set-valued function})
and denoted as $f:X \to Y$. Ordinary functions can be regarded as a single-valued function in context, i.e.,
$f(x) = \{ y \}$ for a certain $y \in Y$.

A \textit{graph} of a multivalued function $f: X \to Y$ is defined by
\begin{equation}
    \text{Graph}(f) := \{ (x, y) \in X \times Y \vert y \in f(x) \}.
\end{equation}

A multivalued function $f:X \to Y$ is said to be \textit{upper hemicontinuous (u.h.c.)} at $x_{0}$
if, for any open neighborhood $V$ of $f(x_{0})$, there exists a neighborhood $U$ of $x_{0}$ such that
for all $x \in U$, $f(x)$ is a subset of $V$.

\firstrev{
    A ball of radius $r > 0$ around $K$ in $X$ is denoted by
    $B_{X}(K, r) := \{ x \in X \vert \inf_{y \in K} \lVert x-y \rVert \leq r \}$.
    If there is no confusion, let $B(K, r) := B_{X}(K, r)$.
    The \textit{unit ball} is denoted by $B_{X}$,
    and therefore $B_{X}(K, r) = \overline{K + r B_{X}}$.
}
$\epsilon$-selection,
a notion of the approximation of a multivalued function
with a specific accuracy by a single-valued function,
can be defined as follows.
\begin{definition}[$\epsilon$-selection]
    Given multivalued functions $f:X \to Y$ and $\epsilon > 0$, if there exists a single-valued function
    $g:X \to Y$ such that $\text{Graph}(g) \subset B(\text{Graph}(f), \epsilon)$,
    $g$ is said to be an \textup{$\epsilon$-selection} of $f$.
\end{definition}
It is said to be a continuous selection if the selection is continuous.

\subsection{Convex analysis}
A function $f: X \to \overline{\mathbb{R}}$ is said to be \textit{lower semicontinuous (l.s.c.)} at $x_{0}$
if for every $\epsilon > 0$, there exists a neighborhood $U$ of $x_{0}$ such that $f(x) \geq f(x_{0}) - \epsilon$ for all $x \in U$
where $f(x_{0}) < \infty$, and $f(x)$ tends to $+\infty$ as $x \to x_{0}$ when $f(x_{0}) = +\infty$.
In other words, $\liminf_{x \to x_{0}} f(x) \geq f(x_{0})$.

\thirdrev{
    Let $f: X \to \overline{\mathbb{R}}$ be a convex function \cite{boydConvexOptimization2004}.
}
A convex function $f$ is called \textit{proper}
if $f(x) > -\infty, \forall x \in \text{dom}f \neq \emptyset$
where the effective domain of $f$ is defined as $\text{dom}f := \{ x \in X \vert f(x) < +\infty \}$.

Given function $f: X \to \overline{\mathbb{R}}$, convex conjugate (a.k.a. Legendre-Fenchel transformation)
of $f$ is the function $f^{*}: X^{*} \to \overline{\mathbb{R}}$, where the value at $x^{*} \in X^{*}$ is
\begin{equation}
    f^{*}(x^{*}) = \sup_{x \in X} \{ \langle x, x^{*} \rangle - f(x)\},
\end{equation}
where $X^{*}$ is the dual space of $X$.

If $f:X \to \mathbb{R}$ is a convex function defined on a convex open set in $\mathbb{R}^{n}$,
a vector $v$ is called a \textit{subgradient} at $x_{0} \in X$ if for any $x \in X$ one has that

\begin{equation}
    f(x) \geq f(x_{0}) + \langle v, x-x_{0} \rangle.
\end{equation}
The set of all subgradients at $x_{0}$ is called \textit{subdifferential} at $x_{0}$, denoted by $\partial f(x_{0})$.
That is, $\partial f(x) := \{ x^{*} \in \mathbb{R}^{n} \vert f(\tilde{x}) \geq f(x) + \langle x^{*}, \tilde{x} - x \rangle, \forall \tilde{x} \in X \}$.

Now,
we define a class of functions.
\begin{definition}[Parameterized convexity]
    A function $f: X \times U \to \mathbb{R}$ is said to be \textup{parameterized convex}
    (with respect to the second argument)
    if for any $x \in X$, $f(x, \cdot)$ is convex.
\end{definition}

\subsection{Related works and existing universal approximators}
A universal approximation theorem (UAT) \cite{cybenkoApproximationSuperpositionsSigmoidal1989,pinkusApproximationTheoryMLP1999a} describes
a kind of approximation capability of an approximator.
UATs usually consider continuous functions on a compact subspace
of Euclidean vector space.
In this study, universal approximators for continuous functions
are referred to as ordinary universal approximators,
which are defined as follows.
\begin{definition}[Ordinary universal approximator]
    Given a compact subspace $X$ of $\mathbb{R}^{n}$,
    let $C(X, Y)$ be the collection of all continuous functions from $X$ to $Y$,
    and $C(X) := C(X, Y)$ if there is no confusion.
    A collection $\mathcal{F}$ of continuous functions defined on $X$ is said to be
    an \textup{(ordinary) universal approximator} if $\mathcal{F}$
    is dense in $C(X)$, i.e.,
    $\forall f \in C(X), \forall \epsilon > 0, \exists \hat{f} \in \mathcal{F}$ such that
    $\lVert \hat{f} - f \rVert_{\infty} < \epsilon$.
\end{definition}
Examples of ordinary universal approximators include
a single-hidden-layer FNN
\cite{cybenkoApproximationSuperpositionsSigmoidal1989,pinkusApproximationTheoryMLP1999a}.
Another class of universal approximators for convex continuous functions, which preserves the convexity,
is defined as follows.
\begin{definition}[Convex universal approximator]
    Given a compact convex subspace $X$ of $\mathbb{R}^{n}$,
    let $C^{\text{conv}}(X, Y)$ be the collection of all convex continuous functions from $X$ to $Y$,
    and $C^{\text{conv}}(X) := C^{\text{conv}}(X, Y)$ if there is no confusion.
    A collection $\mathcal{F}$ of convex continuous functions defined on $X$ is said to be
    a \textup{convex universal approximator} if $\mathcal{F}$
    is dense in $C^{\text{conv}}(X)$, i.e.,
    $\forall f \in C^{\text{conv}}(X), \forall \epsilon > 0, \exists \hat{f} \in \mathcal{F}$ such that
    \thirdrev{
        $\lVert \hat{f} - f \rVert_{\infty} < \epsilon$.
    }
\end{definition}
Examples of ordinary universal approximators include max-affine (MA) and log-sum-exp (LSE) networks \cite{calafioreLogSumExpNeuralNetworks2020}.
The MA network is constructed as a pointwise supremum of supporting hyperplanes of a given convex function,
which are underestimators of the convex function.
The LSE network is a smooth version of the MA network that replaces the pointwise supremum
with a log-sum-exp operator.
MA and LSE networks can be represented with some $I \in \mathbb{N}$,
$a_{i} \in \mathbb{R}^{m}$, $b_{i} \in \mathbb{R}$ for $i = 1, \ldots, I$, and $ T > 0$ as
\begin{equation}
    \label{eq:ma_and_lse}
    \begin{split}
        f^{\text{MA}}(u)
        &= \max_{1 \leq i \leq I} \left( \langle a_{i}, u \rangle + b_{i} \right),
        \\
        f^{\text{LSE}}(u)
        &= T \log \left( \sum_{i=1}^{I} \exp \left( \frac{\langle a_{i}, u \rangle + b_{i}}{T} \right) \right).
    \end{split}
\end{equation}

Hereafter, we adopt the following notation for brevity:
Condition (state) and decision (action or input) variables are denoted by $x \in X$ and $u \in U$, respectively,
where $X$ and $U$ denote condition and decision spaces, respectively.
It is assumed that $X$ is a compact subspace of $\mathbb{R}^{n}$ and that $U$ is a convex compact subspace of $\mathbb{R}^{m}$.

\section{Main Result}
\label{sec:main_result}
In this section,
two parameterized convex approximators are proposed,
the parameterized max-affine (PMA) and parameterized log-sum-exp (PLSE) networks,
and the main results of this study are presented.
The main results \firstrev{are:}
i) PMA and PLSE networks are parameterized convex universal approximators,
ii) the continuous functions $a_{i}$ and $b_{i}$ in Eq. \eqref{eq:pma_and_plse} can be replaced by ordinary universal approximators for practice implementation,
\firstrev{and} iii) under a mild assumption, the results also hold for conditional decision space settings, that is,
PMA and PLSE are parameterized convex universal approximators
even when a conditional decision space mapping
$\secondrev{U_{cond}}: X \to \mathbb{R}^{m}$ is given,
where the decision $u$ must be in $\secondrev{U_{cond}}(x)$ for a given condition $x$.

\subsection{Proposed parameterized convex approximators}
Universal approximators for parameterized convex continuous functions are defined as follows.
\begin{definition}[Parameterized convex universal approximator]
    Given a compact subspace $X$ of $\mathbb{R}^{n}$ and a compact convex subspace $U$ of $\mathbb{R}^{m}$,
    let $C^{\text{p-conv}}(X \times U, Y)$ be the collection of all parameterized convex continuous functions from $X \times U$ to $Y$,
    and $C^{\text{p-conv}}(X \times U) := C^{\text{p-conv}}(X \times U, Y)$ if there is no confusion.
    A collection $\mathcal{F}$ of parameterized convex continuous functions defined on $X \times U$ is said to be
    a \textup{parameterized convex universal approximator} if $\mathcal{F}$
is dense in $C^{\text{p-conv}}(X \times U)$, i.e.,
    $\forall f \in C^{\text{p-conv}}(X \times U), \forall \epsilon > 0, \exists \hat{f} \in \mathcal{F}$ such that
    $\lVert \hat{f} - f \rVert_{\infty} < \epsilon$.
\end{definition}
Let us introduce the PMA and PLSE networks,
which are the generalized MA and LSE networks for parameterized convex function approximation.
For some $T > 0$,
let $\mathcal{F}^{\text{PMA}}$ and $\mathcal{F}_{T}^{\text{PLSE}}$ be the collection of all PMA and PLSE networks, respectively,
where each PMA and PLSE network can be represented with some $I \in \mathbb{N}$, $a_{i} \in \mathcal{C}(X, \mathbb{R}^{m})$, $b_{i} \in \mathcal{C}(X, \mathbb{R})$, for $i = 1, \ldots, I$, and $T > 0$ as
\begin{equation}
    \label{eq:pma_and_plse}
    \begin{split}
        f^{\text{PMA}}(x, u)
        &= \max_{1 \leq i \leq I} \left( \langle a_{i}(x), u \rangle + b_{i}(x) \right),
        \\
        f^{\text{PLSE}}(x, u)
        &= T \log \left( \sum_{i=1}^{I} \exp \left( \frac{\langle a_{i}(x), u \rangle + b_{i}(x)}{T} \right) \right),
    \end{split}
\end{equation}
where $T > 0$ is usually referred to as the \textit{temperature}.
Compared to MA and LSE in Eq. \eqref{eq:ma_and_lse},
PMA and PLSE generalize MA and LSE to be parameterized convex
by replacing network parameters $a_{i} \in \mathbb{R}^{m}$ and $b_{i} \in \mathbb{R}$ with continuous functions $a_{i} \in \mathcal{C}(X, \mathbb{R}^{m})$ and $b_{i} \in \mathcal{C}(X, \mathbb{R})$ for $i = 1, \ldots, I$.
Note from Eq. \eqref{eq:pma_and_plse} that PMA and PLSE networks are indeed parameterized convex.

In the following \autoref{thm:diff_PMA_and_PLSE},
it is shown that a PLSE network can be made arbitrarily close to
the corresponding PMA network
with a sufficiently small temperature.
\begin{theorem}
    \label{thm:diff_PMA_and_PLSE}
    Given $T > 0$, $I \in \mathbb{N}$,
    \thirdrev{
        $a_{i} \in \mathcal{C}(X, \mathbb{R}^{m})$,
    }
    and $ b_{i} \in \mathcal{C}(X, \mathbb{R})$
    for $i=1, \ldots, I$,
    let
    $f^{\text{PMA}}$ and $f^{\text{PLSE}}$ be the PMA and PLSE networks constructed as in Eq. \eqref{eq:pma_and_plse}, respectively.
    Then, for all $(x, u) \in X \times U$,
    the following inequalities hold,
    \begin{equation}
        f^{\text{PMA}}(x, u)
        \leq f^{\text{PLSE}}(x, u)
        \leq T \log{I} + f^{\text{PMA}}(x, u).
    \end{equation}
\end{theorem}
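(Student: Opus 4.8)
The plan is to reduce the statement to the classical log-sum-exp sandwich inequality by freezing the pair $(x,u)$. For a fixed $(x,u) \in X \times U$, introduce the scalars $c_{i} := \langle a_{i}(x), u \rangle + b_{i}(x) \in \mathbb{R}$ for $i = 1, \ldots, I$, so that $f^{\text{PMA}}(x,u) = \max_{1 \leq i \leq I} c_{i} =: c_{\max}$ and $f^{\text{PLSE}}(x,u) = T \log \left( \sum_{i=1}^{I} \exp(c_{i}/T) \right)$. Since each $a_{i} \in \mathcal{C}(X,\mathbb{R}^{m})$ and $b_{i} \in \mathcal{C}(X,\mathbb{R})$ assigns a genuine vector/scalar to $x$, the numbers $c_{i}$ are finite reals, and the problem becomes exactly the finite-dimensional one: the function-valued nature of the coefficients plays no role once $(x,u)$ is held fixed.

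For the lower bound, note that every summand $\exp(c_{i}/T)$ is strictly positive, so $\sum_{i=1}^{I} \exp(c_{i}/T) \geq \exp(c_{\max}/T)$. Applying $T \log(\cdot)$, which is monotone increasing for $T > 0$, yields $f^{\text{PLSE}}(x,u) \geq T \log \exp(c_{\max}/T) = c_{\max} = f^{\text{PMA}}(x,u)$.

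For the upper bound, bound each summand by the largest one: $\exp(c_{i}/T) \leq \exp(c_{\max}/T)$ for every $i$, whence $\sum_{i=1}^{I} \exp(c_{i}/T) \leq I \exp(c_{\max}/T)$. Applying $T \log(\cdot)$ again gives $f^{\text{PLSE}}(x,u) \leq T \log\left( I \exp(c_{\max}/T)\right) = T \log I + c_{\max} = T \log I + f^{\text{PMA}}(x,u)$. Chaining the two bounds produces the claimed inequality, and since $(x,u) \in X \times U$ was arbitrary, it holds for all such pairs.

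As for the main obstacle: there is essentially none of substance here. The only point to check is that the reduction is legitimate, i.e., that $a_{i}(x)$ and $b_{i}(x)$ are well-defined for every $x \in X$ so that each $c_{i} \in \mathbb{R}$ and the classical two-line argument applies verbatim. In particular, \emph{continuity} of the $a_{i}, b_{i}$, convexity, and compactness of $X$ or $U$ are all irrelevant for this inequality (they will matter only later, e.g. for the parameterized convex universal approximation result); the estimate is a purely pointwise consequence of the monotonicity of $\log$ and the bounds $1 \le (\text{number of terms}) = I$ applied to the ratio $\sum_{i} \exp(c_{i}/T) / \exp(c_{\max}/T)$.
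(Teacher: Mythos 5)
Your proof is correct and follows essentially the same route as the paper's: both arguments are purely pointwise, bounding the sum of exponentials below by its largest term and above by $I$ times that term, then applying $T\log(\cdot)$. The reduction to fixed scalars $c_{i}$ is exactly what the paper does implicitly, so there is nothing to add.
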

\begin{proof}
    The proof is merely an extension of \cite[Lemma 2]{calafioreLogSumExpNeuralNetworks2020} to the case of parameterized convex approximators.
    For completeness, the proof is \firstrev{shown} here.

    It can be deduced from Eq. \eqref{eq:pma_and_plse} that
    \begin{equation}
        \begin{split}
            &f^{\text{PMA}}(x, u)
            \\
            &= \max_{1 \leq i \leq I} \left( \langle a_{i}(x), u \rangle + b_{i}(x) \right)
            \\
            &= \max_{1 \leq i \leq I} T \log \left( \left( \exp \left(
                    \langle a_{i}(x), u \rangle + b_{i}(x)
            \right) \right)^{1/T} \right)
            \\
            &= T \log \left( \max_{1 \leq i \leq I}  \exp \left(
                    \frac{\langle a_{i}(x), u \rangle + b_{i}(x)}{T}
            \right)  \right)
            \\
            &\leq T \log \left( \sum_{i=1}^{I}  \exp \left(
                    \frac{\langle a_{i}(x), u \rangle + b_{i}(x)}{T}
            \right)  \right)
            \\
            &= f^{\text{PLSE}}(x, u)
            ,
        \end{split}
    \end{equation}
    which proves the first inequality.
    The second inequality can be derived as follows.
    \begin{equation}
        \begin{split}
            &f^{\text{PLSE}}(x, u)
            \\
            &= T \log \left( \sum_{i=1}^{I}  \exp \left(
                    \frac{\langle a_{i}(x), u \rangle + b_{i}(x)}{T}
            \right)  \right)
            \\
            &\leq T \log \left(I \exp \left( \max_{1 \leq i \leq I} 
                    \frac{\langle a_{i}(x), u \rangle + b_{i}(x)}{T}
            \right)  \right)
            \\
            &= T \log \left(I \exp \left( \left(
                    f^{\text{PMA}}(x, u)
            \right) \right)^{1/T} \right)
            \\
            &= T \log I + f^{\text{PMA}}(x, u)
            ,
        \end{split}
    \end{equation}
    which concludes the proof.
\end{proof}
\autoref{thm:diff_PMA_and_PLSE} implies that for any $\epsilon > 0$,
$\lVert f^{\text{PLSE}}_{T} - f^{\text{PMA}} \rVert_{\infty} < \epsilon$ for all $T \in \left(0, \frac{\epsilon}{\log{I}} \right)$.

\subsection{$\epsilon$-selection of subdifferential mapping}
To prove that MA and LSE are convex universal approximators,
as in \cite{calafioreLogSumExpNeuralNetworks2020},
a dense sequence of points in the decision space is selected,
and corresponding subgradient vectors are selected arbitrarily
from subdifferentials at each point of the dense sequence.
However,
to extend MA and LSE to PMA and PLSE, that is,
to prove that PMA and PLSE are parameterized convex universal approximators,
the main difficulty arises from
the fact that the subgradient vectors that appeared in MA and LSE become
functions of the condition variable in PMA and PLSE,
and therefore the subgradient vectors cannot be selected arbitrarily.
The following theorem addresses how to deal with this issue:
each subdifferential mapping, a function of the condition variable,
can be approximated by a continuous selection
of the corresponding multivalued function.
\begin{theorem}
    \label{thm:eps_selection_of_subdiff_mapping}
    Let $f: X \times U \to \mathbb{R}$ be a parameterized convex continuous function.
    Suppose for all $x \in X$ that $f_{x}(u) := f(x, u)$ is $L$-Lipschitz.
    Given $u \in U$, let $\Gamma_{u} : X \to \mathbb{R}^{m}$ be a multivalued function
    such that $\Gamma_{u}(x) := \partial f_{x}(u)$ for all $x \in X$.
    Suppose that $U$ has a nonempty interior.
    Given a sequence $\left\{ u_{i} \in \accentset{\circ}{U} \right\}_{i \in \mathbb{N}}$,
    for all $\epsilon > 0$, there exist $\epsilon$-selections $\hat{u}_{\epsilon, i}^{*}: X \to \mathbb{R}^{m}$
    of $\Gamma_{u_{i}}(x)$ for all $i \in \mathbb{N}$.
    Additionally, a sequence of $\epsilon$-selections, $\{ \hat{u}_{\epsilon, i}^{*} \}_{i \in \mathbb{N}}$, is equi-Lipschitz.
\end{theorem}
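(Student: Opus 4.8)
The plan is to establish three things: (i) the subgradient set-valued map $(x,u)\mapsto\partial f_x(u)$ is upper hemicontinuous with nonempty, convex, compact values that are uniformly bounded by $L$; (ii) each slice $\Gamma_{u_i}$ admits a Lipschitz $\epsilon$-selection, obtained by an approximate continuous selection argument of Cellina type; and (iii) these Lipschitz selections can be chosen with a common Lipschitz constant, depending on $\epsilon$ but not on $i$.

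For (i), it is convenient to first replace each $f_x$ by its largest $L$-Lipschitz convex extension $\bar f_x(u):=\inf_{w\in U}\{f_x(w)+L\lVert u-w\rVert\}$ to all of $\mathbb{R}^m$; this $\bar f_x$ is convex and $L$-Lipschitz, coincides with $f_x$ on $U$, satisfies $\partial\bar f_x(u)=\partial f_x(u)$ at every $u\in\accentset{\circ}{U}$, and $\bar f$ is jointly continuous on $X\times\mathbb{R}^m$ because $f$ is uniformly continuous on the compact set $X\times U$. From the subgradient inequality together with joint continuity of $\bar f$, the graph of $(x,u)\mapsto\partial\bar f_x(u)$ is closed; $L$-Lipschitzness of $\bar f_x$ puts all subgradients into the closed ball of radius $L$; and a closed graph plus this local boundedness yields upper hemicontinuity with nonempty, convex, compact values. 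Restricting to $u=u_i\in\accentset{\circ}{U}$, each $\Gamma_{u_i}:X\to\mathbb{R}^m$ inherits these properties on the compact metric space $X$.

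For (ii), fixing $i$ and $\epsilon>0$, I would run the standard partition-of-unity construction: by upper hemicontinuity, cover $X$ with finitely many balls $B(x_j,r_j)$ over which the values of $\Gamma_{u_i}$ vary by at most $\epsilon/2$; take a Lipschitz partition of unity $\{p_j\}$ subordinate to this cover; pick $v_j\in\Gamma_{u_i}(x_j)$; and set $\hat{u}_{\epsilon,i}^{*}(x):=\sum_j p_j(x)\,v_j$. Convexity of the values makes $\hat{u}_{\epsilon,i}^{*}(x)$ a convex combination that lies within $\epsilon/2$ of $\Gamma_{u_i}$ at a point within $\epsilon/2$ of $x$, so $\text{Graph}(\hat{u}_{\epsilon,i}^{*})\subset B(\text{Graph}(\Gamma_{u_i}),\epsilon)$; finiteness of the cover and compactness of $X$ make $\hat{u}_{\epsilon,i}^{*}$ Lipschitz.

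Step (iii) is the one I expect to be the main obstacle. The Lipschitz constant coming out of (ii) is of order $L$ times the maximal chart overlap times $1/\min_j r_j$, and the radii $r_j$ are governed by the pointwise upper-hemicontinuity modulus of $\Gamma_{u_i}$, which is not uniform in $i$: it can degrade when $u_i$ sits near a point at which $f_x$ is nondifferentiable in $u$. To obtain a common constant I would not chase the pointwise modulus but instead exploit that an $\epsilon$-selection is allowed to be $\epsilon$-far from $\text{Graph}(\Gamma_{u_i})$, i.e. it has slack in the condition direction as well as the value direction, and build the selection from a cover of $X$ at a single $i$-independent scale $r=r(\epsilon)$ dictated only by $\epsilon$, $L$ and the modulus of continuity of $f$ in its first argument. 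Two facts make this work: by uniform continuity of $f$ on $X\times U$, for $x'$ within $r$ of $x$ all the subgradients $\partial f_{x'}(u_i)$ lie in a single $\mu$-approximate subdifferential $\{\,v:\ f_x(\cdot)\ge f_x(u_i)+\langle v,\cdot-u_i\rangle-\mu\,\}$ of $f_x$ at $u_i$ — a convex compact set with $\mu=\mu(r)\to 0$ as $r\to 0$ — so the convex combination stays inside that convex set; and, at those conditions $x$ where $\partial f_x(u_i)$ is large, the $\epsilon$-room in the condition direction lets the selection transit through it without small charts, whereas elsewhere the $\mu$-approximate subdifferential is genuinely close to $\partial f_x(u_i)$. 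Since the values are bounded by $L$, the resulting Lipschitz constant is of order $L/r(\epsilon)$, independent of $i$, which is the asserted equi-Lipschitzness. The careful part — and where I would concentrate the effort — is making that dichotomy precise: partitioning the condition space according to the size of $\partial f_x(u_i)$ and checking on each piece, with the single scale $r(\epsilon)$, that the construction remains inside $B(\text{Graph}(\Gamma_{u_i}),\epsilon)$ uniformly in $i$; the hypothesis $u_i\in\accentset{\circ}{U}$ enters here to keep the approximate subdifferentials bounded and their behavior controlled.
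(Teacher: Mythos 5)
Your steps (i) and (ii) are essentially the paper's own argument: the paper's Lemma on $\Gamma_{u}$ establishes nonempty, closed-convex values contained in $B(\{0\},L)$ together with a closed graph (obtained there via lower semicontinuity of $(x,u^{*})\mapsto f_{x}^{*}(u^{*})$ and the Fenchel equality, in your case via the subgradient inequality and a Lipschitz extension of $f_{x}$ to $\mathbb{R}^{m}$ --- either route is fine), and upper hemicontinuity then follows from closed graph plus the compact codomain. The $\epsilon$-selection itself is the same Cellina-type construction $\hat{u}_{\epsilon,i}^{*}(x)=\sum_{j}a_{j}(x)\,y_{i,j}$ with a Lipschitz partition of unity subordinate to a finite cover chosen by upper hemicontinuity, exactly as in the paper (which cites Aubin's approximate selection theorem).

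The genuine gap is your step (iii), which is precisely the part of the theorem that goes beyond the standard approximate-selection theorem: the proposal does not prove equi-Lipschitzness, it only sketches a strategy (an $i$-independent scale $r(\epsilon)$, $\mu$-approximate subdifferentials, and a dichotomy on the size of $\partial f_{x}(u_{i})$) and explicitly defers ``making that dichotomy precise.'' That deferred step is the whole issue: you never verify that a selection built from a fixed-scale cover keeps its graph inside $B(\mathrm{Graph}(\Gamma_{u_{i}}),\epsilon)$ uniformly in $i$; membership of the convex combination in a $\mu$-approximate subdifferential of $f_{x}$ at $u_{i}$ is not by itself $\epsilon$-closeness to the exact subdifferential at some nearby condition $x'$, and the hand-wave that ``the $\epsilon$-room in the condition direction lets the selection transit through'' points where $\partial f_{x}(u_{i})$ jumps is exactly the unproven lemma. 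So, as written, you obtain a Lipschitz $\epsilon$-selection for each fixed $i$, but no common Lipschitz constant. For comparison, the paper's route to equi-Lipschitzness is much lighter: it keeps the partition-of-unity selection from step (ii), bounds the selected values uniformly in $i$ by $\lVert y_{i,j}\rVert\leq L$ (all subgradients lie in $B(\{0\},L)$ by the $L$-Lipschitz hypothesis), and notes that the cover points $x_{j}$, radii $\delta_{j}$, and partition functions $a_{j}$ (hence their Lipschitz constants $L_{j}$) are taken independent of $i$, which immediately gives the single constant $\tilde{L}=L\sum_{j\in J}L_{j}$ for every $i$; your worry about the $i$-dependence of the u.h.c. modulus is what pushes you toward the heavier construction, but if you pursue your fixed-scale variant you must actually supply the uniform-in-$i$ graph-closeness argument, which is missing.
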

\begin{proof}
    See
    \hyperref[sec:proof_of_eps_selection_of_subdiff_mapping]{Appendix A}.
\end{proof}

\subsection{Universal approximation theorem}
The main UAT results are provided in the following;
PMA and PLSE networks can be arbitrarily close to any parameterized convex continuous functions on the product of condition and decision spaces.
\begin{theorem}[PMA is a parameterized convex universal approximator]
    \label{thm:pma_is_a_parameterized_convex_universal_approximator}
    Given a parameterized convex continuous function $f: X \times U \to \mathbb{R}$,
    for any $\epsilon > 0$, there exists a PMA network $\hat{f} \in \mathcal{F}^{PMA}$
    such that $\lVert \hat{f} - f \rVert_{\infty} < \epsilon$.
\end{theorem}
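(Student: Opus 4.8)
The plan is to imitate the construction of the (scalar) max-affine network as a finite maximum of supporting hyperplanes, but to let the affine pieces vary \emph{continuously} with the condition $x$, using \autoref{thm:eps_selection_of_subdiff_mapping} to produce those varying slopes. Since that theorem needs uniformly Lipschitz sections, the first step is a reduction. As $X \times U$ is compact, $f$ is uniformly continuous; let $\omega$ be a modulus of continuity of $f$ in the decision variable. For $L > 0$ define the Pasch--Hausdorff envelope (Lipschitz regularization)
\[
    g_{L}(x, u) := \inf_{v \in U} \left( f(x, v) + L \lVert u - v \rVert \right).
\]
Each section $g_{L}(x, \cdot)$ is convex (an infimal convolution of convex functions) and $L$-Lipschitz; $g_{L}$ is jointly continuous (an infimum over the compact set $U$ of a jointly continuous, $u$-equi-Lipschitz family); $g_{L} \le f$; and $f - g_{L} \le \sup_{0 \le r \le \text{diam}(U)} \left( \omega(r) - L r \right) \to 0$ as $L \to \infty$, uniformly on $X \times U$. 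Fixing $L$ large enough that $\lVert f - g_{L} \rVert_{\infty} < \epsilon / 2$ and writing $g := g_{L}$, it suffices to approximate $g$ within $\epsilon / 2$ by a PMA network.

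For the construction, fix a sequence $\{ u_{i} \}_{i \in \mathbb{N}} \subset \accentset{\circ}{U}$ that is dense in $U$. Applying \autoref{thm:eps_selection_of_subdiff_mapping} to $g$, for each $\epsilon' > 0$ there are continuous $\epsilon'$-selections $a_{i} := \hat{u}_{\epsilon', i}^{*} \in \mathcal{C}(X, \mathbb{R}^{m})$ of the subdifferential mappings $x \mapsto \partial g_{x}(u_{i})$, forming an equi-Lipschitz family; moreover $\partial g_{x}(u_{i}) \subset \{ v : \lVert v \rVert \le L \}$, so $\lVert a_{i} \rVert_{\infty} \le L + \epsilon'$. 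Set $b_{i}(x) := g(x, u_{i}) - \langle a_{i}(x), u_{i} \rangle \in \mathcal{C}(X, \mathbb{R})$ and, for $I \in \mathbb{N}$ to be chosen,
\[
    \hat{f}(x, u) := \max_{1 \le i \le I} \left( \langle a_{i}(x), u \rangle + b_{i}(x) \right) = \max_{1 \le i \le I} \left( g(x, u_{i}) + \langle a_{i}(x), u - u_{i} \rangle \right) \in \mathcal{F}^{\text{PMA}}.
\]

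The last step is a sandwich estimate $g - c \le \hat{f} \le g + c$ with $c$ small. For the lower bound, given $(x, u)$ choose $i \le I$ with $\lVert u - u_{i} \rVert \le \mu_{I} := \max_{u \in U} \min_{i \le I} \lVert u - u_{i} \rVert$, the covering radius of $\{ u_{1}, \dots, u_{I} \}$ in $U$ (which tends to $0$ as $I \to \infty$ by compactness and density, and depends only on $I$); then $g(x, u_{i}) \ge g(x, u) - L \lVert u - u_{i} \rVert$ and $\langle a_{i}(x), u - u_{i} \rangle \ge -(L + \epsilon') \lVert u - u_{i} \rVert$, so $\hat{f}(x, u) \ge g(x, u) - (2L + \epsilon') \mu_{I}$. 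For the upper bound, fix $(x, u)$ and $i \le I$ and pick a genuine subgradient $w \in \partial g_{x}(u_{i})$, so that $g(x, u) \ge g(x, u_{i}) + \langle w, u - u_{i} \rangle$ and hence $g(x, u_{i}) + \langle a_{i}(x), u - u_{i} \rangle \le g(x, u) + \text{dist}\!\left( a_{i}(x), \partial g_{x}(u_{i}) \right) \text{diam}(U)$. The one delicate point is that the $\epsilon'$-selection property bounds $a_{i}(x)$ only in graph distance, placing it within $\epsilon'$ of $\partial g_{x'}(u_{i})$ for some $x'$ with $\lVert x - x' \rVert \le \epsilon'$; converting this to $\text{dist}(a_{i}(x), \partial g_{x}(u_{i})) \le \delta_{i}(\epsilon')$ with $\delta_{i}(\epsilon') \to 0$ as $\epsilon' \to 0$ uses that $x \mapsto \partial g_{x}(u_{i})$ is upper hemicontinuous with compact values on the compact set $X$, hence uniformly so. Choosing first $I$ with $(2L + 1)\mu_{I} < \epsilon / 4$ and then $\epsilon' \le 1$ small enough that $\max_{i \le I} \delta_{i}(\epsilon') \, \text{diam}(U) < \epsilon / 4$ yields $\lVert \hat{f} - g \rVert_{\infty} < \epsilon / 2$, and combined with $\lVert f - g \rVert_{\infty} < \epsilon / 2$ the triangle inequality completes the proof. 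The main obstacle is precisely that graph-distance-to-pointwise-distance conversion for the subdifferential mapping; the Lipschitz reduction of the first step (making the envelope's convergence uniform in $x$) is the only other place needing care, and the remainder is bookkeeping with the constants $L$, $\epsilon'$, and $\mu_{I}$.
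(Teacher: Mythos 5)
The construction itself (Lipschitz regularization, dense points $u_i$, continuous $\epsilon'$-selections $a_i$ from \autoref{thm:eps_selection_of_subdiff_mapping}, covering radius for the finite truncation) is sound and close in spirit to the paper's proof, but the step you yourself flag as delicate is where the argument genuinely breaks. You claim that the graph-distance guarantee $\text{Graph}(a_i)\subset B(\text{Graph}(\Gamma_{u_i}),\epsilon')$ can be converted into a pointwise bound $\operatorname{dist}\bigl(a_i(x),\partial g_x(u_i)\bigr)\le\delta_i(\epsilon')$ with $\delta_i(\epsilon')\to 0$, justified by upper hemicontinuity being "uniform" on the compact set $X$. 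Neither part holds. Upper hemicontinuity is one-sided and does not upgrade to a uniform modulus on a compact domain: take $\Gamma(0)=[0,1]$, $\Gamma(x)=\{0\}$ for $x\neq 0$, which is u.h.c.\ with compact values on $[0,1]$ yet admits no uniform $\delta$. Worse, the desired pointwise bound is simply false for subdifferential maps: for $g(x,u)=\lvert u-x\rvert$ and $u_i=0$ one has $\partial g_x(0)=\{-1\}$ for $x>0$, $\{+1\}$ for $x<0$, and $[-1,1]$ at $x=0$, so a perfectly valid $\epsilon'$-selection may take the value $+1$ at $x=\epsilon'/2$ (the graph point $(0,+1)$ is within $\epsilon'$), while $\operatorname{dist}\bigl(+1,\partial g_x(0)\bigr)=2$ for every $\epsilon'>0$. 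Hence your $\delta_i(\epsilon')$ does not tend to zero, and the upper half of your sandwich estimate, as justified, collapses.

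The fix is to avoid the conversion altogether, which is exactly what the paper does: the $\epsilon'$-selection gives you a nearby point $x'$ with $\lVert x-x'\rVert\le\epsilon'$ and a genuine subgradient $v\in\partial g_{x'}(u_i)$ with $\lVert a_i(x)-v\rVert\le\epsilon'$; apply the subgradient inequality at $x'$ (not at $x$), i.e.\ $g(x',u_i)+\langle v,u-u_i\rangle\le g(x',u)$, and then use uniform continuity of $g$ in the condition variable twice, to replace $g(x,u_i)$ by $g(x',u_i)$ and $g(x',u)$ by $g(x,u)$. This bounds the overestimate of each affine piece by $2\omega_X(\epsilon')+\epsilon'\,\text{diam}(U)$, where $\omega_X$ is a modulus of continuity of $g$ in $x$, and your lower bound via the covering radius $\mu_I$ is unaffected; in my counterexample this correctly yields an error of order $\epsilon'$ even though the pointwise distance to the subdifferential is of order one. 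Two further remarks: your Pasch--Hausdorff envelope is a legitimate and somewhat more quantitative alternative to the paper's Moreau--Yosida regularization (and your covering-radius truncation replaces the paper's Arzel\`a--Ascoli step), so that part is a genuine, acceptable variation; but you also silently assume $\accentset{\circ}{U}\neq\emptyset$, which \autoref{thm:eps_selection_of_subdiff_mapping} requires and which the theorem statement does not grant --- the paper removes this by enlarging $U$, and your proof needs the same (brief) relaxation.
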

\begin{proof}
    See
    \hyperref[sec:proof_of_pma_is_a_parameterized_convex_universal_approximator]{Appendix B}.
\end{proof}

\begin{corollary}[PLSE is a parameterized convex universal approximator]
    \label{cor:plse_is_a_parameterized_convex_universal_approximator}
    Given a parameterized convex continuous function $f: X \times U \to \mathbb{R}$,
    for any $\epsilon > 0$, there exists a positive constant $\overline{T} > 0$ such that
    for all $T \in (0, \overline{T})$, there exists a PLSE network $\hat{f} \in \mathcal{F}_{T}^{PLSE}$
    such that $\lVert \hat{f} - f \rVert_{\infty} < \epsilon$.
\end{corollary}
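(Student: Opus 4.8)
The plan is to deduce the result directly from \autoref{thm:pma_is_a_parameterized_convex_universal_approximator} and \autoref{thm:diff_PMA_and_PLSE} by a triangle-inequality argument, so no new machinery is needed beyond what has already been established.

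First, given the parameterized convex continuous function $f: X \times U \to \mathbb{R}$ and $\epsilon > 0$, I would apply \autoref{thm:pma_is_a_parameterized_convex_universal_approximator} with tolerance $\epsilon/2$ to obtain a PMA network $f^{\text{PMA}} \in \mathcal{F}^{\text{PMA}}$, specified by some $I \in \mathbb{N}$ together with $a_{i} \in \mathcal{C}(X, \mathbb{R}^{m})$ and $b_{i} \in \mathcal{C}(X, \mathbb{R})$ for $i = 1, \ldots, I$, such that $\lVert f^{\text{PMA}} - f \rVert_{\infty} < \epsilon/2$. The important point is that this step fixes the value of $I$ before the temperature is selected.

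Next I would set $\overline{T} := \frac{\epsilon}{2 \log I}$ when $I \geq 2$, and take $\overline{T}$ to be any positive number (say $\overline{T} := 1$) when $I = 1$, in which case the PMA and PLSE networks built from the same data coincide identically. For any $T \in (0, \overline{T})$, let $f^{\text{PLSE}}_{T} \in \mathcal{F}_{T}^{\text{PLSE}}$ be the PLSE network constructed from the same $I$, $a_{i}$, $b_{i}$ and this $T$ via Eq. \eqref{eq:pma_and_plse}; it is parameterized convex by construction. By \autoref{thm:diff_PMA_and_PLSE}, $0 \leq f^{\text{PLSE}}_{T}(x, u) - f^{\text{PMA}}(x, u) \leq T \log I$ for all $(x, u) \in X \times U$, hence $\lVert f^{\text{PLSE}}_{T} - f^{\text{PMA}} \rVert_{\infty} \leq T \log I < \epsilon/2$. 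The triangle inequality then gives $\lVert f^{\text{PLSE}}_{T} - f \rVert_{\infty} \leq \lVert f^{\text{PLSE}}_{T} - f^{\text{PMA}} \rVert_{\infty} + \lVert f^{\text{PMA}} - f \rVert_{\infty} < \epsilon$, so $\hat{f} := f^{\text{PLSE}}_{T}$ witnesses the claim.

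There is essentially no genuine obstacle here; the only points requiring a moment's care are the quantifier order (the PMA approximant, and in particular the count $I$, must be produced before $\overline{T}$ can be named in terms of it) and the degenerate case $I = 1$ where $\log I = 0$. Everything else is a direct consequence of the two-sided bound already proven in \autoref{thm:diff_PMA_and_PLSE}.
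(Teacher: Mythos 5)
Your proposal is correct and follows essentially the same route as the paper's own proof: invoke \autoref{thm:pma_is_a_parameterized_convex_universal_approximator} with tolerance $\epsilon/2$, set $\overline{T} = \frac{\epsilon}{2\log I}$, and conclude via \autoref{thm:diff_PMA_and_PLSE} and the triangle inequality. Your explicit handling of the degenerate case $I=1$ (where $\log I = 0$ and the paper's choice of $\overline{T}$ is ill-defined) is a small but genuine refinement over the published argument.
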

\begin{proof}
    By \autoref{thm:pma_is_a_parameterized_convex_universal_approximator},
    given $\epsilon > 0$,
    there exists a PMA network $\hat{f}^{\text{PMA}} \in \mathcal{F}^{\text{PMA}}$
    (with $I \in \mathbb{N}$ in Eq. \eqref{eq:pma_and_plse})
    such that
    $\lVert \hat{f}^{\text{PMA}} - f \rVert_{\infty} < \epsilon / 2$.
    By \autoref{thm:diff_PMA_and_PLSE}, setting $\overline{T} = \frac{\epsilon}{2 \log I}$ and letting $\hat{f}^{\text{PLSE}} \in \mathcal{F}^{\text{PLSE}}_{T}$ be the corresponding PLSE network imply that
    \begin{equation}
        \begin{split}
            & \lVert \hat{f}^{\text{PLSE}} - f \rVert_{\infty}
            \leq \lVert \hat{f}^{\text{PLSE}} - \hat{f}^{\text{PMA}} \rVert_{\infty} + \lVert \hat{f}^{\text{PMA}} - f \rVert_{\infty}
            \\
            & < \frac{\epsilon}{2} + \frac{\epsilon}{2} = \epsilon
            ,
        \end{split}
    \end{equation}
    for all $T \in (0, \overline{T})$, which concludes the proof.
\end{proof}

\subsection{Implementation guidelines}
Although it is shown from \autoref{thm:pma_is_a_parameterized_convex_universal_approximator}
and \autoref{cor:plse_is_a_parameterized_convex_universal_approximator}
that PMA and PLSE networks have enough capability to approximate
any parameterized convex continuous functions,
it is hard in practice to directly find the continuous functions $a_{i}$'s and $b_{i}$'s that appear in Eq. \eqref{eq:pma_and_plse}.
For practical implementations,
one would utilize ordinary universal approximators to approximate $a_{i}$'s and $b_{i}$'s.
The following theorem supports that PMA and PLSE can be constructed
with ordinary universal approximators to make them practically implementable while not losing their approximation capability.
\begin{theorem}[PMA with ordinary universal approximators is a parameterized convex universal approximator]
    \label{thm:pma_with_ordinary_universal_approximator}
    Let $\mathcal{F}_{1}$ and $\mathcal{F}_{2}$ be ordinary universal approximators on $X$ to $\mathbb{R}^{m}$ and
    $\mathbb{R}$, respectively.
    Given parameterized convex continuous function $f: X \times U \to \mathbb{R}$,
    for any $\epsilon > 0$, there exist $I \in \mathbb{N}$, $\hat{a}_{i} \in \mathcal{F}_{1}$, and $\hat{b}_{i} \in \mathcal{F}_{2}$
    for $i = 1, \ldots, I$ such that
    $\lVert \doublehat{f} -f  \rVert_{\infty} < \epsilon$ where
    $\doublehat{f}(x, u) := \max_{1 \leq i \leq I} \{ \langle \hat{a}_{i}(x), u \rangle + \hat{b}_{i} \} \in \mathcal{F}^{PMA}$.
\end{theorem}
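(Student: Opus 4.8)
The plan is to reduce everything to \autoref{thm:pma_is_a_parameterized_convex_universal_approximator} and then perturb the (abstract) continuous coefficient functions produced by that theorem into the prescribed universal-approximator classes $\mathcal{F}_{1},\mathcal{F}_{2}$, keeping the extra error under control via the stability of the pointwise maximum.

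First I would fix a parameterized convex continuous $f:X\times U\to\mathbb{R}$ and $\epsilon>0$, and apply \autoref{thm:pma_is_a_parameterized_convex_universal_approximator} to obtain $I\in\mathbb{N}$, $a_{i}\in\mathcal{C}(X,\mathbb{R}^{m})$, $b_{i}\in\mathcal{C}(X,\mathbb{R})$ such that $f^{\text{PMA}}(x,u)=\max_{1\le i\le I}(\langle a_{i}(x),u\rangle+b_{i}(x))$ satisfies $\lVert f^{\text{PMA}}-f\rVert_{\infty}<\epsilon/2$. Since $U$ is compact, $R:=\sup_{u\in U}\lVert u\rVert<\infty$. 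Because $\mathcal{F}_{1}$ is dense in $\mathcal{C}(X,\mathbb{R}^{m})$ and $\mathcal{F}_{2}$ is dense in $\mathcal{C}(X,\mathbb{R})$, for each $i$ I can pick $\hat{a}_{i}\in\mathcal{F}_{1}$ with $\lVert\hat{a}_{i}-a_{i}\rVert_{\infty}<\tfrac{\epsilon}{4(R+1)}$ and $\hat{b}_{i}\in\mathcal{F}_{2}$ with $\lVert\hat{b}_{i}-b_{i}\rVert_{\infty}<\tfrac{\epsilon}{4}$, and set $\doublehat{f}(x,u):=\max_{1\le i\le I}(\langle\hat{a}_{i}(x),u\rangle+\hat{b}_{i}(x))$; continuity of the $\hat{a}_{i},\hat{b}_{i}$ gives $\doublehat{f}\in\mathcal{F}^{\text{PMA}}$.

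Next I would estimate $\lVert\doublehat{f}-f^{\text{PMA}}\rVert_{\infty}$. The key elementary facts are $\lvert\max_{i}p_{i}-\max_{i}q_{i}\rvert\le\max_{i}\lvert p_{i}-q_{i}\rvert$ (so the perturbation errors do not accumulate across the $I$ affine pieces) and the Cauchy--Schwarz bound $\lvert\langle\hat{a}_{i}(x)-a_{i}(x),u\rangle\rvert\le\lVert\hat{a}_{i}(x)-a_{i}(x)\rVert\,\lVert u\rVert\le R\lVert\hat{a}_{i}-a_{i}\rVert_{\infty}$. Combining these for every $(x,u)\in X\times U$,
\[
\lvert\doublehat{f}(x,u)-f^{\text{PMA}}(x,u)\rvert
\le\max_{1\le i\le I}\big(R\lVert\hat{a}_{i}-a_{i}\rVert_{\infty}+\lVert\hat{b}_{i}-b_{i}\rVert_{\infty}\big)
<\frac{R}{R+1}\cdot\frac{\epsilon}{4}+\frac{\epsilon}{4}<\frac{\epsilon}{2},
\]
and a final triangle inequality yields $\lVert\doublehat{f}-f\rVert_{\infty}\le\lVert\doublehat{f}-f^{\text{PMA}}\rVert_{\infty}+\lVert f^{\text{PMA}}-f\rVert_{\infty}<\epsilon$.

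The heavy lifting — the continuous selection of subdifferential mappings and the actual construction of a parameterized-convex PMA approximant — is already carried out in \autoref{thm:eps_selection_of_subdiff_mapping} and \autoref{thm:pma_is_a_parameterized_convex_universal_approximator}, so no deep obstacle remains. The only genuine point to watch is that composing the $\max$ with perturbed affine terms could a priori amplify the coefficient error; this is precisely neutralized by the $1$-Lipschitz stability of the pointwise maximum together with the compactness of $U$ bounding $\lVert u\rVert$, which is why the two denominators $4(R+1)$ and $4$ suffice.
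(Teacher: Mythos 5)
Your proof is correct and follows essentially the same route as the paper's: invoke \autoref{thm:pma_is_a_parameterized_convex_universal_approximator} for an $\epsilon/2$-accurate PMA, approximate each $a_{i},b_{i}$ by $\hat{a}_{i}\in\mathcal{F}_{1},\hat{b}_{i}\in\mathcal{F}_{2}$, and control the resulting error through the $1$-Lipschitzness of the pointwise maximum plus Cauchy--Schwarz before a final triangle inequality. The only cosmetic difference is that you bound $\lVert u\rVert$ by $R=\sup_{u\in U}\lVert u\rVert$ (with the harmless $+1$ safeguard) where the paper uses $\mathrm{diam}(U)$ in its tolerance $\frac{\epsilon}{4\,\mathrm{diam}(U)}$ --- if anything your choice is the more careful normalization.
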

\begin{proof}
    By \autoref{thm:pma_is_a_parameterized_convex_universal_approximator},
    $\forall \epsilon > 0$, $\exists \hat{f} \in \mathcal{F}^{\text{PMA}}$
    such that $\lVert \hat{f} - f \rVert_{\infty} < \epsilon / 2$
    where the PMA network $\hat{f}$ is defined as in Eq. \eqref{eq:pma_and_plse}.
    Additionally,
    since $\mathcal{F}_{1}$ and $\mathcal{F}_{2}$ are ordinary universal approximators on $X$ to $\mathbb{R}^{m}$ and $\mathbb{R}$, respectively,
    given $\epsilon > 0$, $a_{i} \in \mathcal{C}(X, \mathbb{R}^{m})$, $b_{i} \in \mathcal{C}(X, \mathbb{R})$ for $i = 1, \ldots, I$,
    there exist $\hat{a}_{i} \in \mathcal{F}_{1}$, $\hat{b}_{i} \in \mathcal{F}_{2}$ such that
    $\lVert \hat{a}_{i} - a_{i} \rVert_{\infty} < \frac{\epsilon}{4 \text{diam}(U)}$ and $\lVert \hat{b}_{i} - b_{i} \rVert_{\infty} < \frac{\epsilon}{4} $ for $i = 1, \ldots, I$.
    Then,
    \begin{equation}
        \lVert \doublehat{f} - f \rVert_{\infty}
        \leq \lVert \doublehat{f} - \hat{f} \rVert_{\infty}
        + \lVert \hat{f} - f \rVert_{\infty}
        < \frac{\epsilon}{2} + \frac{\epsilon}{2} = \epsilon
        ,
    \end{equation}
    \thirdrev{
        since $
        \lvert \doublehat{f}(x, u) - \hat{f}(x, u) \rvert
        = \lvert
        \max_{1 \leq i \leq I} \{
            \langle \hat{a}_{i}(x), u \rangle + \hat{b}_{i}(x)
        \}
        - \max_{1 \leq i \leq I} \{
            \langle a_{i}(x), u \rangle + b_{i}(x)
        \}
        \rvert
        \leq \max_{1 \leq i \leq I} 
        \lvert
        \langle \hat{a}_{i}(x) - a_{i}(x), u \rangle
        + (\hat{b}_{i}(x) - b_{i}(x))
        \rvert
        \leq \frac{\epsilon}{4 \text{diam}(U)} \text{diam}(U) + \epsilon / 4
        = \epsilon / 2
        $,
        which concludes the proof.
    }
\end{proof}

\begin{corollary}[PLSE with ordinary universal approximators is a parameterized convex universal approximator]
    \label{cor:plse_with_ordinary_universal_approximator}
    Let $\mathcal{F}_{1}$ and $\mathcal{F}_{2}$ be ordinary universal approximators on $X$ to $\mathbb{R}^{m}$ and
    $\mathbb{R}$, respectively.
    Given parameterized convex continuous function $f: X \times U \to \mathbb{R}$,
    for any $\epsilon > 0$, there exist $I \in \mathbb{N}$ and $\overline{T} > 0$
    such that for all $T \in (0, \overline{T})$, there exist $\hat{a}_{i} \in \mathcal{F}_{1}$ and $\hat{b}_{i} \in \mathcal{F}_{2}$
    for $i = 1, \ldots, I$ such that
    $\lVert \doublehat{f} - f  \rVert_{\infty} < \epsilon$ where
    $\doublehat{f}(x, u) := T \log \left( \sum_{i=1}^{I} \exp \left( \frac{ \langle \hat{a}_{i}(x), u \rangle + \hat{b}_{i}}{T} \right) \right) \in \mathcal{F}_{T}^{PLSE}$.
\end{corollary}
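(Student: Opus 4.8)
The plan is to mirror the proof of \autoref{cor:plse_is_a_parameterized_convex_universal_approximator}, but starting from \autoref{thm:pma_with_ordinary_universal_approximator} rather than \autoref{thm:pma_is_a_parameterized_convex_universal_approximator}. First, given $\epsilon > 0$, apply \autoref{thm:pma_with_ordinary_universal_approximator} with accuracy $\epsilon/2$ to obtain $I \in \mathbb{N}$, $\hat{a}_{i} \in \mathcal{F}_{1}$, and $\hat{b}_{i} \in \mathcal{F}_{2}$ for $i = 1, \ldots, I$ such that the PMA network $\doublehat{f}^{\text{PMA}}(x, u) := \max_{1 \leq i \leq I} \{ \langle \hat{a}_{i}(x), u \rangle + \hat{b}_{i}(x) \}$ satisfies $\lVert \doublehat{f}^{\text{PMA}} - f \rVert_{\infty} < \epsilon/2$. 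Crucially, $I$ is fixed at this point.

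Second, set $\overline{T} := \frac{\epsilon}{2 \log I}$ and, for any $T \in (0, \overline{T})$, let $\doublehat{f}^{\text{PLSE}}$ be the PLSE network built from exactly the same $\hat{a}_{i}$'s and $\hat{b}_{i}$'s at temperature $T$, i.e., $\doublehat{f}^{\text{PLSE}}(x, u) := T \log \left( \sum_{i=1}^{I} \exp \left( \frac{\langle \hat{a}_{i}(x), u \rangle + \hat{b}_{i}(x)}{T} \right) \right)$; since each $\hat{a}_{i}$ and $\hat{b}_{i}$ is continuous (being an element of an ordinary universal approximator), indeed $\doublehat{f}^{\text{PLSE}} \in \mathcal{F}_{T}^{\text{PLSE}}$. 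Applying \autoref{thm:diff_PMA_and_PLSE} to this pair of networks gives $0 \leq \doublehat{f}^{\text{PLSE}}(x, u) - \doublehat{f}^{\text{PMA}}(x, u) \leq T \log I$ for all $(x, u) \in X \times U$, hence $\lVert \doublehat{f}^{\text{PLSE}} - \doublehat{f}^{\text{PMA}} \rVert_{\infty} \leq T \log I < \overline{T} \log I = \epsilon/2$.

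Third, combine the two estimates via the triangle inequality: $\lVert \doublehat{f}^{\text{PLSE}} - f \rVert_{\infty} \leq \lVert \doublehat{f}^{\text{PLSE}} - \doublehat{f}^{\text{PMA}} \rVert_{\infty} + \lVert \doublehat{f}^{\text{PMA}} - f \rVert_{\infty} < \epsilon/2 + \epsilon/2 = \epsilon$, which holds for every $T \in (0, \overline{T})$ and finishes the argument.

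I do not expect a genuine obstacle here: the statement is essentially a bookkeeping composition of \autoref{thm:pma_with_ordinary_universal_approximator} and \autoref{thm:diff_PMA_and_PLSE}. The only point needing care is the quantifier order — $I$ must be produced first by the PMA approximation step, and only then may $\overline{T}$ be defined in terms of that $I$, so that $\overline{T}$ is a bona fide positive constant; the corollary is phrased accordingly. One should also note the degenerate case $I = 1$ (so $\log I = 0$): then $\doublehat{f}^{\text{PLSE}} \equiv \doublehat{f}^{\text{PMA}}$ for every $T > 0$, so any $\overline{T} > 0$ works and the temperature bound is vacuous.
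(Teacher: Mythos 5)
Your proposal is correct and follows exactly the route the paper intends: the paper omits this proof, stating only that it follows from \autoref{thm:pma_with_ordinary_universal_approximator} and \autoref{thm:diff_PMA_and_PLSE} in the same manner as the proof of \autoref{cor:plse_is_a_parameterized_convex_universal_approximator}, which is precisely your two-step composition with $\overline{T} = \frac{\epsilon}{2\log I}$ and the triangle inequality. Your remark on the quantifier order (fixing $I$ before defining $\overline{T}$) and the degenerate case $I=1$ is a small but welcome addition beyond what the paper spells out.
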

\begin{proof}
    The proof can be shown from
    \autoref{thm:pma_with_ordinary_universal_approximator}
    and \autoref{thm:diff_PMA_and_PLSE}
    in a similar manner as the proof of
    \autoref{cor:plse_is_a_parameterized_convex_universal_approximator}
    and is omitted here.
\end{proof}

\thirdrev{
    In \autoref{thm:pma_with_ordinary_universal_approximator} and \autoref{cor:plse_with_ordinary_universal_approximator},
}
it is proven that PMA and PLSE networks are shape-preserving
universal approximators for parameterized convex continuous functions on the product of condition and decision spaces.
In practice, the decision space may depend on a given condition.
The following corollary shows that a simple modification can make
the above results applicable to conditional decision space settings.
\begin{corollary}[Extension to conditional decision space]
    Let $\secondrev{U_{cond}}: X \to \mathbb{R}^{m}$ be a mapping of conditional decision space such that
    $\secondrev{U_{cond}}(x)$ is convex compact for all $x \in X$.
    Suppose that there exists a convex compact subspace $U$ of $\mathbb{R}^{m}$ such that
    $\cup_{x \in X} \secondrev{U_{cond}}(x) \subset U$.
    Then,
    \autoref{thm:pma_is_a_parameterized_convex_universal_approximator}
    can be replaced by the conditional decision space setting.
\end{corollary}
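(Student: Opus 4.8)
The plan is to reduce the claim to \autoref{thm:pma_is_a_parameterized_convex_universal_approximator} by extending the given function off the graph of $U_{cond}$. Write $D := \{ (x,u) \in X \times U \mid u \in U_{cond}(x) \}$, and let $f : D \to \mathbb{R}$ be continuous with $f(x,\cdot)$ convex on $U_{cond}(x)$ for every $x$ (this is what parameterized convexity means in the conditional setting). Under the natural regularity hypotheses --- which is the \emph{mild assumption} alluded to in \autoref{sec:main_result}, namely that $U_{cond}$ has closed graph and is continuous as a set-valued map with convex compact values --- the set $D$ is a compact subset of the compact set $X \times U$, so $f$ is bounded and uniformly continuous on $D$. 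I would first construct an extension $\tilde f : X \times U \to \mathbb{R}$ with $\tilde f|_{D} = f$, with $\tilde f(x,\cdot)$ convex on $U$ for every $x \in X$, and with $\tilde f$ jointly continuous. Granting this, $\tilde f \in C^{\text{p-conv}}(X \times U)$, and \autoref{thm:pma_is_a_parameterized_convex_universal_approximator} applied to $\tilde f$ gives, for each $\epsilon > 0$, a PMA network $\hat f \in \mathcal{F}^{\text{PMA}}$ with $\lVert \hat f - \tilde f \rVert_{\infty} < \epsilon$ over $X \times U$; restricting to $D$ yields $\lVert \hat f - f \rVert_{\infty} < \epsilon$ over $D$, and since $\hat f(x,\cdot)$ is convex on all of $U$ it is in particular convex on the convex set $U_{cond}(x)$, which is exactly the conditional-decision-space statement. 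The PLSE analogue then follows by combining with \autoref{thm:diff_PMA_and_PLSE} exactly as in \autoref{cor:plse_is_a_parameterized_convex_universal_approximator}.

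For the extension itself I would use the fiberwise McShane--Whitney construction. Assuming in addition that $f(x,\cdot)$ is $L$-Lipschitz uniformly in $x$ --- the hypothesis already in force in \autoref{thm:eps_selection_of_subdiff_mapping} --- set
\[
    \tilde f(x, u) := \inf_{v \in U_{cond}(x)} \left( f(x, v) + L \lVert u - v \rVert \right), \qquad (x,u) \in X \times U .
\]
For fixed $x$, $\tilde f(x,\cdot)$ is the infimal convolution of the convex function $f(x,\cdot)$ (set to $+\infty$ off $U_{cond}(x)$) with the convex function $L\lVert \cdot \rVert$, hence convex on $U$; agreement $\tilde f(x,u) = f(x,u)$ for $u \in U_{cond}(x)$ and the $L$-Lipschitz property of $\tilde f(x,\cdot)$ are the usual McShane estimates, using convexity of $U_{cond}(x)$ and the uniform Lipschitz bound.

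The main obstacle is the joint continuity of $\tilde f$, and this is precisely where continuity of the set-valued map $U_{cond}$ is needed: $\tilde f$ is a parametric infimum of the jointly continuous integrand $(x,u,v) \mapsto f(x,v) + L\lVert u - v \rVert$ over the constraint set $U_{cond}(x)$, which varies continuously (in the Hausdorff sense) with $x$ and is compact, so a Berge-type maximum-theorem argument gives that $(x,u) \mapsto \tilde f(x,u)$ is continuous on $X \times U$. I expect verifying this continuity --- together with, if one wishes to drop the uniform-Lipschitz hypothesis, a preliminary inner-approximation step analogous to the one in the proof of \autoref{thm:pma_is_a_parameterized_convex_universal_approximator} that first replaces $f$ on $D$ by a uniformly Lipschitz parameterized convex function within $\epsilon/2$ of it --- to be the only nonroutine part of the argument; the remainder is a direct appeal to \autoref{thm:pma_is_a_parameterized_convex_universal_approximator} and \autoref{thm:diff_PMA_and_PLSE}.
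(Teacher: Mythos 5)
Your proposal is correct under the hypotheses you add, but it follows a genuinely different --- and much heavier --- route than the paper. The paper reads the corollary as a pure restriction statement: the target $f$ is taken to be parameterized convex continuous on all of $X \times U \supset \cup_{x \in X} U_{cond}(x)$, \autoref{thm:pma_is_a_parameterized_convex_universal_approximator} supplies a PMA network $\hat{f}$ with $\lVert \hat{f} - f \rVert_{\infty} < \epsilon$ on $X \times U$, and \emph{a fortiori} the same bound holds at every pair $(x, u)$ with $u \in U_{cond}(x)$; the proof is two lines and uses no property of $U_{cond}$ beyond the existence of the enclosing compact convex $U$. You instead treat $f$ as given only on the graph $D$ of $U_{cond}$ and construct a parameterized convex continuous extension by a fiberwise McShane/infimal-convolution formula, then invoke \autoref{thm:pma_is_a_parameterized_convex_universal_approximator} for the extension and restrict back to $D$. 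That proves a stronger statement (approximation of a function defined only on the conditional sets), but it requires hypotheses the corollary does not contain: Hausdorff continuity --- in particular lower hemicontinuity --- of the set-valued map $U_{cond}$, without which the Berge-type argument fails and $\tilde{f}$ need not be jointly continuous, and a uniform Lipschitz bound on $f(x, \cdot)$ (or your proposed preliminary regularization, which would have to be carried out on $D$ rather than on $X \times U$). So, judged as a proof of the corollary as stated, your argument silently strengthens the hypotheses and solves a harder problem than intended; judged on its own terms, the extension route is sound and is exactly what one would need if the target function were genuinely given only on $\text{Graph}(U_{cond})$, a case the paper's one-line restriction argument does not address.
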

\begin{proof}
    By \autoref{thm:pma_is_a_parameterized_convex_universal_approximator}, PMA is a parameterized convex universal approximator on $U$,
    A fortiori, PMA is a parameterized convex universal approximator on $\cup_{x \in X} \secondrev{U_{cond}}(x) \subset U$ from the assumption.
\end{proof}
Indeed,
it is straightforward to show that this extension
can easily be applied to other results, e.g.,
\thirdrev{
    \autoref{cor:plse_is_a_parameterized_convex_universal_approximator},
    \autoref{thm:pma_with_ordinary_universal_approximator},
    and \autoref{cor:plse_with_ordinary_universal_approximator}.
}

\thirdrev{
    The proofs of \autoref{thm:eps_selection_of_subdiff_mapping}
    and \autoref{thm:pma_is_a_parameterized_convex_universal_approximator}
    are the main contributions of this study although
    they are described in \hyperref[sec:proof_of_eps_selection_of_subdiff_mapping]{Appendix A}
    and \hyperref[sec:proof_of_pma_is_a_parameterized_convex_universal_approximator]{Appendix B},
    respectively, for the readability.
    The proofs can be summarized as follows.
    \hyperref[sec:proof_of_pma_is_a_parameterized_convex_universal_approximator]{Appendix B}
    proves \autoref{thm:pma_is_a_parameterized_convex_universal_approximator}
    first under the assumptions of Lipschitzness and nonempty interior of $U$
    and completes the proof by relaxing the assumptions.
    \autoref{thm:eps_selection_of_subdiff_mapping} provides
    tools for the relaxation of the Lipschitzness assumption;
    \hyperref[sec:proof_of_eps_selection_of_subdiff_mapping]{Appendix A}
    builds continuous selections based on multivalued analysis.
}

\begin{remark}[Comparison to the existing convex universal approximators]
    Since the projection of convex functions is also convex,
    the existing convex universal approximators, e.g., MA and LSE networks \cite{calafioreLogSumExpNeuralNetworks2020},
    are also applicable to decision-making problems.
    That is, with appropriate dimension modification,
    Eq. \eqref{eq:ma_and_lse} changes to
    \begin{equation*}
        \begin{split}
            f^{\text{MA}}(x, u)
        &= \max_{1 \leq i \leq I} \left( \langle a_{i}, z \rangle + b_{i} \right),
        \\
        f^{\text{LSE}}(x, u)
        &= T \log \left( \sum_{i=1}^{I} \exp \left( \frac{\langle a_{i}, z \rangle + b_{i}}{T} \right) \right),
        \end{split}
    \end{equation*}
    where $z = [x^{\intercal}, u^{\intercal}]^{\intercal}$.
    Examples of this approach include
    finite-horizon Q-learning using LSE
    \cite{calafioreEfficientModelFreeQFactor2020}.
    Compared to this projection approach,
    PMA and PLSE have several advantages:
    i) PMA and PLSE are not required to restrict the condition space $X$
    to be a convex compact space, while MA and LSE are,
    and ii) PMA and PLSE can be constructed by utilizing deep networks,
    e.g., multilayer FNN,
    for $\hat{a}_{i}$'s and $\hat{b}_{i}$'s in
    \autoref{thm:pma_with_ordinary_universal_approximator}
    and \autoref{cor:plse_with_ordinary_universal_approximator}.
Building a network with deep networks would be
    practically attractive \firstrev{considering the successful application}
    of deep learning in numerous fields
    \cite{chenNeuralOrdinaryDifferential2018,goodfellowGenerativeAdversarialNetworks2014,vaswaniAttentionAllYou2017}.
\end{remark}

\begin{remark}[Comparison between PMA and PLSE]
    As LSE networks can be viewed as smoothed MA networks
    by replacing the pointwise supremum with the log-sum-exp operator,
    PLSE networks can likewise be viewed as smoothed PMA networks with respect to
    the decision variable $u$.
    The choice of networks may depend on
    tasks, domain knowledge, convex optimization solvers, etc.
\end{remark}

\begin{remark}[Data normalization]
    Normalization of data may be critical for
    the training and inference performance of PLSE.
    Note that LSE can also be normalized by its temperature parameter
    as if all LSE networks have the same temperature,
    usually set to be one, i.e., $T = 1$
    \cite{calafioreLogSumExpNeuralNetworks2020}.
    It should be pointed out that the temperature normalization
    is also applicable for PLSE in the same manner as in LSE.
\end{remark}

\section{Numerical Simulation}
\label{sec:numerical_simulation}

\begin{table*}[b!]
    \caption{Simulation settings}
    \label{table:simulation_settings}
    \centering
    \begin{tabular}{|| c | c | c ||}
        \hline
        Parameters & Values & Remarks \\
        \hline \hline
        \multirow{2}{*}{Hidden layer width} & $(64, 64)$ & FNN, PMA, PLSE, $u$-path of PICNN\\
                                                  & $(64, 64, 64)$ & $x$-path of PICNN \\
        \hline
        No. of parameter pairs, $I$ & $30$ & MA, LSE, PMA, and PLSE (Eqs. \eqref{eq:ma_and_lse}, \eqref{eq:pma_and_plse}) \\
        \hline
        Activation function & $\max(0.01x, x)$ & LeakyReLU (elementwise)\\
        \hline
        Temperature, $T$ & 0.1 & PLSE (Eq. \eqref{eq:pma_and_plse}) \\
        \hline
        Training epochs & $100$ & \\
        \hline
        Optimizer & ADAM \cite{kingmaAdamMethodStochastic2017} with learning rate of $10^{-3}$ & with projection for PICNN \cite{amosInputConvexNeural2017a} \\
        \hline
        Number of data points, $d$ & $5\text{,}000$ & train = $90$:$10$ \\
        \hline
        Parameter initialization & $w \sim U \left[-\frac{\sqrt{6}}{\sqrt{n_{\text{in}}+n_{\text{out}}}}, \frac{\sqrt{6}}{\sqrt{n_{\text{in}} + n_{\text{out}}}}\right]$ & Xavier initialization$^{1}$ \cite{glorotUnderstandingDifficultyTraining2010}  \\
        \hline
        \multicolumn{3}{l}{%
            \begin{minipage}{10cm}%
                $^{1}$: $n_{\text{in}} = 1$ for $a_{i}$'s and $b_{i}$'s in MA and LSE (Eq. \eqref{eq:ma_and_lse}).
            \end{minipage}%
        }\\
    \end{tabular}
\end{table*}
In this section,
a numerical simulation of the function approximation is performed to demonstrate
the proposed approximators' approximation capability, optimization accuracy,
and \firstrev{solving time of optimization for a given condition to perform decision-making}.
For the simulation of the proposed parameterized convex approximators,
a Julia \cite{bezansonJuliaFreshApproach2017} package,
ParametrisedConvexApproximators.jl\footnote{\href{https://github.com/JinraeKim/ParametrisedConvexApproximators.jl}{https://github.com/JinraeKim/ParametrisedConvexApproximators.jl}},
is \secondrev{developed in this study}.
\firstrev{All simulations were performed on a desktop with an AMD Ryzen\texttrademark{} 9 5900X and Julia v1.7.1.}

Several approximators will be compared:
FNN, MA, LSE, PICNN, PMA, and PLSE.
FNN is the most widely used class of neural networks and was proven to be an ordinary universal approximator for certain architectures.
MA and LSE are convex universal approximators \cite{calafioreLogSumExpNeuralNetworks2020}.
PICNN is a parameterized convex approximator proposed for decision-making problems,
mainly motivated by energy-based learning to perform inference via optimization \cite{amosInputConvexNeural2017a}.
The main characteristics of PICNN include its deep architecture recursively constructed
with \firstrev{two paths, i.e., $x$-path and $u$-path}.
Note that it has not been shown that PICNN is a parameterized convex universal approximator.

\begin{figure}[t!]
    \centering
    \includegraphics[width=.99\linewidth]{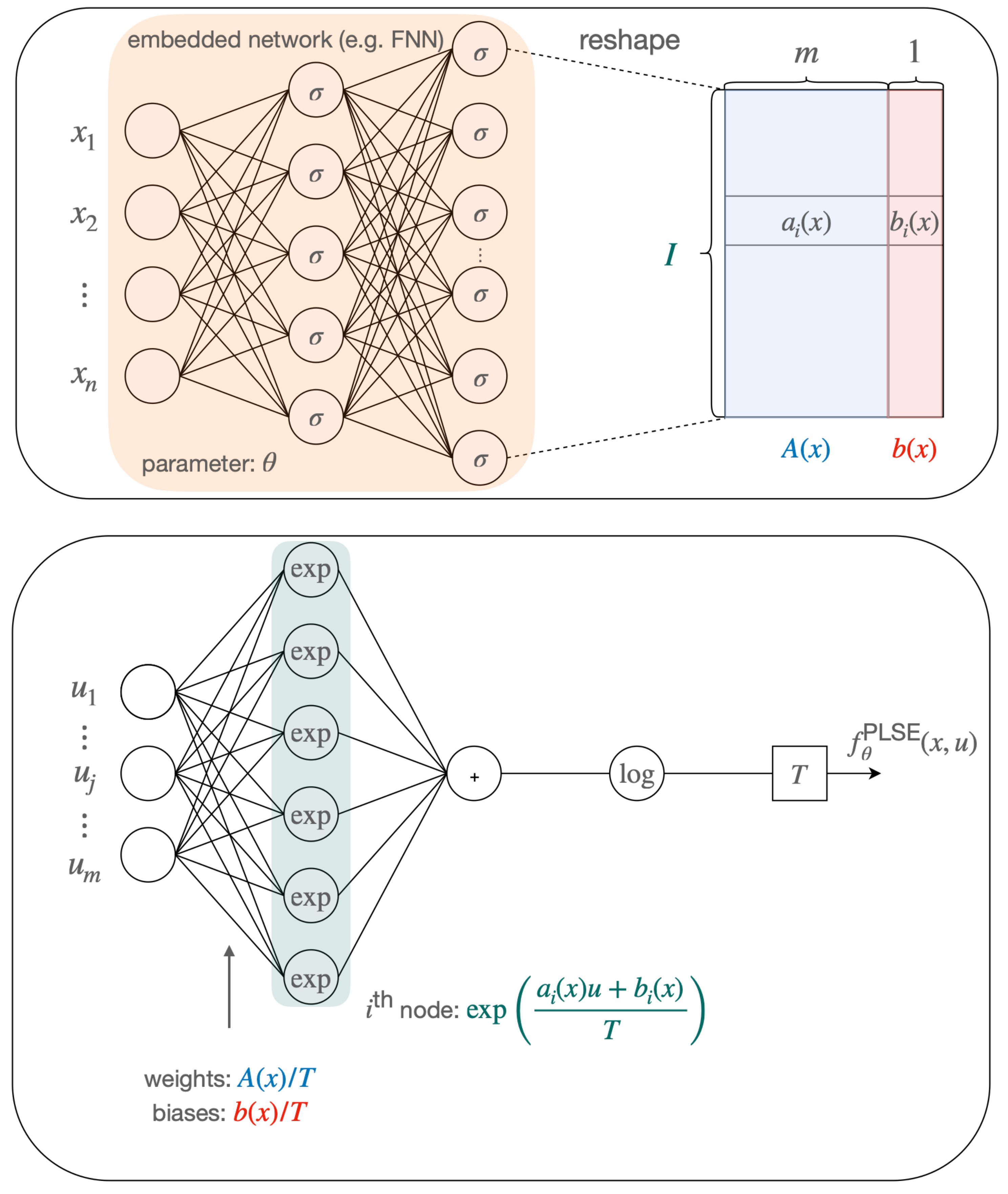}
    \caption{Illustration of the implemented PLSE network}
    \label{fig:plse_structure}
\end{figure}

The \firstrev{following} target function $f$ is chosen as a parameterized convex function,
\begin{equation}
    f(x, u) = -\frac{1}{2 n} x^{\intercal}x + \frac{1}{2 m} u^{\intercal}u
    .
\end{equation}
\autoref{fig:target_function} shows the target function for $(n, m) = (1, 1)$.

\begin{figure}[t!]
    \centering
    \includegraphics[width=.70\linewidth]{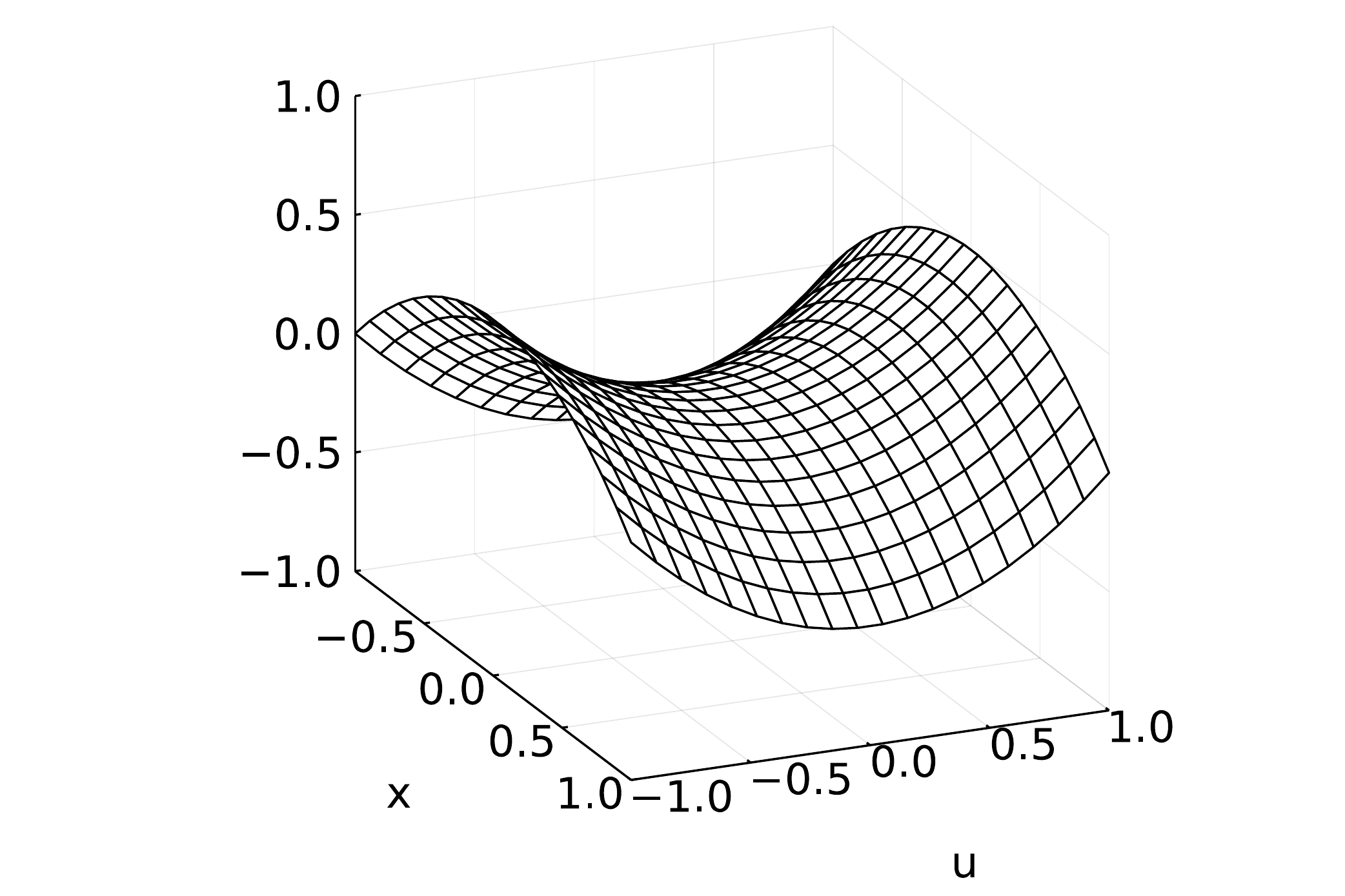}
    \caption{Target function}
    \label{fig:target_function}
\end{figure}

\begin{figure*}[t!]
    \centering
    \subfloat[FNN]{%
        \includegraphics[width=.33\linewidth]{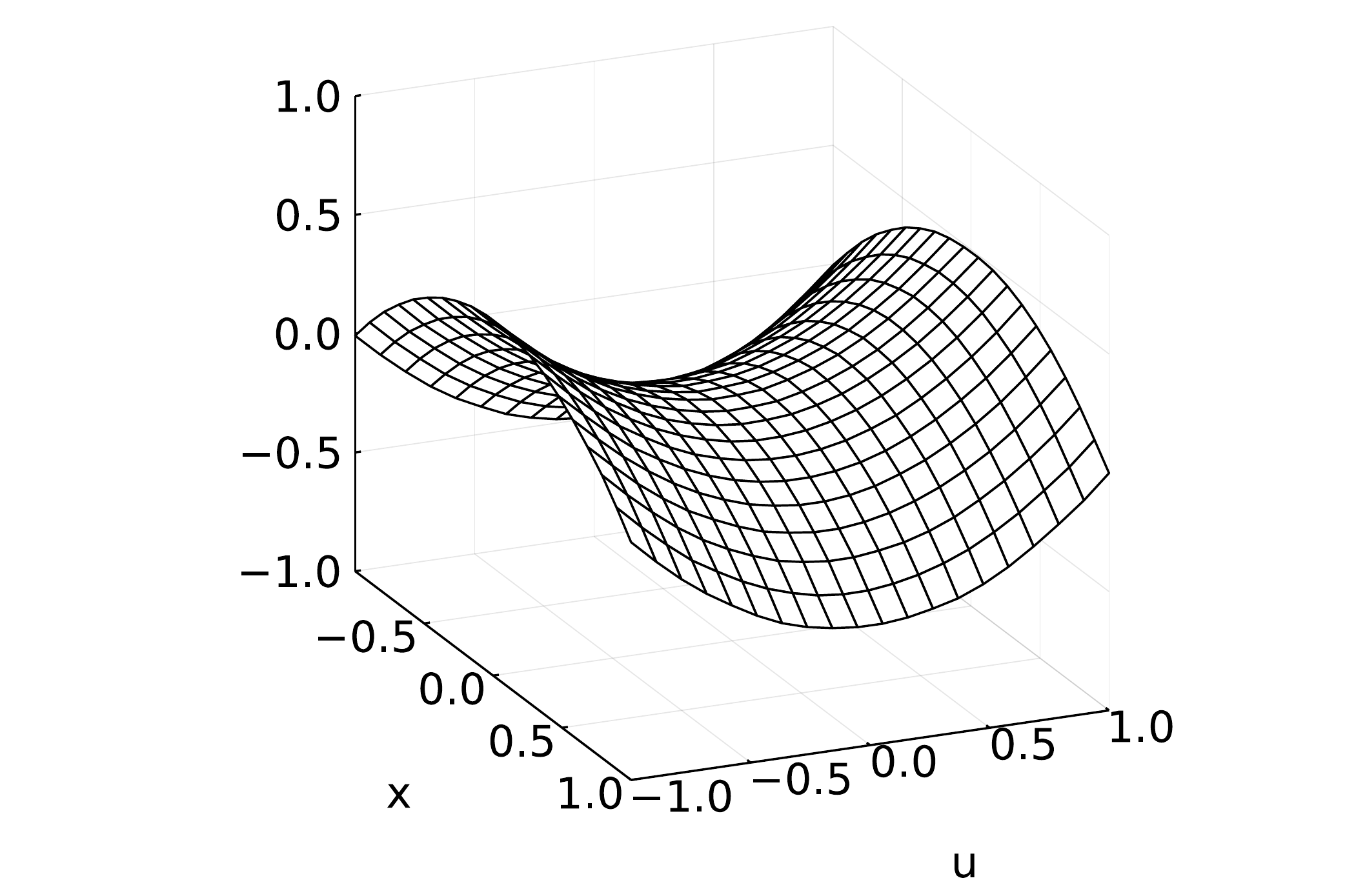}
    }
    \subfloat[MA]{%
        \includegraphics[width=.33\linewidth]{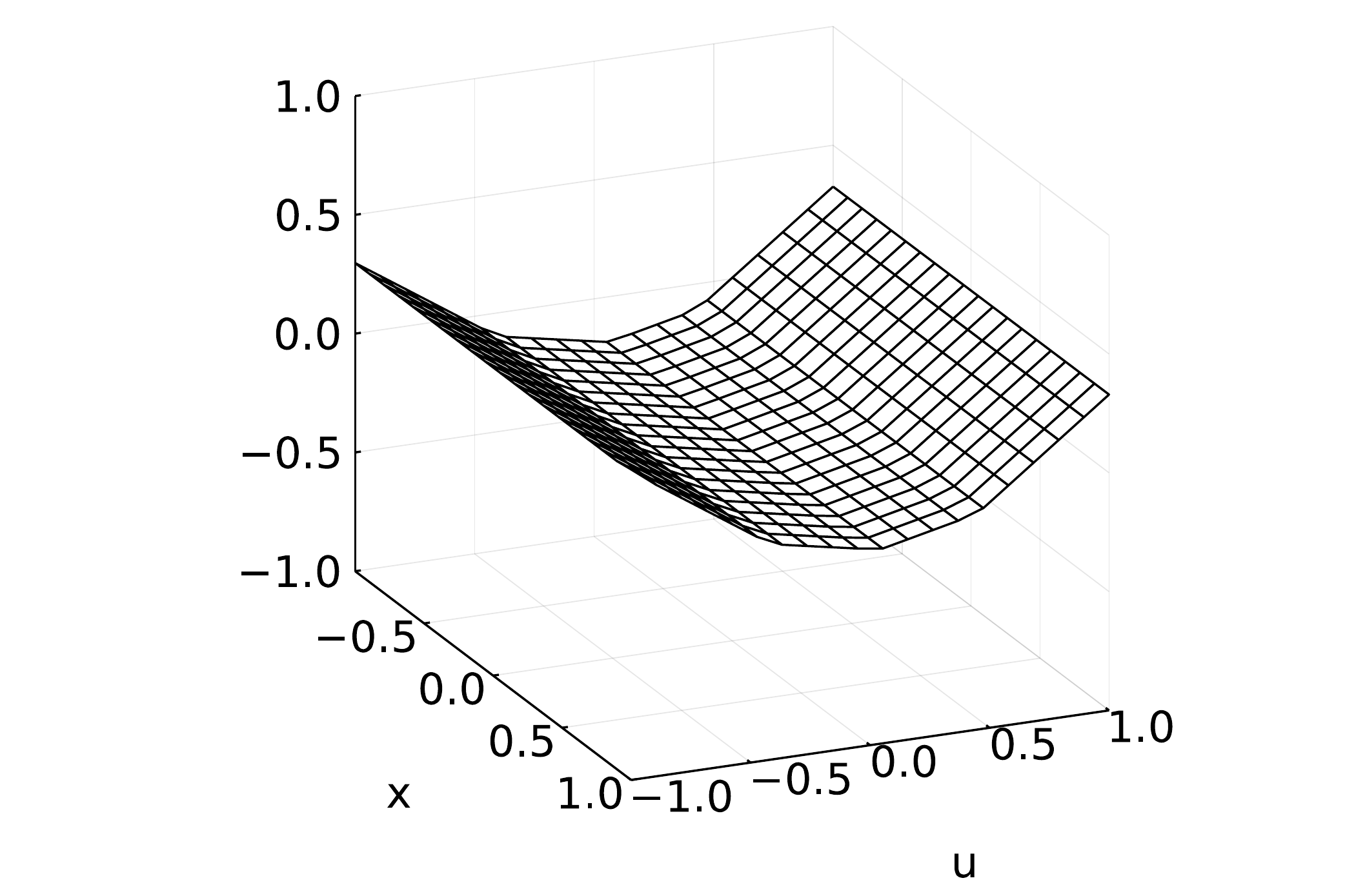}
    }
    \subfloat[LSE]{%
        \includegraphics[width=.33\linewidth]{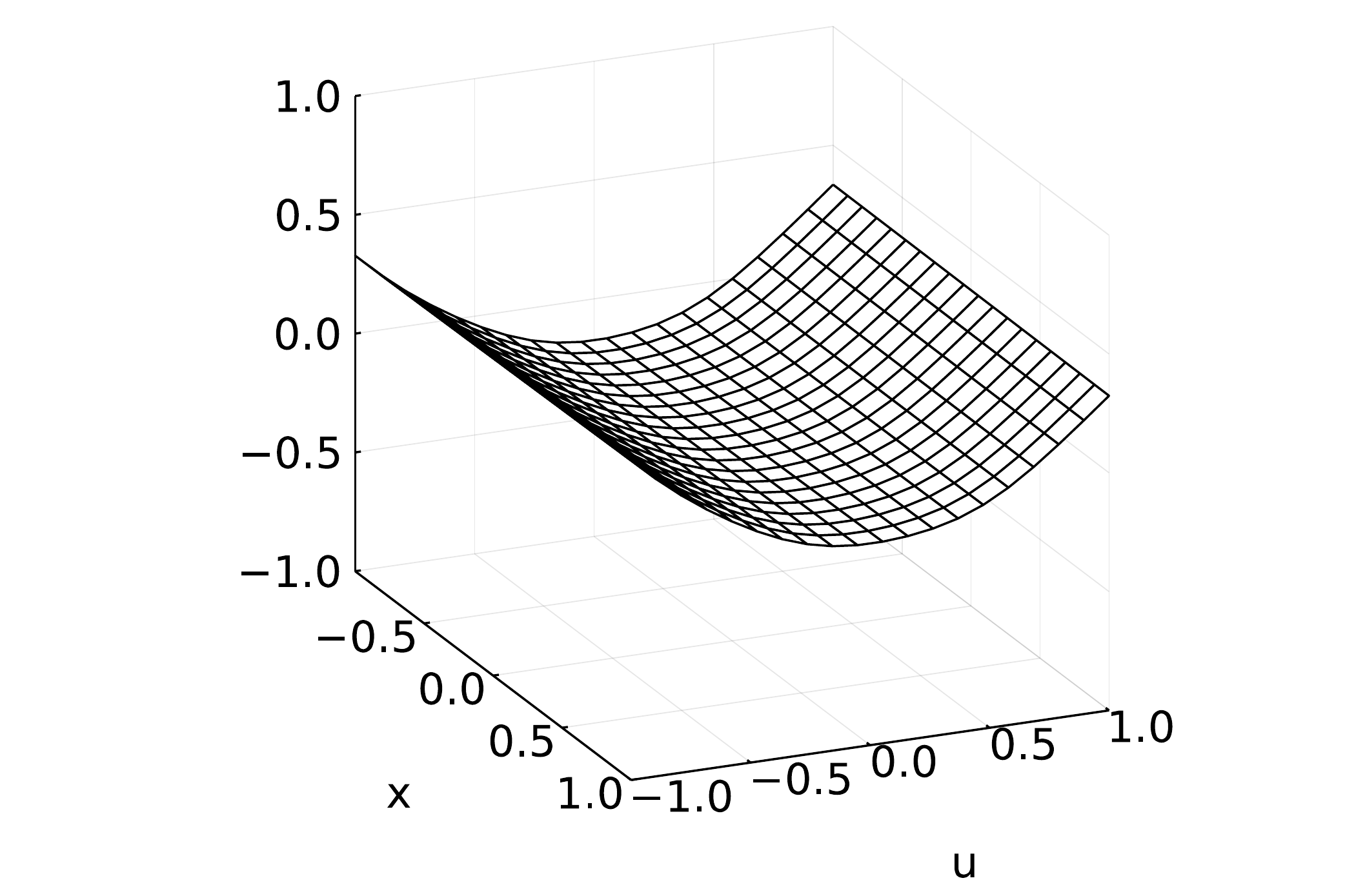}
    }

    \subfloat[PICNN]{%
        \includegraphics[width=.33\linewidth]{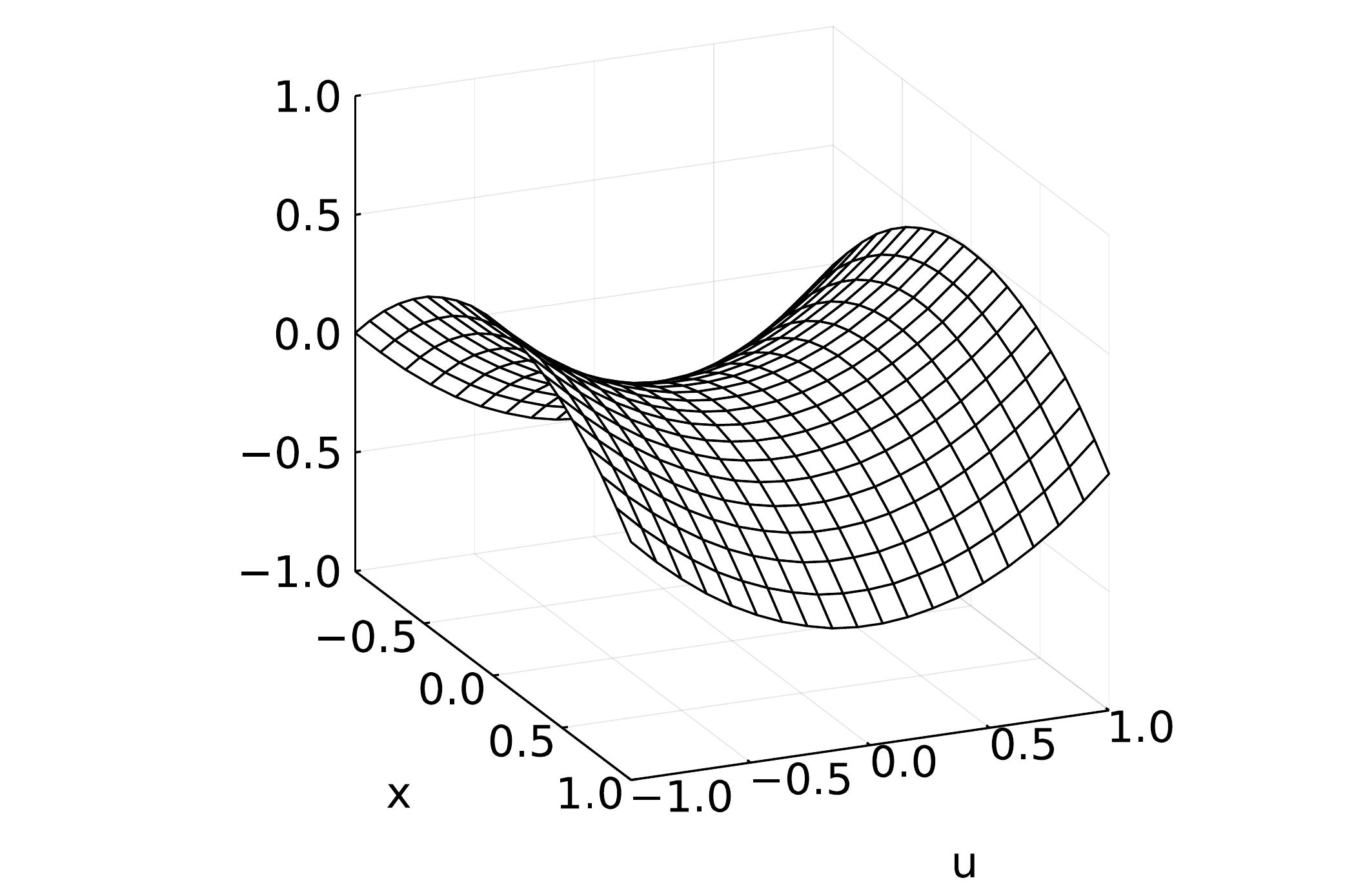}
    }
    \subfloat[PMA]{%
        \includegraphics[width=.33\linewidth]{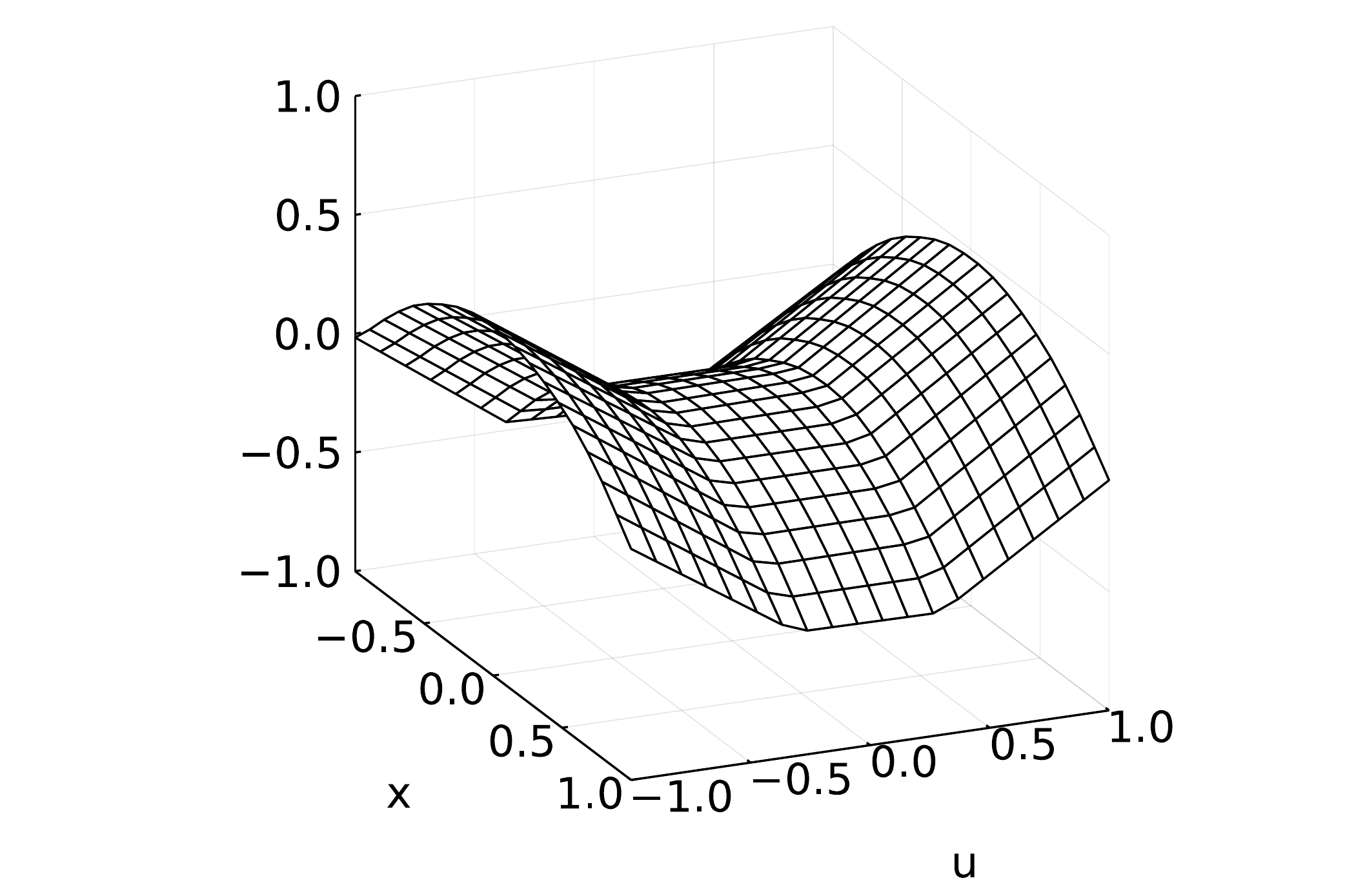}
    }
    \subfloat[PLSE]{%
        \includegraphics[width=.33\linewidth]{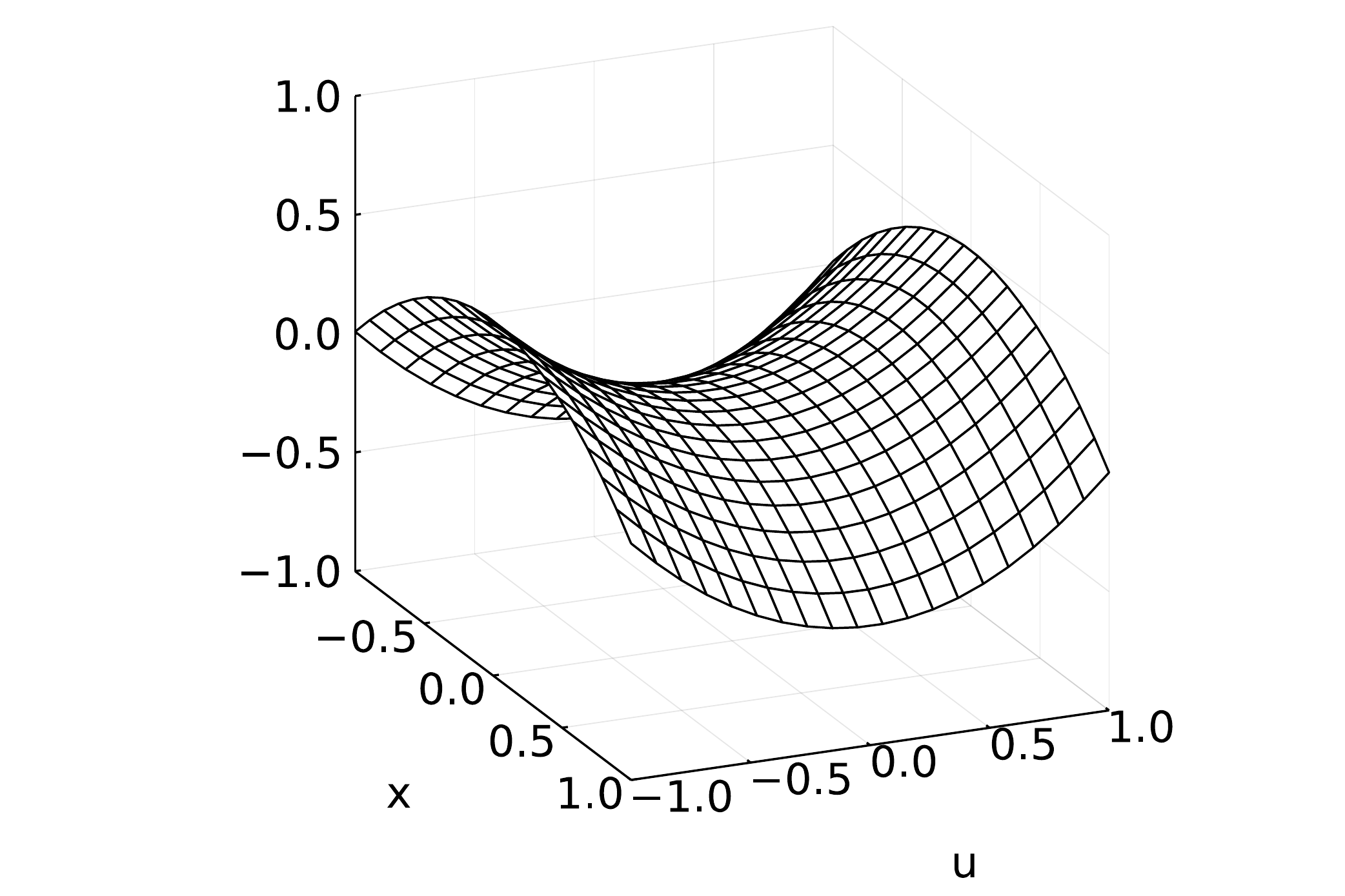}
    }
    \caption{Trained approximators}
    \label{fig:trained_approximators}
\end{figure*}

\begin{figure*}[t!]
    \centering
    \subfloat[Minimizer error $(n, m = 1, 1)$]{%
        \includegraphics[width=.33\linewidth]{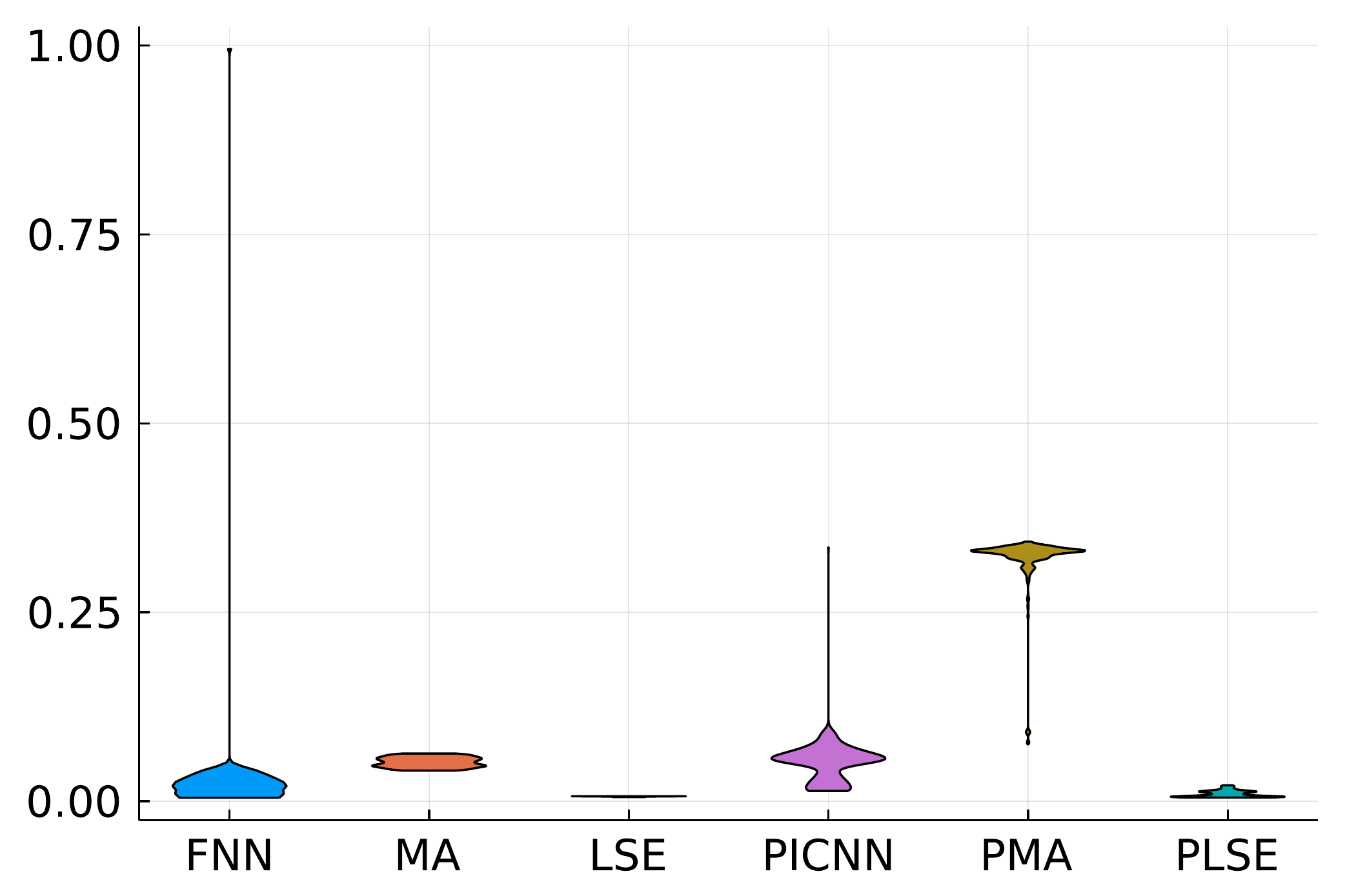}
    }
    \subfloat[Minimizer error $(n, m = 61, 20)$]{%
        \includegraphics[width=.33\linewidth]{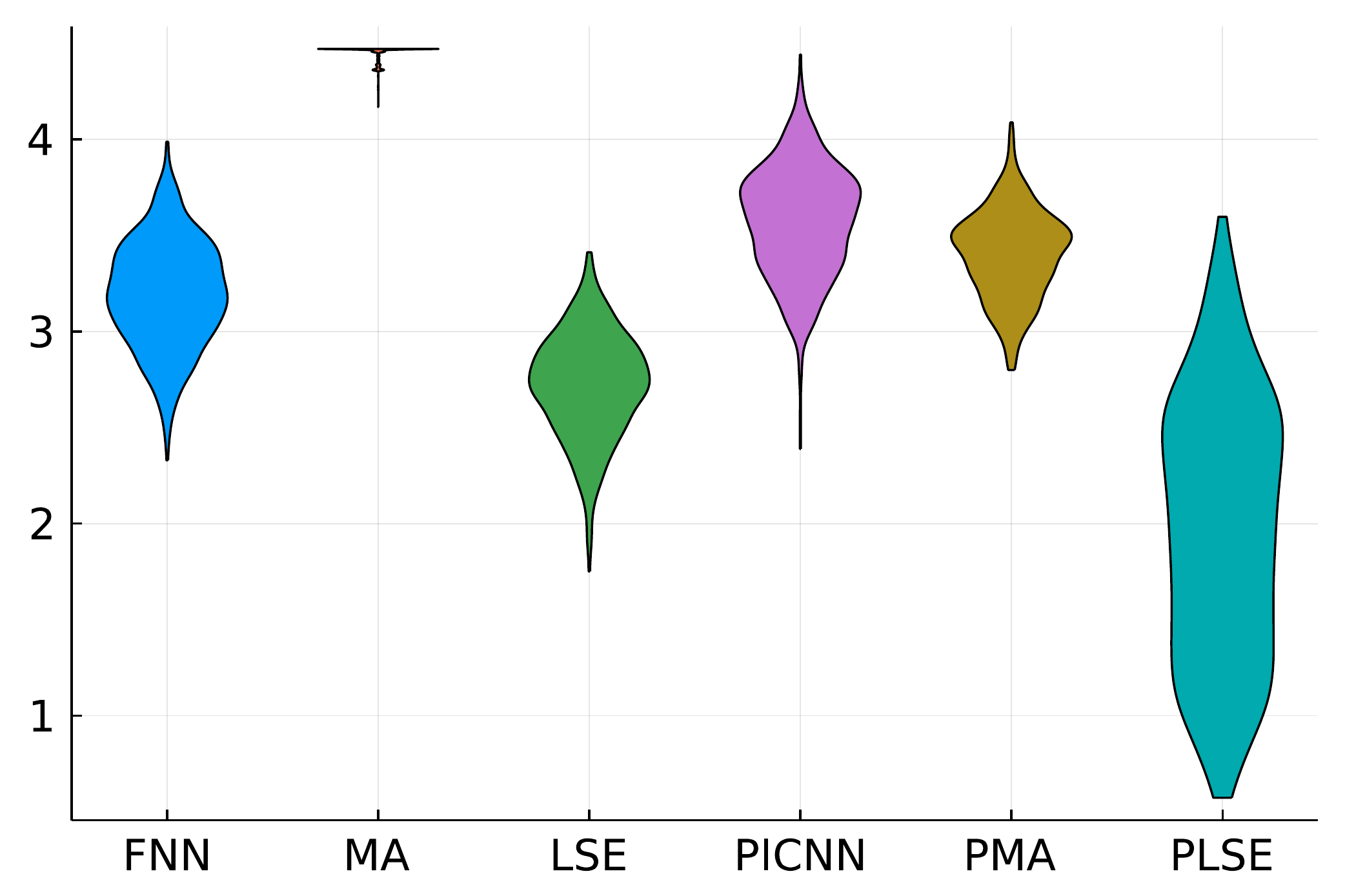}
    }
    \subfloat[Minimizer error $(n, m = 376, 17)$]{%
        \includegraphics[width=.33\linewidth]{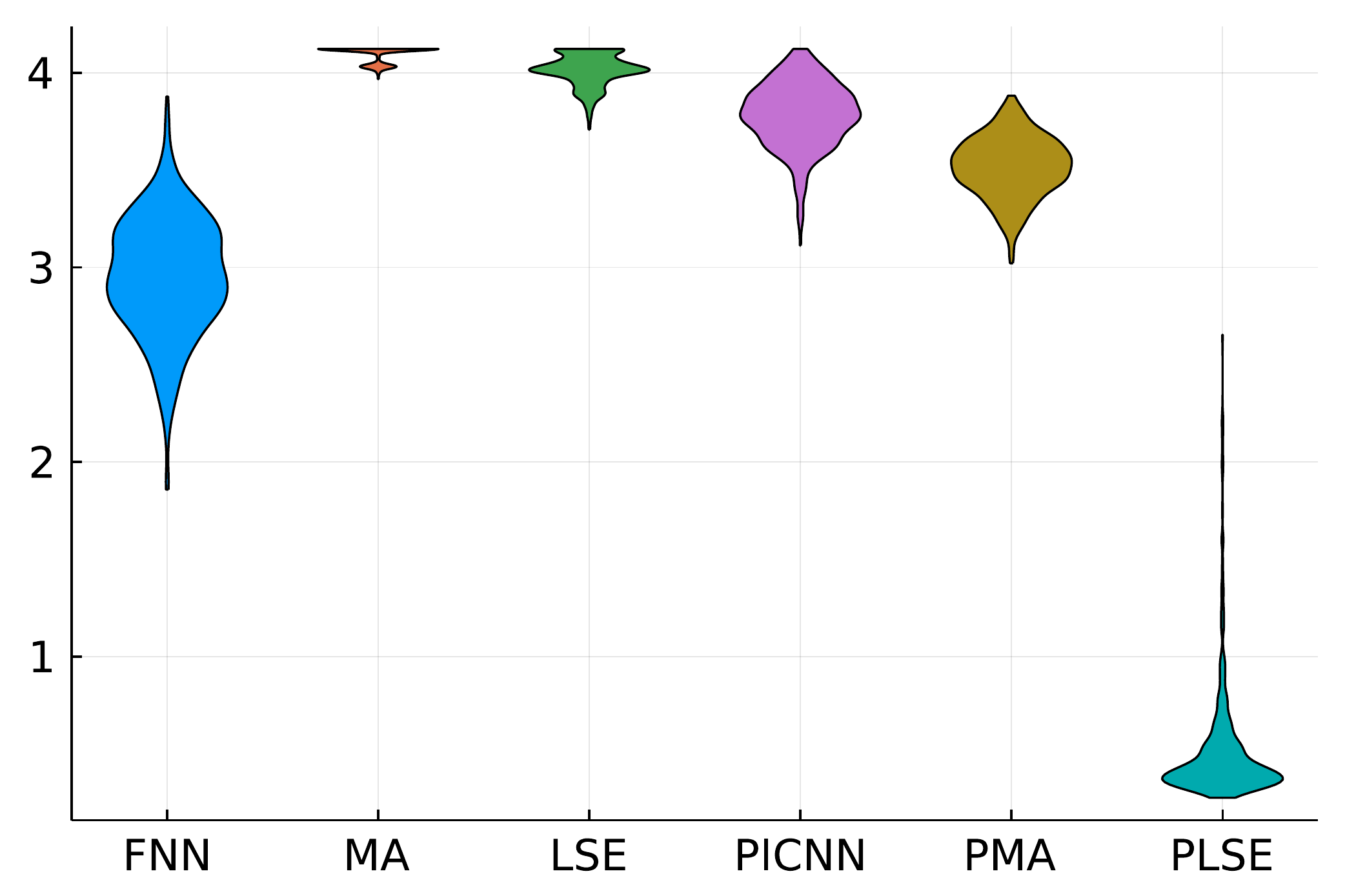}
    }

    \subfloat[Optimal value error $(n, m = 1, 1)$]{%
        \includegraphics[width=.33\linewidth]{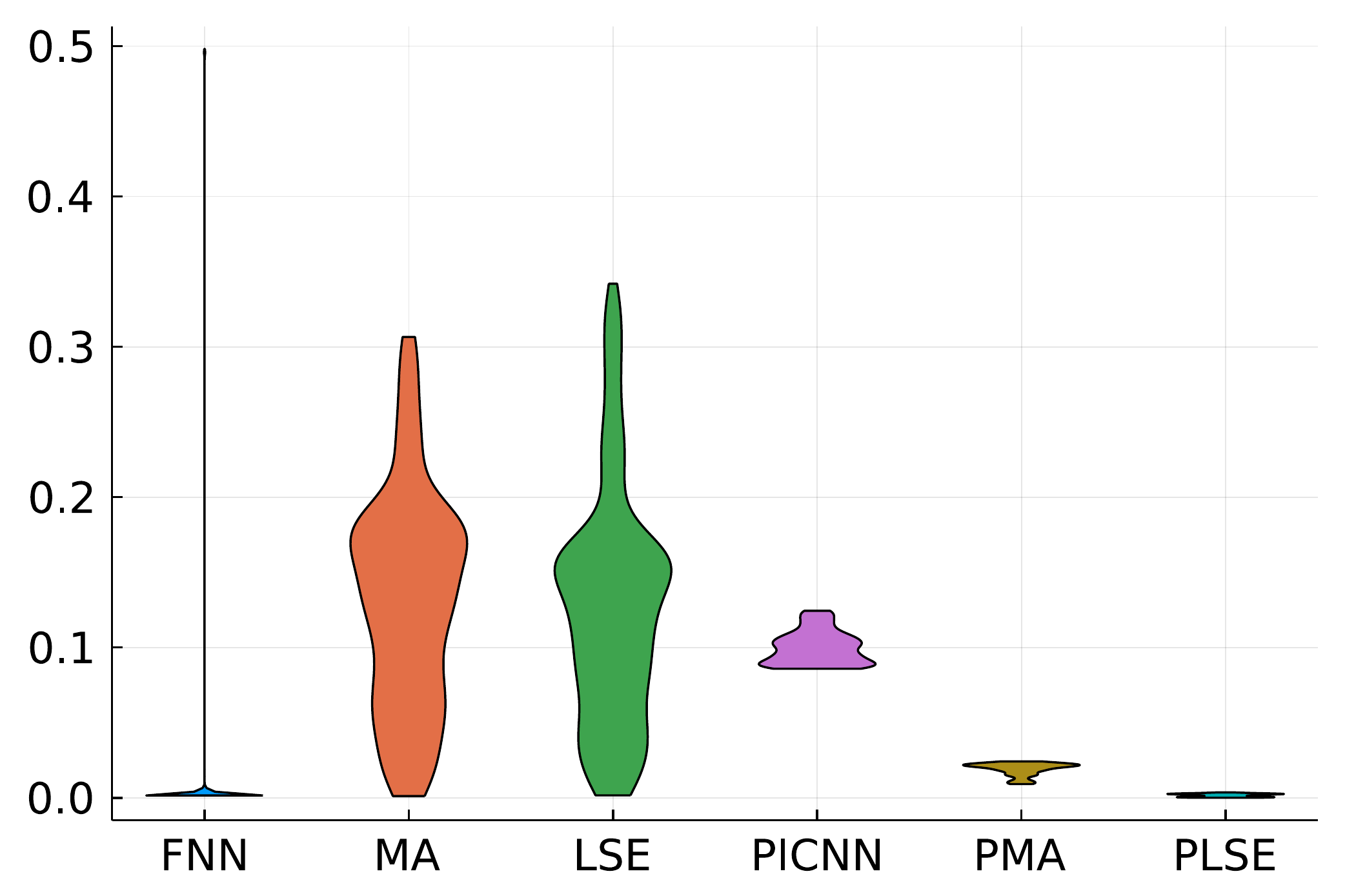}
    }
    \subfloat[Optimal value error $(n, m = 61, 20)$]{%
        \includegraphics[width=.33\linewidth]{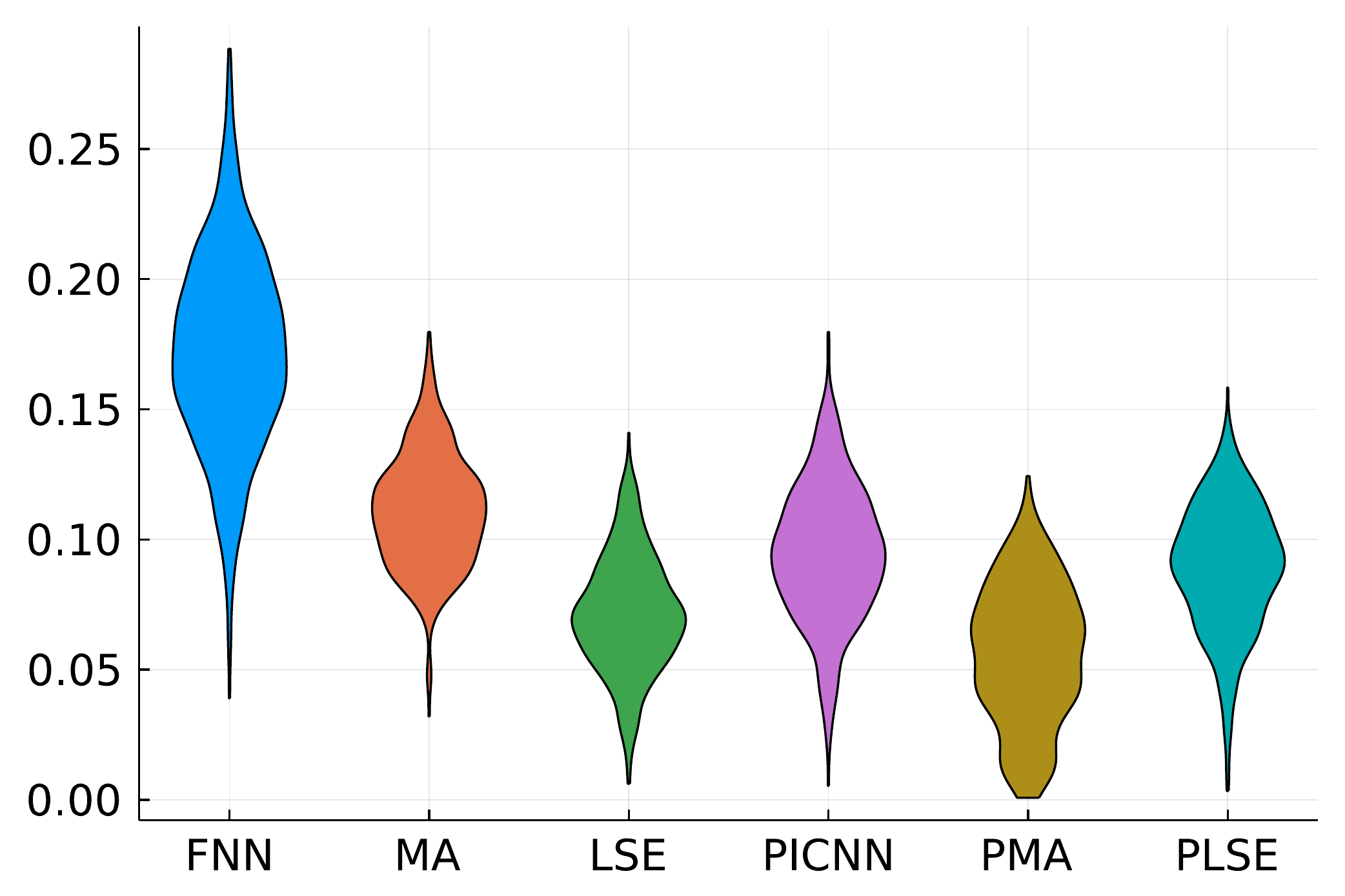}
    }
    \subfloat[Optimal value error $(n, m = 376, 17)$]{%
        \includegraphics[width=.33\linewidth]{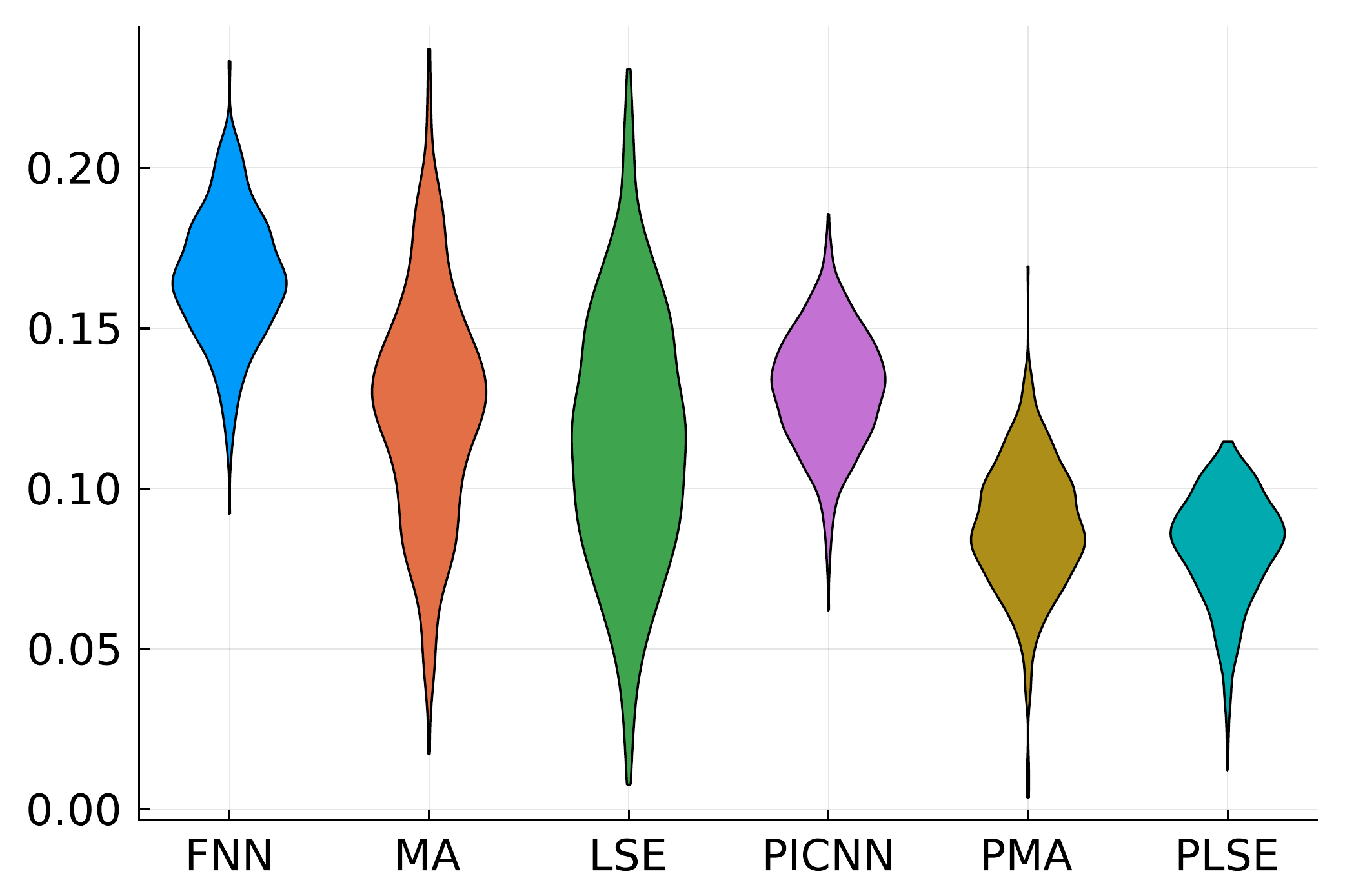}
    }
    \caption{Violin plot of minimizer and optimal value errors}
    \label{fig:error}
\end{figure*}

Data points $\{ (x, u)_{i} \}_{i = 1}^{d}$ are uniformly randomly sampled within
\firstrev{the box or hypercube for high-dimensional cases}
of $[-1, 1]^{n} \times [-1, 1]^{m}$,
where $d \in \mathbb{N}$ denotes the number of data points.
The data points are split \firstrev{into} two groups, train and test data, in a ratio of $90$:$10$.
Each approximator $\hat{f}$ is trained via supervised learning with the 
ADAM optimizer \cite{kingmaAdamMethodStochastic2017}.
Simulation settings are summarized in \autoref{table:simulation_settings}.
\firstrev{
    FNN is constructed as a fully connected feedforward neural network
    with input $n+m$ nodes, hidden layer nodes (denoted as hidden layer width in \autoref{table:simulation_settings}),
    and output $1$ node.
    \thirdrev{
        For PMA and PLSE,
        a fully connected FNN \secondrev{is constructed}
        with input $n$ nodes,
        hidden layer nodes, and output $(m+1)I$ nodes as an embedded network.
        The output of the embedded network is split into matrices whose sizes are $I \times m$ and $I \times 1$, respectively.
        \secondrev{The} $i$-th row of each matrix corresponds to $a_{i}(x)$ and $b_{i}(x)$ in Eq. \eqref{eq:pma_and_plse}, for $i = 1, \ldots, I$.
        The illustration of the PLSE network with the embedded network is illustrated in \autoref{fig:plse_structure}.
    }
    For the construction of PICNN, $x$- and $u$-paths' hidden layer nodes are used \cite{amosInputConvexNeural2017a}.
}
\firstrev{
    Each simulation is performed with various input-output dimensions of $(n, m) \in \{(1, 1), (61, 20), (376, 17)\}$.
    Note that $(n, m) = (1, 1)$ is for 3D visualization,
    and others are borrowed from OpenAI gym environments \cite{brockmanOpenAIGym2016}
    considering future applications to high-dimensional dynamical systems:
    $(n, m) = (61, 20)$ and $(n, m) = (376, 17)$ from the dimensions of observation and action spaces of
    \texttt{HandManipulateBlock-v0} and \texttt{Humanoid-v2} from OpenAI gym, respectively.
    Note that
    \texttt{HandManipulateBlock-v0} is an environment for guiding a block on a hand to a randomly chosen goal orientation in three-dimensional space,
    and \texttt{Humanoid-v2} is an environment for making a three-dimensional bipedal robot walk forward as fast as possible without falling over.
    From the fact that the observation and action spaces of most dynamical systems
    range in dimension from $1$ and $1$ to \secondrev{$376$} and $20$, respectively,
    each dynamical system represents the dynamical system with
    i) medium- $(61)$ and high-dimensional $(20)$ state and action variables
    and ii) high- ($376$) and slightly high-dimensional $(17)$ state and action variables.
}

\autoref{fig:trained_approximators} shows the illustration of trained approximators,
only for $(n, m) = (1, 1)$ for 3D visualization.
As shown in \autoref{fig:trained_approximators},
most approximators can approximate a given target \firstrev{function, which is not convex but parameterized convex,}
for $(n, m) = (1, 1)$ as its dimension is very low,
while MA and LSE cannot because they are convex universal approximators.
Note that MA and PMA have piecewise-linear \firstrev{parts, especially along the $u$-axis,} due to
their maximum operator and linear combination.
Compared to others, FNN, PICNN, and PLSE approximate the target function smoothly.
This low-dimensional visualization indicates that MA and LSE may be restrictive
for function approximation of a larger class of functions, e.g., parameterized convex functions,
\thirdrev{
    and FNN, PICNN, and PLSE can approximate the smooth target function very well.
}

\begin{table*}[t!]
    \caption{\firstrev{Numerical optimization result}}
    \label{table:solve_time}
    \centering
    \begin{tabular}{|| c | c c c c c c ||}
        \hline
        Dimensions & FNN & MA & LSE & PICNN & PMA & PLSE \\
        \hline \hline
        \multicolumn{7}{||c||}{\firstrev{Mean solving time [s]$^{1}$}} \\
        \hline
        $(n, m) = (1, 1)$ & $0.0008$ & $\mathbf{0.0005}$ & $0.0389$ & $0.0108$ & $0.0011$ & $0.0065$ \\
        \firstrev{$(n, m) = (61, 20)$} & $0.0355$ & $\mathbf{0.0096}$ & $0.0880$ & $0.1236$ & $0.0186$ & $0.0111$ \\
        $(n, m) = (376, 17)$ & $0.0313$ & $0.0110$ & $0.2436$ & $0.0737$ & $0.0163$ & $\mathbf{0.0070}$ \\
        \hline
        \multicolumn{7}{||c||}{Mean of minimizer errors (2-norm)} \\
        \hline
        $(n, m) = (1, 1)$ & $0.0340$ & $0.0518$ & $\mathbf{0.0063}$ & $0.0512$ & $0.3206$ & $0.0097$ \\
        \firstrev{$(n, m) = (61, 20)$} & $3.1932$ & $4.4515$ & $2.7190$ & $3.5730$ & $3.4014$ & $\mathbf{1.9736}$ \\
        $(n, m) = (376, 17)$ & $2.9522$ & $4.0974$ & $4.0131$ & $3.7825$ & $3.5163$ & $\mathbf{0.5204}$ \\
        \hline
        \multicolumn{7}{||c||}{Mean of optimal value errors (absolute)} \\
        \hline
        $(n, m) = (1, 1)$ & $0.0084$ & $0.1315$ & $0.1268$ & $0.1001$$^{2}$ & $0.0190$ & $\mathbf{0.0017}$ \\
        \firstrev{$(n, m) = (61, 20)$} & $0.1720$ & $0.1111$ & $0.0714$ & $0.0946$ & $\mathbf{0.0564}$ & $0.0892$ \\
        $(n, m) = (376, 17)$ & $0.1660$ & $0.1256$ & $0.1154$ & $0.1312$ & $0.0879$ & $\mathbf{0.0815}$ \\
        \hline
        \multicolumn{7}{l}{%
            \begin{minipage}{10cm}%
                $^{1}$: total solving time divided by the number of test data $d$
                \\
                \firstrev{$^{2}$: invalid values ($-\infty$ or $+\infty$) obtained from the solver for $2$ cases}
            \end{minipage}%
        }\\
    \end{tabular}
\end{table*}

Numerical optimization is performed with the trained approximators to verify
that the proposed approximators are well suited for optimization-based inference considering various $(n, m)$ pairs.
The condition variables in the test data (unseen data) are used for the numerical optimization.
For the decision variable optimization, a box constraint is imposed \firstrev{on the data point sampling}, that is, $u \in [-1, 1]^{m}$.
\firstrev{
    For (parameterized) convex approximators, a splitting conic solver (SCS) convex optimization solver is used, SCS.jl v0.8.1 \cite{scs},
    with a disciplined convex programming (DCP) package, Convex.jl v0.14.18 \cite{convexjl}.
    For nonconvex approximators,
    FNN in this case,
    the interior-point Newton method is used for optimization using Optim.jl v1.5.0 \cite{kmogensenOptimMathematicalOptimization2018}.
}

Figure \ref{fig:error} \firstrev{shows} the violin plot of the 2-norm of minimizer error and the absolute value of the optimal value error.
FNN shows a precise minimizer and optimal value estimation performance for low dimensions of $(n, m) = (1, 1)$ except for a few cases.
However, FNN's estimation performance is significantly \firstrev{degenerated} for high dimensions of $(n, m) = (61, 20)$ and $(376, 17)$.
Due to its nonconvexity, the optimization solver finds poor minimizers and optimal values for high-dimensional cases.
MA and LSE \firstrev{show} slightly poor estimation performance.
Given that they \firstrev{show} poor approximation capability for parameterized convex functions, as shown in \autoref{fig:trained_approximators},
if the target function were not symmetric, the minimizer and optimal value estimation performance would be worse.
PICNN's performance seems better as the dimension increases, but for all cases including the high-dimensional cases of $(n, m) = (376, 17)$,
the performance of PLSE is superior to that of PICNN.
Note that MA and PMA \firstrev{show} worse minimizer and optimal value estimation performance compared to LSE and PLSE in most cases,
respectively,
due to their inherited non-smoothness from piecewise-linear construction.
It should be pointed out that PLSE shows the near-minimum error in terms of both minimizer error and optimal value error.

\autoref{table:solve_time} shows the mean values of \firstrev{solving time for optimization}, minimizer error, and optimal value, averaged with test data (unseen data).
For the low-dimensional case of $(n , m) = (1, 1)$,
\firstrev{the mean solving times of FNN and MA are the smallest.}
As the dimension increases,
most approximators' \firstrev{solving times} are not scalable;
however, PLSE shows the scalable \firstrev{solving time} for $(n, m) = (61, 20)$ and $(376, 17)$.
This tendency is also similar for minimizer and optimal value errors.
For most cases, PLSE shows the near-minimum mean values of minimizer and optimal value errors.
It should be clarified in this simulation study that
all approximators have similar hyperparameters of network architecture
although similar hyperparameter settings \firstrev{do} not imply a similar number of network parameters for each approximator.
For example, the same $I$ for MA, LSE, PMA, and PLSE gives very different numbers of network parameters.
The reason is related to how convex optimization solvers and DCP packages work:
Most DCP packages generate slack variables and related constraints to transform the original problem
into a corresponding conic problem.
For instance, consider the following optimization problem:
\begin{equation}
    \begin{split}
        \text{Minimize} & \quad \max_{1 \leq i \leq I} u_{i}
        \\
        \text{subject to} & \quad u_{\min} \leq u \leq u_{\max},
    \end{split}
\end{equation}
where $u = [u_{1}, \ldots, u_{I}]^{\intercal}$ is the optimization variable,
and the inequality constraint is elementwise.
Then, the above problem can be transformed into an equivalent problem as
\begin{equation}
    \begin{split}
        \text{Minimize} & \quad t
        \\
        \text{subject to} & \quad u_{\min} \leq u \leq u_{\max},
        \\
                          & \quad u \leq t\mathds{1},
    \end{split}
\end{equation}
where $\mathds{1} \in \mathbb{R}^{I}$ denotes a vector whose elements are one.
Similarly, in the case of MA,
\firstrev{
    it generates one slack variable with $I$ additional constraints.
    For example, to make MA have a similar number of network parameters in the above simulation,
    $I$ should be up to approximately $10\text{,}000 \sim 100\text{,}000$,
    which significantly slows the solver.
}
In this regard, MA and LSE cannot easily increase the number of network parameters
because the number of their network parameters depends only on $I$.
In contrast,
PMA and PLSE can have many network parameters while fixing $I$,
which maintains the sufficiently fast \firstrev{solving time}.

\firstrev{
    In summary,
    the results from the numerical simulation
    support
} that
PLSE can approximate a given target function from low- to high-dimensional cases
with considerably small minimizer and optimal value errors and fast \firstrev{solving time}
compared to other existing approximators.
The existing approximators include
an ordinary universal approximator, FNN, convex universal approximators,
MA and LSE, and a parameterized convex approximator, PICNN.
\thirdrev{
    It should be pointed out that PICNN, a parameterized convex approximator
    without guarantees of universal approximation theorem,
    may not have satisfactory approximation performance even with a simple target function.
}

\section{Conclusion}
\label{sec:conclusion}
In this paper, parameterized convex approximators were proposed, namely,
the parameterized max-affine (PMA) and parameterized log-sum-exp (PLSE) networks.
PMA and PLSE generalize the existing convex universal approximators,
max-affine (MA) and log-sum-exp (LSE) networks, respectively,
for approximation of parameterized convex functions
by replacing network parameters with continuous functions.
It was proven that PMA and PLSE are shape-preserving universal approximators,
i.e., they are parameterized convex and can approximate any parameterized convex continuous function
with arbitrary precision on any compact condition space and compact convex decision space.
To show the universal approximation theorem of PMA and PLSE,
the continuous functions replacing network parameters of MA and LSE
were constructed by continuous selection of multivalued subdifferential mappings.

The results of the numerical simulation \firstrev{show} that PLSE can approximate a target function
even in high-dimensional cases.
From the conditional optimization tests performed with unseen condition variables,
PLSE's minimizer and optimal value errors are small,
and the \firstrev{solving time} is scalable,
compared to existing approximators including
ordinary and convex universal approximators as well as
a practically suggested parameterized convex approximator.

The proposed parameterized convex universal approximators,
such as PLSE,
\firstrev{may be used as} useful approximators for
i) decision-making problems, e.g.,
continuous-action reinforcement learning,
and ii) applications to differentiable convex programming.
Future research directions may include
i) surrogate model approaches to cover nonparameterized-convex function approximation
by PMA and PLSE
and ii) \firstrev{various} applications to decision-making with dynamical systems including \firstrev{aerospace and robotic systems}.

\subsection{Acknowledgments}
This work was supported by the National Research Foundation of Korea (NRF) grant funded by the Korean government (MSIT) (No. 2019R1A2C2083946).

\begin{appendices}
\section{Proof of \autoref{thm:eps_selection_of_subdiff_mapping}}
\setcounter{equation}{0}  
\renewcommand{\theequation}{\thesection.\arabic{equation}}  
\label{sec:proof_of_eps_selection_of_subdiff_mapping}
The proof of \autoref{thm:eps_selection_of_subdiff_mapping}
is followed by the following Lemmas.

\begin{lemma}
    \label{lemma:pointwise_supremum_is_lsc}
    The pointwise supremum of a collection of l.s.c. functions is l.s.c.
\end{lemma}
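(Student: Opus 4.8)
The plan is to reduce the claim to the monotonicity of $\liminf$ under pointwise domination, using the characterization $\liminf_{x\to x_0} g(x) \ge g(x_0)$ of lower semicontinuity recalled in \autoref{sec:preliminaries}. Let $\{f_\alpha\}_{\alpha\in A}$ be an arbitrary (possibly uncountable) family of l.s.c. functions $X\to\overline{\mathbb{R}}$, and set $f(x):=\sup_{\alpha\in A} f_\alpha(x)$.

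First I would fix an arbitrary point $x_0\in X$. For each index $\alpha$, lower semicontinuity of $f_\alpha$ at $x_0$ gives $\liminf_{x\to x_0} f_\alpha(x)\ge f_\alpha(x_0)$. Since $f\ge f_\alpha$ pointwise on $X$, passing to $\liminf$ as $x\to x_0$ preserves the inequality, so $\liminf_{x\to x_0} f(x)\ge\liminf_{x\to x_0} f_\alpha(x)\ge f_\alpha(x_0)$. The left-hand side does not depend on $\alpha$, so it is an upper bound for $\{f_\alpha(x_0)\}_{\alpha\in A}$; taking the supremum over $\alpha$ yields $\liminf_{x\to x_0} f(x)\ge\sup_{\alpha\in A} f_\alpha(x_0)=f(x_0)$. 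This is exactly lower semicontinuity of $f$ at $x_0$, and since $x_0$ was arbitrary, $f$ is l.s.c. on $X$.

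An equivalent route, which I would note in passing, is through superlevel sets: $f$ is l.s.c. if and only if $\{x\in X : f(x)>c\}$ is open for every $c\in\mathbb{R}$, and here $\{f>c\}=\bigcup_{\alpha\in A}\{f_\alpha>c\}$ is a union of sets that are open by lower semicontinuity of the $f_\alpha$, hence open. There is no genuine obstacle in this lemma; the only two points meriting a word of care are (i) that the index set $A$ may be arbitrary, which is harmless since arbitrary unions of open sets are open and the $\liminf$ argument never invokes countability, and (ii) the case $f(x_0)=+\infty$, which the $\liminf$ inequality handles automatically under the paper's convention for lower semicontinuity at points where the value is $+\infty$.
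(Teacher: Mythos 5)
Your proof is correct, and it takes a slightly different (though closely related) route from the paper's. The paper works directly with the $\epsilon$-neighborhood form of lower semicontinuity: given $x_{0}$ and $\epsilon>0$, it picks a single near-maximizing member $f'\in\mathcal{F}$ with $f'(x_{0})>g(x_{0})-\epsilon/2$, then uses lower semicontinuity of that one $f'$ together with $f'\leq g$ to conclude $g(x)\geq g(x_{0})-\epsilon$ on a neighborhood of $x_{0}$. You instead argue through the $\liminf$ characterization: for every index $\alpha$, $\liminf_{x\to x_{0}}f(x)\geq\liminf_{x\to x_{0}}f_{\alpha}(x)\geq f_{\alpha}(x_{0})$, and you take the supremum over $\alpha$ only at the end; you also record the equivalent open-superlevel-set proof. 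All of these rest on the same two facts (the supremum dominates each member, and each member is l.s.c.), so neither is deeper than the other, but your version buys a little robustness: it never selects a near-maximizer, so it handles extended-real-valued members and the case $\sup_{\alpha}f_{\alpha}(x_{0})=+\infty$ without any case split, whereas the paper's choice of $f'$ with $f'(x_{0})>g(x_{0})-\epsilon/2$ tacitly assumes the supremum is finite at $x_{0}$ (harmless where the lemma is applied, in \autoref{lemma:a_mapping_is_lsc}, since there the supremum of a continuous function is taken over the compact set $U$). Your superlevel-set remark also makes explicit that the cardinality of the index set is irrelevant, which the paper leaves implicit.
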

\begin{proof}
    Let $\mathcal{F}$ be a collection of l.s.c. functions on $X$ to $\mathbb{R}$.
    Let $g:X \to \mathbb{R}$ be a pointwise supremum function of $\mathcal{F}$, i.e.,
    $g(x) := \sup_{f \in \mathcal{F}} f(x)$.
    Given $x_{0} \in X$, for any $\epsilon > 0$,
    there exists an l.s.c. function $f' \in \mathcal{F}$
    such that $g(x_{0}) - \frac{\epsilon}{2} < f'(x_{0})$.
    Since $f'$ is l.s.c., there exists a neighborhood $U$ of $x_{0}$ such that
    \firstrev{$f'(x_{0}) - \frac{\epsilon}{2} \leq f'(x) \leq g(x), \forall x \in U$}.
    Hence, $g(x_{0}) - \epsilon \leq g(x), \forall x \in U$,
    which implies that $g$ is l.s.c.
\end{proof}

\begin{lemma}
    \label{lemma:a_mapping_is_lsc}
    \firstrev{
        Let $f:X \times U \to \mathbb{R}$ be a parameterized convex continuous function.
        Let $f_{x}(\cdot) := f(x, \cdot)$ for any $x \in X$.
        Then,
        a mapping $(x, u^{*}) \mapsto f_{x}^{*}(u^{*}) := \sup_{u \in U} \{ \langle u, u^{*} \rangle -f_{x}(u) \}$ is l.s.c.
        on $X \times \mathbb{R}^{m}$.
    }
\end{lemma}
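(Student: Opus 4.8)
The plan is to recognize the conjugate map as a pointwise supremum of continuous functions and then invoke \autoref{lemma:pointwise_supremum_is_lsc}. Concretely, for each fixed $u \in U$ I would define $g_{u} : X \times \mathbb{R}^{m} \to \mathbb{R}$ by $g_{u}(x, u^{*}) := \langle u, u^{*} \rangle - f(x, u)$. Since $f$ is continuous on $X \times U$ and the pairing $(u, u^{*}) \mapsto \langle u, u^{*} \rangle$ is continuous, each $g_{u}$ is continuous on $X \times \mathbb{R}^{m}$, hence lower semicontinuous. By the definition of the conjugate, $f_{x}^{*}(u^{*}) = \sup_{u \in U} \{ \langle u, u^{*} \rangle - f_{x}(u) \} = \sup_{u \in U} g_{u}(x, u^{*})$, so the mapping $(x, u^{*}) \mapsto f_{x}^{*}(u^{*})$ is exactly the pointwise supremum of the family $\{ g_{u} \}_{u \in U}$ of l.s.c. functions. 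Applying \autoref{lemma:pointwise_supremum_is_lsc} then yields the claim.

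Before invoking that lemma I would check that the supremum is real-valued so that \autoref{lemma:pointwise_supremum_is_lsc} applies as stated: fixing $(x, u^{*})$, the map $u \mapsto g_{u}(x, u^{*})$ is continuous on the compact set $U$ and therefore attains a finite maximum, whence $f_{x}^{*}(u^{*}) \in \mathbb{R}$ (equivalently, $f_{x}^{*}$ has full effective domain $\mathbb{R}^{m}$). I would also note that only the joint continuity of $f$ is used in this particular lemma; parameterized convexity plays no role here, although it is what makes the conjugate relevant for the subsequent duality and subdifferential arguments.

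The argument is essentially immediate once the ``supremum of l.s.c. functions'' structure is spotted, so there is no serious obstacle. The only point requiring a moment of care is that the index set of the supremum is the (uncountable) compact set $U$ rather than a finite one, which is harmless because \autoref{lemma:pointwise_supremum_is_lsc} imposes no cardinality restriction on the collection over which the pointwise supremum is taken.
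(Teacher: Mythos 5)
Your proposal is correct and follows essentially the same route as the paper: show that for each fixed $u \in U$ the map $(x, u^{*}) \mapsto \langle u, u^{*} \rangle - f_{x}(u)$ is continuous (the paper does this by an explicit $\epsilon$--$\delta$ estimate, you by composition of continuous maps), hence l.s.c., and then apply \autoref{lemma:pointwise_supremum_is_lsc} to the supremum over $u \in U$. Your additional check that the supremum is finite (attained on the compact set $U$) is a reasonable tidying-up of a point the paper leaves implicit.
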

\begin{proof}
    Fix $u \in U$.
    From the continuity of $f$, given $x \in X$,
    $\forall \epsilon > 0$, $\exists \delta_{x} > 0$ such that
    $\lVert f_{x}(u) - f_{x'}(u) \rVert < \frac{\epsilon}{2}, \forall x' \in X$
    where $\lVert x - x' \rVert < \delta_{x}$.
    Choose $\delta := \min \left\{ \frac{\epsilon}{2} \frac{1}{1+\lVert u \rVert}, \delta_{x} \right\}$.
    \firstrev{Given $(x, u^{*}) \in X \times \mathbb{R}^{m}$,}
    \begin{equation}
        \begin{split}
            & \left\lvert \left( \langle u, u^{*} \rangle - f_{x}(u) \right) - \left( \langle u, u^{*'} \rangle - f_{x'}(u) \right)\right\rvert
            \\
            & \leq \left\lvert \langle u, u^{*} - u^{*'} \rangle \right\rvert
            + \left\lvert f_{x}(u) - f_{x'}(u) \right\rvert
            \\
            & \leq \left\lVert u^{*} - u^{*'} \right\rVert \lVert u \rVert
            + \left\lvert f_{x}(u) - f_{x'}(u) \right\rvert
            \\
            & < \delta \lVert u \rVert + \frac{\epsilon}{2}
        \leq \frac{\epsilon}{2} \frac{\lVert u \rVert}{1 + \lVert u \rVert} + \frac{\epsilon}{2}
            < \epsilon,
        \end{split}
    \end{equation}
    for all $(x', u^{*'})$ such that \firstrev{$\lVert (x, u^{*}) - (x', u^{*'}) \rVert < \delta $,
    which implies $\lVert u^{*} - u^{*'} \rVert < \delta $}.
    That is,
    \firstrev{$(x, u^{*}) \mapsto \langle u, u^{*} \rangle - f_{x}(u)$ is continuous for any $u \in U$.}
    \textit{A fortiori}, it is l.s.c.
    By \autoref{lemma:pointwise_supremum_is_lsc},
    the mapping \firstrev{$(x, u^{*}) \mapsto f_{x}^{*}(u^{*})$}
    is l.s.c
    \firstrev{on $X \times \mathbb{R}^{m}$.}
\end{proof}

\begin{lemma}
    \label{lemma:subdifferential_mapping_is_uhc}
    Suppose that all assumptions of
    \firstrev{\autoref{thm:eps_selection_of_subdiff_mapping}} hold.
    Then, given $u \in \accentset{\circ}{U}$, $\Gamma_{u}(x) := \partial f_{x}(u)$ is a nonempty-convex-valued
    u.h.c. multivalued function.
\end{lemma}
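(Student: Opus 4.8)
The plan is to verify the three asserted properties of $\Gamma_{u}$ in turn, with upper hemicontinuity being the substantive one.

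\emph{Nonempty, convex values.} First I would fix $x \in X$. Since $f_{x}$ is a finite convex function and $u \in \accentset{\circ}{U}$ lies in the interior of $\mathrm{dom}\, f_{x}$, a supporting hyperplane to $\mathrm{epi}\, f_{x}$ at $(u, f_{x}(u))$ exists, so $\partial f_{x}(u) \neq \emptyset$. Convexity of $\partial f_{x}(u)$ is immediate from the definition, since $\partial f_{x}(u) = \bigcap_{\tilde{u} \in U} \{ v \in \mathbb{R}^{m} : \langle v, \tilde{u} - u \rangle \leq f_{x}(\tilde{u}) - f_{x}(u) \}$ is an intersection of closed half-spaces (hence in fact also closed).

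\emph{A uniform compact bound.} Next I would record that $L$-Lipschitzness of $f_{x}$ together with $u \in \accentset{\circ}{U}$ forces $\partial f_{x}(u) \subseteq B(\{0\}, L)$ for every $x$: given a nonzero $v \in \partial f_{x}(u)$, set $e = v / \lVert v \rVert$; then $u + te \in U$ for all small $t > 0$, and the subgradient inequality yields $t \lVert v \rVert = t \langle v, e \rangle \leq f_{x}(u + te) - f_{x}(u) \leq Lt$, hence $\lVert v \rVert \leq L$. Thus $\mathrm{Graph}(\Gamma_{u}) \subseteq X \times B(\{0\}, L)$, and $B(\{0\}, L)$ is compact.

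\emph{Closed graph and conclusion.} For upper hemicontinuity I would invoke the standard fact that a multivalued function with a closed graph whose values all lie in one fixed compact set is u.h.c.\ (and compact-valued), so it only remains to show $\mathrm{Graph}(\Gamma_{u})$ is closed. Here I would reuse \autoref{lemma:a_mapping_is_lsc}: by the Fenchel--Young identity, $v \in \partial f_{x}(u)$ iff $f_{x}(u) + f_{x}^{*}(v) - \langle u, v \rangle \leq 0$ (equality, since the reverse inequality is automatic), so $\mathrm{Graph}(\Gamma_{u})$ is the zero-sublevel set of $(x, v) \mapsto f_{x}(u) + f_{x}^{*}(v) - \langle u, v \rangle$, which is l.s.c.\ on $X \times \mathbb{R}^{m}$ as a sum of the l.s.c.\ map from \autoref{lemma:a_mapping_is_lsc} and the continuous maps $x \mapsto f_{x}(u)$ and $(x, v) \mapsto -\langle u, v \rangle$; a sublevel set of an l.s.c.\ function is closed. (A direct alternative: take $x_{k} \to x_{0}$ and $\Gamma_{u}(x_{k}) \ni v_{k} \to v_{0}$ and pass to the limit in $f(x_{k}, \tilde{u}) \geq f(x_{k}, u) + \langle v_{k}, \tilde{u} - u \rangle$ using joint continuity of $f$.) The hard part is the u.h.c.\ claim: it is not merely formal and relies on both the Lipschitz-based compact bound on the subgradients and the closed-graph criterion, whereas nonemptiness and convexity of the values are routine convex analysis.
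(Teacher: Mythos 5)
Your proposal is correct and follows essentially the same route as the paper: nonemptiness and convexity by standard convex analysis, the $L$-Lipschitz bound forcing all subgradients into the compact ball $B(\{0\},L)$, closedness of $\mathrm{Graph}(\Gamma_{u})$ via the Fenchel--Young characterization together with the lower semicontinuity of $(x,u^{*}) \mapsto f_{x}^{*}(u^{*})$ from \autoref{lemma:a_mapping_is_lsc}, and then the closed-graph-plus-compact-codomain criterion for upper hemicontinuity (the paper cites the converse of \cite[Proposition 1.4.8]{aubinSetValuedAnalysis2009}). Your sublevel-set phrasing of the closed-graph step, and the parenthetical direct sequential argument passing to the limit in the subgradient inequality, are only cosmetic variants of the paper's liminf computation.
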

\begin{proof}
    First, we show that $\text{Graph}(\Gamma_{u})$ is closed for any $u \in \accentset{\circ}{U}$.
    Given $u \in \accentset{\circ}{U}$,
    it is sufficient to show that $\forall x_{i} \to x$,
    $u_{i}^{*} \to u^{*} \in \Gamma_{u}(x)$ as $i \to \infty$
    where $u_{i}^{*} \in \Gamma_{u}(x_{i})$.
    By \cite[Theorem 23.5]{rockafellarConvexAnalysis1970},
    the following relation holds,
    \begin{equation}
        \begin{split}
            & u_{i}^{*} \in \Gamma_{u}(x_{i}) = \partial f_{x_{i}}(u)
            \\ & \iff \langle u, u^{*} \rangle \geq f_{x_{i}}(u) + f_{x_{i}}^{*}(u_{i}^{*}).
        \end{split}
    \end{equation}
    By \autoref{lemma:a_mapping_is_lsc} and the continuity of $f$,
    taking $\liminf$ implies that
    \begin{equation}
        \begin{split}
            &\langle u, u^{*} \rangle
            \\
            &= \liminf_{i \to \infty} \langle u, u_{i}^{*} \rangle
            \geq \liminf_{i \to \infty}
            \left( f_{x_{i}}(u) + f_{x_{i}}^{*}(u_{i}^{*}) \right)
            \\
            &\geq f_{x}(u) + f_{x}^{*}(u^{*})
            \iff u^{*} \in \partial f_{x}(u) = \Gamma_{u}(x).
        \end{split}
    \end{equation}
    That is, $\text{Graph}(\Gamma_{u})$ is closed.
    Furthermore,
    given $u \in \accentset{\circ}{U}$,
    $\Gamma_{u}(x) = \partial f_{x}(u)$ is nonempty
    and closed convex by definition for all $x \in X$ \cite[Theorem 23.4]{rockafellarConvexAnalysis1970}.
    Since $f_{x}$ is $L$-Lipschitz for all $x \in X$,
    the codomain of $\Gamma_{u}$ is $B({0}, L)$ compact.
    By the converse statement of \cite[Proposition 1.4.8]{aubinSetValuedAnalysis2009},
    $\Gamma_{u}$ is u.h.c. for any $u \in \accentset{\circ}{U}$.
    Therefore, $\Gamma_{u}$ is a nonempty-convex-valued u.h.c. multivalued function.
\end{proof}

Now, we are ready to prove \autoref{thm:eps_selection_of_subdiff_mapping}.
\begin{proof}[Proof of \autoref{thm:eps_selection_of_subdiff_mapping}]
    Note that the following proof is based on
    \cite[Theorem 9.2.1]{aubinSetValuedAnalysis2009}
    and \cite[Theorem 2, Section 0]{aubinDifferentialInclusionsSetValued1984}.
    Fix $\epsilon > 0$.
    Given $u_{i} \in \accentset{\circ}{U}$,
    since $\Gamma_{u_{i}}$ is u.h.c. by \autoref{lemma:subdifferential_mapping_is_uhc},
    for every $x \in X$, there exists $\delta_{x} \in (0, 2 \epsilon)$
    such that
    $\Gamma_{u_{i}}(y) \subset \Gamma_{u_{i}}(x) + \frac{\epsilon}{2} B$, $\forall y \in B(x, \delta_{x})$.
    The collection of balls \firstrev{$\{ \accentset{\circ}{B}_{X}(x, \delta_{x}/4) \}_{x \in X}$} covers $X$.
    Since $X$ is compact,
    there exists a finite sequence $J$ of indices such that
    the collection of balls \firstrev{$\{ \accentset{\circ}{B}_{X}(x_{j}, \delta_{x_{j}}/4) \}_{j \in J}$} also covers $X$.
    We set $\delta_{j} = \delta_{x_{j}}$
    and take a locally Lipschitz partition of unity $\{ a_{j} \}_{j \in J}$
    subordinated to this covering \cite[Theorem 2, Section 0]{aubinDifferentialInclusionsSetValued1984}.
    That is, $a_{j}: X \to [0, 1]$ is a locally Lipschitz function
    such that $a_{j}$ vanishes outside of \firstrev{$B_{X}(x_{j}, \delta_{j}/4)$}, $\forall j \in J$,
    and $\sum_{j \in J}a_{j}(x) = 1$, $\forall x \in X$.
    Since $X$ is compact,
    a locally Lipschitz function $a_{j}$ is in fact Lipschitz on $X$.
    Note that $a_{j}, x_{j}, \delta_{j}$,
    and the Lipschitz constant of $a_{j}$ (namely, $L_{j}$) is independent of $i$.
    Let us associate with every $j \in J$ a point
    \firstrev{$y_{i, j} \in \Gamma_{u_{i}}(B_{X}(x_{j}, \delta_{j}/4))$}
    and define the map \firstrev{$\hat{u}_{\epsilon, i}^{*}: X \to \mathbb{R}^{m}$}
    as $\hat{u}_{\epsilon, i}^{*}(x) := \sum_{j \in J} a_{j}(x) y_{i, j}$,
    which is an $\epsilon$-selection of $\Gamma_{u_{i}}$,
    whose values are in the convex hull of the image of $\Gamma_{u_{i}}$ \cite[Theorem 9.2.1]{aubinSetValuedAnalysis2009}.
    Then, for all $x, x' \in X$,
    \firstrev{
        \begin{equation}
            \begin{split}
                \lVert \hat{u}_{\epsilon, i}^{*}(x)
                -\hat{u}_{\epsilon, i}^{*}(x') \rVert
            & = \left\lVert \sum_{j \in J} \left( a_{j}(x) - a_{j}(x') \right) y_{i, j} \right\rVert
            \\
            & \leq \sum_{j \in J} \left\lVert \left( a_{j}(x) - a_{j}(x') \right) y_{i, j} \right\rVert
            \\
            & \leq \sum_{j \in J} \lvert \left( a_{j}(x) - a_{j}(x') \right) \rvert \left\lVert y_{i, j} \right\rVert
            ,
            \end{split}
        \end{equation}
        and since $y_{i,j} \in \Gamma_{u_{i}}(B_{X}(x_{j}, \delta_{j} / 4)) \subset B(\{0\}, L)$ due to the Lipschitzness assumption,
        \begin{equation}
            \begin{split}
                \lVert \hat{u}_{\epsilon, i}^{*}(x)
                -\hat{u}_{\epsilon, i}^{*}(x') \rVert
            & \leq \sum_{j \in J} \lvert \left( a_{j}(x) - a_{j}(x') \right) \rvert \left\lVert y_{i, j} \right\rVert
            \\
            & \leq L \sum_{j \in J} \lvert \left( a_{j}(x) - a_{j}(x') \right) \rvert
            \\
            & \leq \tilde{L} \lVert \left( x - x' \right) \rVert
            ,
            \end{split}
        \end{equation}
    where
    \firstrev{$\tilde{L} = L \left( \sum_{j \in J} L_{j} \right)$},
    $\forall i \in \mathbb{N}$.
    }
    Hence, $\{ \hat{u}_{\epsilon, i}^{*} \}_{i \in \mathbb{N}}$
    is an equi-Lipschitz sequence of $\epsilon$-selections of
    $\{ \Gamma_{u_{i}} \}_{i \in \mathbb{N}}$.
\end{proof}

\section{Proof of \autoref{thm:pma_is_a_parameterized_convex_universal_approximator}}
\setcounter{equation}{0}  
\label{sec:proof_of_pma_is_a_parameterized_convex_universal_approximator}
Similar to
\hyperref[sec:proof_of_eps_selection_of_subdiff_mapping]{Appendix A},
the proof of \autoref{thm:pma_is_a_parameterized_convex_universal_approximator}
is followed by the following Lemmas.

\begin{lemma}
    \label{lemma:uniform_convergence_on_a_dense_set}
    Let $X$ be a metric space. Let $Y$ be a dense subset of $X$.
    Let $V$ be a Banach space.
    Let $\{ f_{n} \}_{n \in \mathbb{N}}$ be a sequence of continuous functions $f_{n} : X \to V$.
    If $f_{n} \to f$ uniformly on $Y$, then $f_{n} \to f$ uniformly on $X$ as well.
\end{lemma}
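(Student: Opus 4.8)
The plan is to deduce the result from the \emph{uniform Cauchy criterion} together with the completeness of $V$, rather than trying to estimate $\lVert f_{n}(x) - f(x) \rVert$ directly for $x \in X \setminus Y$ (which would seem to need equicontinuity of the family $\{f_{n}\}$, an assumption we do not have). First I would note that uniform convergence of $f_{n}$ to $f$ on $Y$ makes $\{f_{n}|_{Y}\}$ uniformly Cauchy on $Y$: for each $\epsilon > 0$ there is $N \in \mathbb{N}$ with $\sup_{y \in Y} \lVert f_{n}(y) - f_{m}(y) \rVert < \epsilon$ whenever $n, m \ge N$.

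The key step is to promote this to uniform Cauchyness on all of $X$. Fix an arbitrary $x \in X$; since $Y$ is dense, choose $y_{k} \in Y$ with $y_{k} \to x$. For any fixed pair of indices $n, m$, continuity of $f_{n}$ and $f_{m}$ gives $f_{n}(y_{k}) \to f_{n}(x)$ and $f_{m}(y_{k}) \to f_{m}(x)$, hence $\lVert f_{n}(x) - f_{m}(x) \rVert = \lim_{k \to \infty} \lVert f_{n}(y_{k}) - f_{m}(y_{k}) \rVert \le \sup_{y \in Y} \lVert f_{n}(y) - f_{m}(y) \rVert$. Taking $n, m \ge N$ yields $\lVert f_{n}(x) - f_{m}(x) \rVert < \epsilon$, and since $x$ was arbitrary, $\{f_{n}\}$ is uniformly Cauchy on $X$.

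To finish I would invoke completeness: because $V$ is a Banach space, the uniformly Cauchy sequence $\{f_{n}\}$ converges uniformly on $X$ to some limit $g : X \to V$, and $g$ is continuous as a uniform limit of continuous functions. On $Y$ the pointwise limits agree, so $g = f$ on $Y$; as $Y$ is dense and both $g$ and $f$ are continuous, $g = f$ on all of $X$, which gives $f_{n} \to f$ uniformly on $X$.

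The only genuine obstacle is the middle step — transferring a uniform estimate from the dense set $Y$ to $X$ without equicontinuity — and it is resolved precisely by passing to the limit along $y_{k} \to x$ inside the norm for each fixed index pair. I would also flag that the conclusion tacitly uses continuity of the limit $f$ (equivalently, $f$ is the continuous extension of $f|_{Y}$); this is automatic in the intended application, where $f$ is the parameterized convex continuous target function, so no generality is lost.
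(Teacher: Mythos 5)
Your proposal is correct and takes essentially the same route as the paper's proof: uniform convergence on $Y$ gives a uniform Cauchy estimate there, density of $Y$ plus continuity of each $f_{n}$ transfers that estimate to every $x \in X$, and completeness of $V$ yields uniform convergence on $X$ with limit $f$. Your closing remark about the tacit use of continuity of $f$ (i.e., that $f$ is the continuous extension of $f|_{Y}$) is a fair point that the paper's proof also passes over with ``and should be $f$,'' and it is indeed harmless in the intended application.
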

\begin{proof}
    From the uniform convergence of $\{ f_{n} \}_{n \in \mathbb{N}}$,
    for all $\epsilon > 0$, there exists $N \in \mathbb{N}$ such that
    $\lVert f_{n} - f \rVert_{\infty} < \frac{\epsilon}{2}$ on $Y$, $\forall n \geq N$.
    Then, for all $m, n \geq N$,
    the following inequality holds,
    \begin{equation}
        \lVert f_{n} - f_{m} \rVert_{\infty}
        \leq \lVert f_{n} - f \rVert_{\infty} + \lVert f - f_{m} \rVert_{\infty} < \frac{\epsilon}{2} + \frac{\epsilon}{2} = \epsilon.
    \end{equation}
    Given $x \in X$,
    there exists a sequence $\{ y_{n} \in Y \}_{n \in \mathbb{N}}$ such that $y_{n} \to x$ as $n \to \infty$ since $Y$ is dense in $X$.
    It can be deduced from the continuity of $f_{n}$ and $f_{m}$ that
    $\lVert f_{n}(x) - f_{m}(x) \rVert_{\infty} \leq \epsilon$.
    Since $V$ is complete, $\{ f_{n} \}_{n \in \mathbb{N}}$
    uniformly converges on $X$ and should be $f$.
\end{proof}

\begin{lemma}
    \label{lemma:equicontinuous_sequence}
    Given an equicontinuous sequence of continuous functions,
    $\{ g_{i} \}_{i \in \mathbb{N}}$,
    the sequence $\{ x \mapsto \sup_{1 \leq i \leq I} \{ g_{i}(x) \} \}_{I \in \mathbb{N}}$ is also equicontinuous.
\end{lemma}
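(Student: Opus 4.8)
The plan is to reduce everything to the elementary inequality $\lvert \max_i a_i - \max_i b_i \rvert \le \max_i \lvert a_i - b_i \rvert$, valid for any finite collection of reals, together with the fact that a supremum over the finite index set $\{1, \ldots, I\}$ is actually attained, so $\sup_{1 \le i \le I}\{g_i(x)\} = \max_{1 \le i \le I} g_i(x)$.

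First I would write out what equicontinuity of $\{g_i\}_{i \in \mathbb{N}}$ gives: for every $\epsilon > 0$ there is a single $\delta > 0$ (independent of $i$) such that $\lVert x - x' \rVert < \delta$ implies $\lvert g_i(x) - g_i(x') \rvert < \epsilon$ for all $i \in \mathbb{N}$. Then, writing $G_I(x) := \max_{1 \le i \le I} g_i(x)$, I would apply the max-difference inequality to get, for any $I \in \mathbb{N}$ and any $x, x'$ with $\lVert x - x' \rVert < \delta$,
\begin{equation*}
    \lvert G_I(x) - G_I(x') \rvert
    = \left\lvert \max_{1 \le i \le I} g_i(x) - \max_{1 \le i \le I} g_i(x') \right\rvert
    \le \max_{1 \le i \le I} \lvert g_i(x) - g_i(x') \rvert < \epsilon.
\end{equation*}
Since the chosen $\delta$ depends only on $\epsilon$ and not on $I$ (nor on the particular index achieving the maximum), this is exactly the statement that $\{G_I\}_{I \in \mathbb{N}}$ is equicontinuous.

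For completeness I would include a one-line justification of $\lvert \max_i a_i - \max_i b_i \rvert \le \max_i \lvert a_i - b_i \rvert$: for each $k$, $a_k \le b_k + \lvert a_k - b_k \rvert \le \max_i b_i + \max_i \lvert a_i - b_i \rvert$, so taking the max over $k$ gives $\max_i a_i \le \max_i b_i + \max_i \lvert a_i - b_i \rvert$; swapping the roles of $a$ and $b$ yields the reverse bound. There is essentially no obstacle here — the only thing to be careful about is emphasizing that the modulus of continuity is uniform in $I$, which is immediate because the same $\delta$ from the equicontinuity of $\{g_i\}$ works simultaneously for every truncation.
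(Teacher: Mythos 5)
Your proof is correct and follows essentially the same route as the paper: bound each $g_i(x)$ by $g_i(x')+\epsilon$ using the single $\delta$ from equicontinuity (independent of $i$ and $I$), take the maximum over $1 \leq i \leq I$, and swap $x$ and $x'$; your intermediate inequality $\lvert \max_i a_i - \max_i b_i \rvert \leq \max_i \lvert a_i - b_i \rvert$ is just a packaged form of that same argument.
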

\begin{proof}
    Given $I \in \mathbb{N}$,
    let $f_{I}(x) := \sup_{1 \leq i \leq I} g_{i}(x)$.
    Since $\{ g_{i} \}_{i \in \mathbb{N}}$ is equicontinuous,
    for all $\epsilon > 0$, there exists $\delta > 0$ such that
    \begin{equation}
        g_{i}(x)
        = g_{i}(x') + \left( g_{i}(x) - g_{i}(x') \right)
        < g_{i}(x') + \epsilon,
    \end{equation}
    for all $x, x'$ such that $\lVert x - x' \rVert < \delta$,
    $\forall i = 1, \ldots, I$.
    Taking the supremum yields
    \begin{equation}
        \sup_{1 \leq i \leq I} g_{i}(x) \leq \sup_{1 \leq j \leq I} g_{j}(x') + \epsilon,
    \end{equation}
    which implies $f_{I}(x) - f_{I}(x') \leq \epsilon$.
    Swapping $x$ and $x'$ also implies
    $\lvert f_{I}(x) - f_{I}(x') \rvert \leq \epsilon$.
    Therefore,
    the sequence $\{ f_{I} \}_{I \in \mathbb{N}}$ is equicontinuous.
\end{proof}

\begin{lemma}
    \label{lemma:moreau_yosida_regularisation}
    Given a parameterized convex continuous function $f: X \times U \to \mathbb{R}$,
    a Moreau-Yosida regularization \cite{strombergRegularizationBanachSpaces1996} of $f$,
    modified in this study for parameterized convex functions,
    is defined for some $\eta > 0$ as
    \begin{equation}
        \label{eq:moreau_yosida_regularisation}
            \tilde{f}_{\eta}(x, u)
            := \min_{u' \in U} \left\{
                \frac{1}{2\eta} \lVert u - u' \rVert^{2} + f(x, u')
            \right\},
    \end{equation}
    and the proximal operator is defined as
    \begin{equation}
        \textup{prox}_{f}(x, u)
        :=  \argmin_{u' \in U} \left(
            \frac{1}{2 \eta} \lVert u - u' \rVert^{2} + f(x, u')
        \right).
    \end{equation}
    For all $\eta' \geq \eta$,
    The following conditions hold:
    \begin{itemize}
        \item $\tilde{f}_{\eta} \to f$ pointwise as $\eta \to 0^{+}$.
        \item $\eta \mapsto \tilde{f}_{\eta}$ is monotonically decreasing and $\tilde{f}_{\eta} \leq f$ for all $\eta > 0$.
        \item $\tilde{f}_{\eta}$ is continuous for all $\eta > 0$.
    \end{itemize}
\end{lemma}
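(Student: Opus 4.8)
The plan is to verify the three bullet points in turn, using only compactness of $X$ and $U$ and continuity of $f$. Throughout, write $g_{\eta}(x,u,u') := \tfrac{1}{2\eta}\lVert u - u' \rVert^{2} + f(x,u')$, which is continuous on $X \times U \times U$; since $U$ is compact the minimum in \eqref{eq:moreau_yosida_regularisation} is attained, so $\tilde f_{\eta}$ and $\textup{prox}_{f}$ are well defined (the minimizer is in fact unique by strict convexity of $g_{\eta}(x,u,\cdot)$ in its last argument, but uniqueness is not needed below).

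The monotonicity and the bound $\tilde f_{\eta} \le f$ are immediate from the definition. Taking $u' = u$ gives $\tilde f_{\eta}(x,u) \le g_{\eta}(x,u,u) = f(x,u)$ for every $\eta > 0$. For $\eta' \ge \eta$ one has $\tfrac{1}{2\eta'} \le \tfrac{1}{2\eta}$, hence $g_{\eta'}(x,u,u') \le g_{\eta}(x,u,u')$ for all $u' \in U$; minimizing over $u'$ yields $\tilde f_{\eta'}(x,u) \le \tilde f_{\eta}(x,u)$, so $\eta \mapsto \tilde f_{\eta}(x,u)$ is nonincreasing.

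For the pointwise convergence I would use a squeeze argument. Fix $(x,u)$ and let $u'_{\eta} \in \textup{prox}_{f}(x,u)$. Since $f$ is continuous on the compact set $X \times U$ it is bounded, say $\lvert f \rvert \le M$; from $\tfrac{1}{2\eta}\lVert u - u'_{\eta} \rVert^{2} + f(x,u'_{\eta}) = \tilde f_{\eta}(x,u) \le f(x,u)$ we get $\lVert u - u'_{\eta} \rVert^{2} \le 4 M \eta \to 0$, so $u'_{\eta} \to u$ as $\eta \to 0^{+}$ and $f(x,u'_{\eta}) \to f(x,u)$ by continuity. Then $f(x,u'_{\eta}) \le \tilde f_{\eta}(x,u) \le f(x,u)$ forces $\tilde f_{\eta}(x,u) \to f(x,u)$.

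The continuity of $\tilde f_{\eta}$ is the step that needs care: it is essentially Berge's maximum theorem applied to the constant (hence continuous) constraint correspondence $x \mapsto U$, but I would argue it directly via semicontinuity. For upper semicontinuity, fix $(x_{0},u_{0})$, pick a minimizer $u'_{0}$, and note $\tilde f_{\eta}(x,u) \le g_{\eta}(x,u,u'_{0}) \to g_{\eta}(x_{0},u_{0},u'_{0}) = \tilde f_{\eta}(x_{0},u_{0})$, so $\limsup_{(x,u)\to(x_{0},u_{0})} \tilde f_{\eta}(x,u) \le \tilde f_{\eta}(x_{0},u_{0})$. For lower semicontinuity, take $(x_{k},u_{k}) \to (x_{0},u_{0})$ with minimizers $u'_{k}$; by compactness of $U$ a subsequence has $u'_{k} \to \bar u' \in U$, so $\tilde f_{\eta}(x_{k},u_{k}) = g_{\eta}(x_{k},u_{k},u'_{k}) \to g_{\eta}(x_{0},u_{0},\bar u') \ge \tilde f_{\eta}(x_{0},u_{0})$, and since every subsequence admits such a further subsequence, $\liminf_{k} \tilde f_{\eta}(x_{k},u_{k}) \ge \tilde f_{\eta}(x_{0},u_{0})$. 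Combining the two inequalities gives continuity. I expect this last step to be the only real obstacle: it relies on compactness of $U$ to extract convergent subsequences of minimizers together with the joint continuity of $g_{\eta}$, rather than on convexity.
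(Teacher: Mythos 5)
Your proof is correct, and it differs from the paper's in two of the three bullets. For monotonicity and $\tilde f_{\eta} \leq f$ you argue exactly as the paper does (compare objectives and take $u'=u$). For the pointwise convergence $\tilde f_{\eta} \to f$, the paper simply cites Proposition 4.1.5 of Hiriart-Urruty and Lemar\'echal, whereas you give a self-contained squeeze argument: boundedness of $f$ on the compact set forces the prox points $u'_{\eta}$ to satisfy $\lVert u - u'_{\eta}\rVert^{2} \leq 4M\eta \to 0$, and then $f(x,u'_{\eta}) \leq \tilde f_{\eta}(x,u) \leq f(x,u)$ closes the gap; this buys a proof that visibly uses only continuity and compactness (no convexity), at the cost of a few extra lines. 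For continuity of $\tilde f_{\eta}$, the paper proves a small standalone fact that the partial minimum $m(x_{1}) = \min_{x_{2}} h(x_{1},x_{2})$ of a jointly continuous function on a compact product is continuous, via uniform continuity of $h$ and a symmetry argument, and then substitutes the regularization objective for $h$; you instead run a Berge-maximum-theorem-style argument, getting upper semicontinuity from the fixed minimizer $u'_{0}$ and lower semicontinuity by extracting convergent subsequences of minimizers. The two routes are equally valid and rest on the same ingredients (joint continuity of the objective and compactness of $U$); the paper's uniform-continuity estimate is slightly more quantitative, while your sequential argument generalizes more readily to situations where uniform continuity is awkward to invoke.
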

\begin{proof}
    For the first statement,
    readers are referred to \cite[Proposition 4.1.5]{hiriart-urrutyConvexAnalysisMinimization1993}.

    For the second statement,
    given $(x, u) \in X \times U$,
    the following inequality holds for any $\eta' \geq \eta$,
    \begin{equation}
        \begin{split}
            \tilde{f}_{\eta}(x, u)
            = & \frac{1}{2 \eta}\lVert u - u^{*} \rVert^{2}
            + f(x, u^{*})
            \\
            \geq & \frac{1}{2 \eta'}\lVert u - u^{*} \rVert^{2}
            + f(x, u^{*})
            \geq \tilde{f}_{\eta'}(x, u),
        \end{split}
    \end{equation}
    where $u^{*} \in \text{prox}_{f}(x, u)$,
    for all $(x, u) \in X \times U$.
    Additionally, it is trivial from the definition of the modified Moreau-Yosida regularization to find that $\tilde{f}_{\eta} \leq f$ for all $\eta > 0$.
    Thus concludes the proof of the second statement.

    For the third statement,
    consider a continuous function $h: X_{1} \times X_{2} \to \mathbb{R}$
    over compact subspaces $X_{1} \subset \mathbb{R}^{n}$ and $X_{2} \subset \mathbb{R}^{m}$.
    Then, for any $x_{1} \in X_{1}$,
    we can define the minimum function $m(x_{1}) := \min_{x_{2} \in X_{2}} h(x_{1}, x_{2})$.
    Since $h$ is continuous, it is uniformly continuous on $X_{1} \times X_{2}$ compact.
    Hence, $\forall \epsilon > 0$, $\exists \delta > 0$ such that
    $h(x_{1}, x_{2}) > h(x_{1}', x_{2}') - \epsilon$ for all $(x_{1}, x_{2}), (x_{1}', x_{2}') \in X_{1} \times X_{2}$
    where $\lVert (x_{1}, x_{2}) - (x_{1}', x_{2}') \rVert < \delta$.
    For fixed $\overline{x}_{1}$,
    there exists $\overline{x}_{2} \in X_{2}$ such that
    $m(\overline{x}_{1}) = h(\overline{x}_{1}, \overline{x}_{2})$.
    Then,
    \begin{equation}
        m(\overline{x}_{1})
        = h(\overline{x}_{1}, \overline{x}_{2})
        > h(x_{1}, \overline{x}_{2}) - \epsilon
        \geq m(x_{1}) - \epsilon,
    \end{equation}
    for all $x \in X_{1}$
    such that $\lVert \overline{x}_{1} - x_{1} \rVert < \delta$.
    By symmetry,
    we can show that
    $\lvert m(x_{1}) - m(\overline{x}_{1}) \rvert < \epsilon$.
    Hence, $m$ is continuous on $X_{1}$.
    Substituting $(h, x_{1}, x_{2}, m) \leftarrow (f, (x, u), u', \tilde{f}_{\eta})$ concludes the proof of the third statement.
\end{proof}

Then,
the proof of
\autoref{thm:pma_is_a_parameterized_convex_universal_approximator}
can be shown as follows.
\begin{proof}[Proof of \autoref{thm:pma_is_a_parameterized_convex_universal_approximator}]
    First, suppose that $U$ has a nonempty interior
    and for some $L > 0$, $f(x, \cdot)$ is $L$-Lipschitz for all $x \in X$.
    These assumptions will be relaxed at the end of the proof.

    One can choose a sequence $\{ u_{i} \in \accentset{\circ}{U} \}_{i \in \mathbb{N}}$
    such that the set
    $\tilde{U} := \{ u_{i} \in \accentset{\circ}{U} \vert i \in \mathbb{N}\}$
    of all points of the sequence is dense in $U$.
    As mentioned in \cite{calafioreLogSumExpNeuralNetworks2020},
    an example is a set of all points
    whose elements are rational numbers in $\accentset{\circ}{U}$.

    By Tychonoff's theorem \cite[Theorem 37.3]{munkresTopology2014},
    $X \times U$ is a compact subspace of $\mathbb{R}^{n} \times \mathbb{R}^{m}$.
    By \cite[Theorem 4.19]{rudinPrinciplesMathematicalAnalysis1976},
    continuity of $f$ on $X \times U$ implies that
    $f$ is uniformly continuous on $X \times U$.
    Hence,
    given $u \in U$,
    $\forall \epsilon_{1} > 0$, $\exists \delta_{1} > 0$
    such that $\lvert f(x, u) - f(x', u) \rvert < \epsilon_{1}$,
    $\forall x, x' \in X$ where $\lVert x - x' \rVert < \delta_{1}$.

    By \autoref{thm:eps_selection_of_subdiff_mapping},
    for any $\epsilon > 0$,
    there exists an equi-Lipschitz sequence $\{ \hat{u}_{\epsilon, i}^{*} \}$ of functions with Lipschitz constant of $L_{\epsilon}$
    such that $\hat{u}_{\epsilon, i}^{*}$ is an $\epsilon$-selection
    of $\Gamma_{u_{i}}: X \to \mathbb{R}^{m}$
    where $\Gamma_{u_{i}}(x) := \partial f_{x}(u_{i})$,
    $\forall x \in X$, that is,
    $\text{Graph}(\hat{u}_{\epsilon, i}^{*}) \subset B(\text{Graph}(\Gamma_{u_{i}}), \epsilon)$.
    Then, given $x \in X$, $i \in \mathbb{N}$,
    $\forall \delta_{2} > 0$,
    $\exists x_{i}' \in X$, $u_{\epsilon, i}^{*'} \in \partial f_{x_{i}'}(u_{i}) = \Gamma_{u_{i}}(x_{i}')$
    such that $\lVert (x, \hat{u}_{\epsilon, i}^{*}(x)) - (x_{i}', u_{i}^{*'}) \rVert < \delta_{2}$.
    For any $\epsilon > 0$,
    consider $f_{\epsilon}(x, u)
    := \sup_{i \in \mathbb{N}} \left\{
        f(x, u_{i}) + \langle \hat{u}_{\epsilon, i}^{*}(x), u - u_{i} \rangle
    \right\}$.
    Then, $\forall \epsilon_{2} > 0$,
    setting $\delta_{2} := \min \{ \delta_{1}, \epsilon_{2} / \text{diam}(U) \}$ implies that
    \begin{equation}
        \begin{split}
            & \Big \lvert
                \left(
                    f(x, u_{i})
                    + \langle
                        \hat{u}_{\delta_{2}, i}^{*}, u - u_{i}
                    \rangle
                \right)
                \\
            &- \left(
                    f(x_{i}', u_{i})
                    + \langle
                    \hat{u}_{\delta_{2}, i}^{*'}, u - u_{i}
                    \rangle
                \right)
            \Big \rvert
            \\
            & \leq \lvert f(x, u_{i}) - f(x_{i}', u_{i}) \rvert
            \\
            &+ \lVert \hat{u}_{\delta_{2}, i}^{*}(x) - u_{\delta_{2}, i}^{*'} \rVert \text{diam}(U)
            < \epsilon_{1} + \epsilon_{2},
        \end{split}
    \end{equation}
    for all $i \in \mathbb{N}$.
    Hence, given $u_{i} \in \tilde{U}$,
    \begin{equation}
        \begin{split}
            & f(x, u_{i})
            + \langle \hat{u}_{\delta_{2}, i}^{*}(x), u - u_{i} \rangle
            \\
            & < f(x_{i}', u_{i})
            + \langle \hat{u}_{\delta_{2}, i}^{*'}, u - u_{i} \rangle
            + \epsilon_{1} + \epsilon_{2}
            \\
            & \leq f(x_{i}', u) + \epsilon_{1} + \epsilon_{2}
            < f(x, u) + 2\epsilon_{1} + \epsilon_{2},
        \end{split}
    \end{equation}
    for all $(x, u) \in X \times U$ and $i \in \mathbb{N}$.
    Note for any $u \in \tilde{U}$ that $f_{\delta_{2}}(x, u) = f(x, u)$, $\forall x \in X$.
    Then, $f(x, u) \leq f_{\delta_{2}}(x, u) < f(x, u) + 2 \epsilon_{1} + \epsilon_{2}$, $\forall (x, u) \in X \times \tilde{U}$,
    and it can be deduced that $f_{\delta_{2}}$ converges uniformly to $f$ on $X \times \tilde{U}$ as $\delta_{2} \to 0^{+}$.
    By \autoref{lemma:uniform_convergence_on_a_dense_set},
    $f_{\delta_{2}}$ converges uniformly to $f$
    on $\overline{X \times \tilde{U}}
    = \overline{X} \times \overline{\tilde{U}} = X \times U$
    as $\delta_{2} \to 0^{+}$.

    Next, we show that a sequence of functions,
    $\{ (x, u) \mapsto f(x, u_{i}) + \langle \hat{u}_{\epsilon, i}^{*},
        u - u_{i} \rangle \}$ is equicontinuous.
        From the uniform continuity of $f$ on $X \times U$,
        $\forall \epsilon_{3} > 0$, $\exists \delta_{3}' > 0$
        such that $\lvert f(x, u) - f(x', u') \rvert < \frac{\epsilon_{3}}{3}$, $\forall (x, u), (x', u') \in X \times U$ where $\lVert (x, u) - (x', u') \rVert < \delta_{3}'$.
        Letting $\delta_{3} := \min \{
            \delta_{3}',
            \frac{\epsilon_{3}}{3 L_{\epsilon} \text{diam}(U)},
            \frac{\epsilon_{3}}{3(L + \epsilon) \text{diam}(U)}
        \}$ implies
        \begin{equation}
            \begin{split}
                & {\big \lvert}
                    \left(
                        f(x, u_{i}) + \langle \hat{u}_{\epsilon, i}^{*} (x), u - u_{i} \rangle
                    \right)
                    \\
                & - \left(
                        f(x', u_{i}) + \langle \hat{u}_{\epsilon, i}^{*} (x'), u' - u_{i}\rangle
                    \right)
                    {\big \lvert}
                \\
                & \leq \lvert f(x, u_{i}) - f(x', u_{i}) \rvert
                \\
                &+ \lvert \langle \hat{u}_{\epsilon, i}^{*}(x) - \hat{u}_{\epsilon, i}^{*} (x'),
                u - u_{i}
                \rangle \rvert
                + \lvert \langle
                \hat{u}_{\epsilon, i}^{*}(x'), u' - u
                \rangle
                \rvert
                \\
                & \leq \lvert f(x, u_{i}) - f(x', u_{i}) \rvert
                \\
                &+ \left(
                    \lVert \hat{u}_{\epsilon, i}^{*}(x) - \hat{u}_{\epsilon, i}^{*}(x') \rVert
                    + \lVert \hat{u}_{\epsilon, i}^{*}(x') \rVert
                \right) \text{diam}(U)
                \\
                & < \frac{\epsilon_{3}}{3}
                + \left(
                    L_{\epsilon}\lVert x - x' \rVert + (L + \epsilon)
                \right) \text{diam}(U)
                \\
                & \leq \frac{\epsilon_{3}}{3}
                + \frac{\epsilon_{3}}{3} + \frac{\epsilon_{3}}{3}
                = \epsilon_{3},
            \end{split}
        \end{equation}
        for all $(x, u), (x', u') \in X \times U$ such that
        $\lVert (x, u) - (x', u') \rVert < \delta_{3}$.
        \firstrev{
            Note for all $i \in \mathbb{N}$ that $\lVert \hat{u}_{\epsilon, i}(x') \rVert$ is equal to or less than
            $L+\epsilon$ since $\hat{u}_{\epsilon, i}$ is an $\epsilon$-selection of $\Gamma_{u_{i}} \subset B(\{0\}, L)$
            from the Lipschitzness assumption.
        }
        By \autoref{lemma:equicontinuous_sequence},
        given $\epsilon > 0$, $\{ f_{\epsilon, I} \}_{I \in \mathbb{N}}$
        is an equicontinuous sequence of functions where
        \begin{equation}
            \begin{split}
                f_{\epsilon, I}(x, u)
                & := \sup_{1 \leq i \leq I} \{
                    f(x, u_{i}) +
                    \langle \hat{u}_{\epsilon, i}^{*},
                    u - u_{i}
                    \rangle
                \\
                & = \max_{1 \leq i \leq I} \{
                    f(x, u_{i}) +
                    \langle \hat{u}_{\epsilon, i}^{*},
                    u - u_{i}
                    \rangle
                \}.
                \end{split}
        \end{equation}
        $f_{\epsilon, I}$ obviously converges to $f_{\epsilon}$
        pointwise on $X \times U$ as $I \to \infty$,
        and $\{ f_{\epsilon, I} \}_{I \in \mathbb{N}}$
        is equicontinuous.
        Then, $f_{\epsilon, I}$ converges to $f_{\epsilon}$
        uniformly on $X \times U$ by the Arzela\`-Ascoli theorem \cite[Theorem 7.25]{rudinPrinciplesMathematicalAnalysis1976}.
        From the above statements,
        we can conclude the proof with assumptions of
        i) the Lipschitzness of $f(x, \cdot)$ for all $x \in X$
        and ii) the nonempty interior of $U$:
        $\forall \epsilon > 0$, $\exists \delta > 0$ such that
        $\lVert f - f_{\delta} \rVert_{\infty} < \frac{\epsilon}{4}$.
        Additionally, $\forall \delta > 0$, $\exists I \in \mathbb{N}$ such that
        $\lVert f_{\delta} - f_{\delta, I} \rVert_{\infty} < \frac{\epsilon}{4}$.
        Therefore, 
        $\forall \epsilon > 0$, $\exists \delta > 0$, $I \in \mathbb{N}$
        such that
        $
        \lVert f - f_{\delta, I} \rVert_{\infty}
        \leq \lVert f - f_{\delta} \rVert_{\infty}
        + \lVert f_{\delta} - f_{\delta, I} \rVert_{\infty}
        < \frac{\epsilon}{4} + \frac{\epsilon}{4} = \frac{\epsilon}{2}.
        $

        \firstrev{Now, let us} relax the Lipschitzness assumption by Moreau-Yosida regularization.
        Given the parameterized convex continuous function $f$,
        consider regularized functions $\tilde{f}_{\eta}$
        for any $\eta > 0$, defined in Eq. \eqref{eq:moreau_yosida_regularisation}.
        By \autoref{lemma:moreau_yosida_regularisation},
        $\{ \tilde{f}_{\eta} \}_{\eta > 0}$ is a collection of monotonically decreasing continuous functions (with respect to $\eta$) on a compact space $X \times U$,
        which converges to a continuous function $f$ pointwise.
        By Dini's theorem \cite[Theorem 7.13]{rudinPrinciplesMathematicalAnalysis1976},
        the collection $\{ \tilde{f}_{\eta} \}_{\eta > 0}$ converges to $f$ uniformly as well.
        That is, $\forall \epsilon > 0$, $\exists \eta > 0$ such that
        $\lVert f - \tilde{f}_{\eta} \rVert_{\infty} < \frac{\epsilon}{2}$.
        Moreover,
        from \cite[Example 3.4.4]{hiriart-urrutyConvexAnalysisMinimization1993},
        $\nabla_{u} \tilde{f}_{\eta}(x, u) = \frac{1}{\eta} (u - u^{*})$
        \firstrev{implies}
        $\lVert \nabla_{u} \tilde{f}_{\eta}(x, u) \rVert \leq \frac{1}{\eta} \text{diam}(U)$,
        where $u^{*} \in \text{prox}_{f}(x, u)$,
        for all $(x, u) \in X \times U$.
        Hence, $\tilde{f}(x, u)$ is $L_{\eta}$-Lipschitz for any $x \in X$ where $L_{\eta} := \frac{1}{\eta} \text{diam}(U)$.
        Combining the above statements achieves the relaxation of the Lipschitzness assumption,
        i.e.,
        $\lVert f - f_{\delta, I} \rVert_{\infty}
        \leq \lVert f - \tilde{f}_{\eta} \rVert_{\infty}
        + \lVert \tilde{f}_{\eta} - f_{\delta, I} \rVert_{\infty}
        < \frac{\epsilon}{2} + \frac{\epsilon}{2}
        = \epsilon$.
        Note that $(\tilde{f}_{\eta})_{\delta, I}$ is denoted by $f_{\delta, I}$ for notational brevity.

    Relaxation of the nonemptyness of the interior $\accentset{\circ}{U}$ of $U$ can be performed by expanding $U$, as described at the end of the proof of \cite[Theorem 2]{calafioreLogSumExpNeuralNetworks2020} and is omitted here for brevity.

        Rearranging $f_{\delta, I}$ as
        \begin{equation}
            f_{\delta, I}(x, u)
            = \max_{1 \leq i \leq I} \{
                \langle a_{i}(x), u \rangle + b_{i}(x)
            \},
        \end{equation}
        where $a_{i}: X \to \mathbb{R}^{m}$ and $b_{i}:X \to \mathbb{R}$ are continuous functions such that $a_{i}(x) = \hat{u}_{\epsilon, i}^{*}(x)$ and $b_{i}(x) = f(x, u_{i}) - \langle \hat{u}_{\epsilon, i}^{*}, u_{i} \rangle$, respectively,
        implies $f_{\delta, I} \in \mathcal{F}^{\text{PMA}}$, which concludes the proof.
\end{proof}

\end{appendices}

\bibliography{ref.bib}

\begin{thebibliography}{10}
\providecommand{\url}[1]{#1}
\csname url@samestyle\endcsname
\providecommand{\newblock}{\relax}
\providecommand{\bibinfo}[2]{#2}
\providecommand{\BIBentrySTDinterwordspacing}{\spaceskip=0pt\relax}
\providecommand{\BIBentryALTinterwordstretchfactor}{4}
\providecommand{\BIBentryALTinterwordspacing}{\spaceskip=\fontdimen2\font plus
\BIBentryALTinterwordstretchfactor\fontdimen3\font minus
  \fontdimen4\font\relax}
\providecommand{\BIBforeignlanguage}[2]{{%
\expandafter\ifx\csname l@#1\endcsname\relax
\typeout{** WARNING: IEEEtran.bst: No hyphenation pattern has been}%
\typeout{** loaded for the language `#1'. Using the pattern for}%
\typeout{** the default language instead.}%
\else
\language=\csname l@#1\endcsname
\fi
#2}}
\providecommand{\BIBdecl}{\relax}
\BIBdecl

\bibitem{liberzonCalculusVariationsOptimal2012}
D.~Liberzon, \emph{\BIBforeignlanguage{en}{Calculus of {Variations} and
  {Optimal} {Control} {Theory}: {A} {Concise} {Introduction}}}.\hskip 1em plus
  0.5em minus 0.4em\relax Princeton, NJ: Princeton University Press, 2012,
  oCLC: ocn772627669.

\bibitem{suttonReinforcementLearningIntroduction2018}
R.~S. Sutton and A.~G. Barto, \emph{\BIBforeignlanguage{en}{Reinforcement
  {Learning}: {An} {Introduction}}}.\hskip 1em plus 0.5em minus 0.4em\relax
  Cambridge, MA: MIT press, 2018.

\bibitem{lecunTutorialEnergyBasedLearning2006}
Y.~LeCun, S.~Chopra, R.~Hadsell, M.~Ranzato, and F.~J. Huang,
  ``\BIBforeignlanguage{en}{A {tutorial} on {energy}-{based} {learning}},'' in
  \emph{\BIBforeignlanguage{en}{Predicting {Structured} {Data}}}.\hskip 1em
  plus 0.5em minus 0.4em\relax Cambridge, MA: MIT press, 2006, p.~59.

\bibitem{boydConvexOptimization2004}
S.~P. Boyd and L.~Vandenberghe, \emph{\BIBforeignlanguage{en}{Convex
  {Optimization}}}.\hskip 1em plus 0.5em minus 0.4em\relax Cambridge, UK:
  Cambridge University Press, 2004.

\bibitem{liuSurveyConvexOptimization2017}
X.~Liu, P.~Lu, and B.~Pan, ``\BIBforeignlanguage{en}{Survey of {convex}
  {optimization} for {aerospace} {applications}},''
  \emph{\BIBforeignlanguage{en}{Astrodynamics}}, vol.~1, no.~1, pp. 23--40,
  2017.

\bibitem{chunhuashenDualFormulationBoosting2010}
{Chunhua Shen} and {Hanxi Li}, ``\BIBforeignlanguage{en}{On the {dual}
  {formulation} of {boosting} {algorithms}},''
  \emph{\BIBforeignlanguage{en}{IEEE Transactions on Pattern Analysis and
  Machine Intelligence}}, vol.~32, no.~12, pp. 2216--2231, Dec. 2010.

\bibitem{calafioreLogSumExpNeuralNetworks2020}
G.~C. Calafiore, S.~Gaubert, and C.~Possieri, ``Log-{sum}-{exp} {neural}
  {networks} and {posynomial} {models} for {convex} and {log}-{log}-{convex}
  {data},'' \emph{IEEE Transactions on Neural Networks and Learning Systems},
  vol.~31, no.~3, pp. 827--838, 2020.

\bibitem{calafioreEfficientModelFreeQFactor2020}
G.~C. Calafiore and C.~Possieri, ``\BIBforeignlanguage{en}{Efficient
  {model}-{free} {Q}-{factor} {approximation} in {value} {space} via
  {log}-{sum}-{exp} {neural} {networks}},'' in
  \emph{\BIBforeignlanguage{en}{2020 {European} {Control} {Conference}
  ({ECC})}}, Saint Petersburg, Russia, May 2020, pp. 23--28.

\bibitem{amosInputConvexNeural2017a}
B.~Amos, L.~Xu, and J.~Z. Kolter, ``\BIBforeignlanguage{en}{Input {convex}
  {neural} {networks}},'' in \emph{\BIBforeignlanguage{en}{Proceedings of the
  34th {International} {Conference} on {Machine} {Learning}}}, Sydney,
  Australia, Jul. 2017, pp. 146--155.

\bibitem{chenNeuralOrdinaryDifferential2018}
T.~Q. Chen, Y.~Rubanova, J.~Bettencourt, and D.~K. Duvenaud,
  ``\BIBforeignlanguage{en}{Neural {ordinary} {differential} {equations}},'' in
  \emph{\BIBforeignlanguage{en}{Neural {Information} {Processing} {Systems}}},
  Montréal, Canada, Dec. 2018, p.~13.

\bibitem{goodfellowGenerativeAdversarialNetworks2014}
I.~J. Goodfellow, J.~Pouget-Abadie, M.~Mirza, B.~Xu, D.~Warde-Farley, S.~Ozair,
  A.~Courville, and Y.~Bengio, ``\BIBforeignlanguage{en}{Generative
  {adversarial} {networks}},'' \emph{\BIBforeignlanguage{en}{arXiv:1406.2661}},
  Jun. 2014.

\bibitem{vaswaniAttentionAllYou2017}
A.~Vaswani, N.~Shazeer, N.~Parmar, J.~Uszkoreit, L.~Jones, A.~N. Gomez,
  {\L}.~Kaiser, and I.~Polosukhin, ``\BIBforeignlanguage{en}{Attention is {all}
  you {need}},'' in \emph{\BIBforeignlanguage{en}{Neural {Information}
  {Processing} {Systems}}}, Long Beach, CA, USA, Dec. 2017, p.~11.

\bibitem{cybenkoApproximationSuperpositionsSigmoidal1989}
G.~Cybenko, ``\BIBforeignlanguage{en}{Approximation by {superpositions} of a
  {sigmoidal} {function}},'' \emph{\BIBforeignlanguage{en}{Mathematics of
  control, signals and systems}}, vol.~2, no.~4, pp. 303--314, 1989.

\bibitem{pinkusApproximationTheoryMLP1999a}
A.~Pinkus, ``\BIBforeignlanguage{en}{Approximation {theory} of the {MLP}
  {model} in {neural} {networks}},'' \emph{\BIBforeignlanguage{en}{Acta
  Numerica}}, vol.~8, pp. 143--195, Jan. 1999.

\bibitem{hornikMultilayerFeedforwardNetworks1989}
K.~Hornik, M.~Stinchcombe, and H.~White, ``\BIBforeignlanguage{en}{Multilayer
  feedforward networks are universal approximators},''
  \emph{\BIBforeignlanguage{en}{Neural Networks}}, vol.~2, no.~5, pp. 359--366,
  Jan. 1989.

\bibitem{kolmogorovRepresentationContinuousFunctions1957}
``\BIBforeignlanguage{ru}{On the representation of continuous functions of many
  variables by superposition of continuous functions of one variable and
  addition},'' \emph{\BIBforeignlanguage{ru}{Doklady Akademii Nauk}}, vol. 114,
  no.~5, pp. 953--956, 1957.

\bibitem{kidgerUniversalApproximationDeep2020}
P.~Kidger and T.~Lyons, ``\BIBforeignlanguage{en}{Universal {approximation}
  with {deep} {narrow} {networks}},'' in
  \emph{\BIBforeignlanguage{en}{Proceedings of {Machine} {Learning}
  {Research}}}, vol. 125, 2020, pp. 2306--2327.

\bibitem{devoreConstructiveApproximation1993}
R.~A. DeVore and G.~G. Lorentz, \emph{Constructive {Approximation}}.\hskip 1em
  plus 0.5em minus 0.4em\relax Berlin/Heidelberg, Germany: Springer Science \&
  Business Media, 1993, vol. 303.

\bibitem{ghoshMaxAffineRegressionParameter2022}
A.~Ghosh, A.~Pananjady, A.~Guntuboyina, and K.~Ramchandran,
  ``\BIBforeignlanguage{en}{Max-{affine} {regression}: {parameter} {estimation}
  for {gaussian} {designs}},'' \emph{\BIBforeignlanguage{en}{IEEE Transactions
  on Information Theory}}, vol.~68, no.~3, pp. 1851--1885, Mar. 2022.

\bibitem{calafioreUniversalApproximationResult2020}
G.~C. Calafiore, S.~Gaubert, and C.~Possieri, ``A {Universal} {Approximation}
  {Result} for {Difference} of {Log}-{Sum}-{Exp} {Neural} {Networks},''
  \emph{IEEE Transactions on Neural Networks and Learning Systems}, vol.~31,
  no.~12, pp. 5603--5612, Dec. 2020.

\bibitem{kingmaAdamMethodStochastic2017}
D.~P. Kingma and J.~Ba, ``Adam: {A} {method} for {stochastic} {optimization},''
  \emph{arXiv:1412.6980 [cs]}, Jan. 2017, arXiv: 1412.6980.

\bibitem{glorotUnderstandingDifficultyTraining2010}
X.~Glorot and Y.~Bengio, ``\BIBforeignlanguage{en}{Understanding the
  {difficulty} of {training} {deep} {feedforward} {neural} {networks}},'' in
  \emph{\BIBforeignlanguage{en}{Proceedings of the thirteenth international
  conference on artificial intelligence and statistics}}, Sardinia, Italy, May
  2010, p.~8.

\bibitem{bezansonJuliaFreshApproach2017}
J.~Bezanson, A.~Edelman, S.~Karpinski, and V.~B. Shah, ``Julia: {A} {fresh}
  {approach} to {numerical} {computing},'' \emph{SIAM Review}, vol.~59, no.~1,
  pp. 65--98, Jan. 2017, publisher: Society for Industrial and Applied
  Mathematics.

\bibitem{brockmanOpenAIGym2016}
G.~Brockman, V.~Cheung, L.~Pettersson, J.~Schneider, J.~Schulman, J.~Tang, and
  W.~Zaremba, ``\BIBforeignlanguage{en}{{OpenAI} {Gym}},''
  \emph{\BIBforeignlanguage{en}{arXiv:1606.01540}}, Jun. 2016.

\bibitem{scs}
O.~Brendan, C.~Eric, P.~Neal, and B.~Stephen, ``Operator splitting for conic
  optimization via homogeneous self-dual embedding,'' \emph{Journal of
  Optimization Theory and Applications}, vol. 169, no.~3, pp. 1042--1068, 2016.

\bibitem{convexjl}
M.~Udell, K.~Mohan, D.~Zeng, J.~Hong, S.~Diamond, and S.~Boyd, ``Convex
  optimization in {J}ulia,'' New Orleans, LA, Nov. 2014.

\bibitem{kmogensenOptimMathematicalOptimization2018}
P.~K~Mogensen and A.~N~Riseth, ``\BIBforeignlanguage{en}{Optim: {A}
  {mathematical} {optimization} {package} for {Julia}},''
  \emph{\BIBforeignlanguage{en}{Journal of Open Source Software}}, vol.~3,
  no.~24, p. 615, 2018.

\bibitem{rockafellarConvexAnalysis1970}
R.~T. Rockafellar, \emph{\BIBforeignlanguage{en}{Convex {Analysis}}},
  2nd~ed.\hskip 1em plus 0.5em minus 0.4em\relax Princeton, NJ: Princeton
  University Press, 1970.

\bibitem{aubinSetValuedAnalysis2009}
J.-P. Aubin and H.~Frankowska, \emph{\BIBforeignlanguage{en}{Set-{Valued}
  {Analysis}}}.\hskip 1em plus 0.5em minus 0.4em\relax Boston, MA: Birkhäuser
  Boston, 2009.

\bibitem{aubinDifferentialInclusionsSetValued1984}
J.-P. Aubin and A.~Cellina, \emph{\BIBforeignlanguage{en}{Differential
  {Inclusions}: {Set}-{Valued} {Maps} and {Viability} {Theory}}}, ser.
  Grundlehren der mathematischen {Wissenschaften}.\hskip 1em plus 0.5em minus
  0.4em\relax Berlin/Heidelberg, Germany: Springer Berlin Heidelberg, 1984,
  vol. 264.

\bibitem{strombergRegularizationBanachSpaces1996}
T.~Strömberg, ``\BIBforeignlanguage{en}{On {regularization} in {Banach}
  {spaces}},'' \emph{\BIBforeignlanguage{en}{Arkiv för Matematik}}, vol.~34,
  no.~2, pp. 383--406, 1996.

\bibitem{hiriart-urrutyConvexAnalysisMinimization1993}
J.-B. Hiriart-Urruty and C.~Lemaréchal, \emph{\BIBforeignlanguage{en}{Convex
  {Analysis} and {Minimization} {Algorithms} {II}: {Advanced} {Theory} and
  {Bundle} {Methods}}}, ser. Grundlehren der mathematischen
  {Wissenschaften}.\hskip 1em plus 0.5em minus 0.4em\relax Berlin/Heidelberg,
  Germany: Springer Berlin Heidelberg, 1993, vol. 306.

\bibitem{munkresTopology2014}
J.~R. Munkres, \emph{\BIBforeignlanguage{en}{Topology}}, 2nd~ed.\hskip 1em plus
  0.5em minus 0.4em\relax Harlow, Essex, UK: Pearson, 2014.

\bibitem{rudinPrinciplesMathematicalAnalysis1976}
W.~Rudin, \emph{\BIBforeignlanguage{en}{Principles of {Mathematical}
  {Analysis}}}, 3rd~ed., ser. International series in pure and applied
  mathematics.\hskip 1em plus 0.5em minus 0.4em\relax New York, NY:
  McGraw-Hill, 1976.

\end{thebibliography}
\bibliographystyle{IEEEtran}













\begin{IEEEbiography}[{\includegraphics[width=1in,height=1.25in,clip,keepaspectratio]{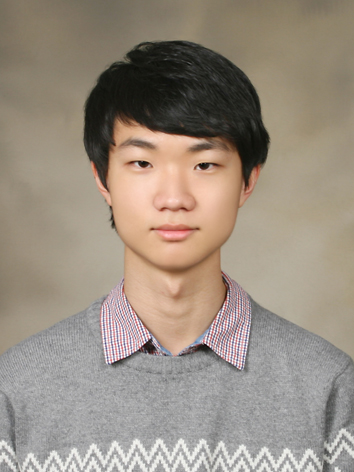}}]{Jinrae Kim}
received a B.S. degree in mechanical and aerospace engineering from Seoul National University, Republic of Korea, in 2017, where he is currently pursuing a Ph.D. degree in aerospace engineering. His current research interests include approximation, optimisation, and data-driven control.
\end{IEEEbiography}

\begin{IEEEbiography}[{\includegraphics[width=1in,height=1.25in,clip,keepaspectratio]{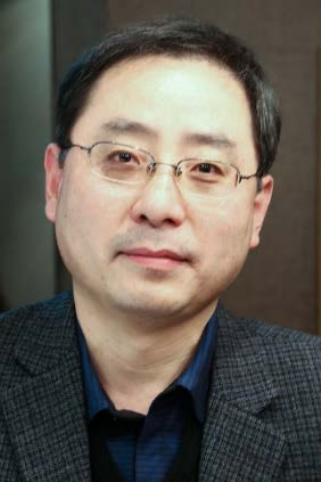}}]{Youdan Kim}
received B.S. and M.S. degrees in aeronautical engineering from Seoul National University, Republic of Korea, in 1983 and 1985, respectively, and the Ph.D. degree in aerospace engineering from Texas A\&M University in 1990. He joined the faculty of Seoul National University in 1992, where he is currently a Professor with the Department of Aerospace Engineering. His current research interests include aircraft control system design, reconfigurable control system design, path planning, and guidance techniques for aerospace systems.
\end{IEEEbiography}

\end{document}